\newtheorem{theorem}{Theorem}
\newtheorem{definition}[theorem]{Definition}
\newtheorem{remark}[theorem]{Remark}
\newtheorem{lemma}[theorem]{Lemma}
\DeclareMathOperator*{\argmax}{arg\,max}
\newenvironment{proofsketch}{%
  \noindent\textit{Proof sketch.}\ }%
  {\hfill$\square$\par}
\newcommand\yk[1]{\textcolor{blue}{[Yuqing: #1]}}
\newcommand\pr[1]{\textcolor{green}{[Paul: #1]}}
\newcommand\gs[1]{\textcolor{blue}{[Grant: #1]}}
\newcommand\La[0]{\mathcal{L}}
\newcommand\St[0]{\mathcal{S}}
\newcommand\GT[0]{\mathcal{G}}
\newcommand\X[0]{\mathcal{X}}
\newcommand*{\eg}{{\em e.g.}}
\newcommand*{\ie}{{\em i.e.}}
\newcommand{\E}{\mathbb{E}}
\renewcommand{\vec}[1]{\mathbf{#1}}
\newcommand{\ith}{i^\textrm{th}}
\newcommand{\jth}{j^\textrm{th}}
\newcommand{\itemFeatures}{x_i}
\newcommand{\itemFeaturesRV}{X_i}
\newcommand{\itemFeaturesVec}{\mathbf{x}}
\newcommand{\itemFeaturesVecRV}{\mathbf{X}}
\newcommand{\itemResponseState}{s_i}
\newcommand{\itemResponseStateRV}{S_i}
\newcommand{\itemResponseStateVec}{\mathbf{s}}
\newcommand{\itemResponseStateVecRV}{\mathbf{S}}
\newcommand{\itemGT}{g_i}
\newcommand{\itemGTRV}{G_i}
\newcommand{\itemGTVec}{\mathbf{g}}
\newcommand{\itemGTVecRV}{\mathbf{G}}
\newcommand{\itemClassifierOutput}{c\mleft(\itemFeatures\mright)}
\newcommand{\itemClassifierOutputVec}{c\mleft(\itemFeaturesVec\mright)}
\newcommand{\itemClassifierOutputVecRV}{c\mleft(\itemFeaturesVecRV\mright)}
\newcommand{\itemGivenClassifierOutputVecRV}[1]{#1(\itemFeaturesVecRV)}
\newcommand{\evaluationsVec}{\mathbf{e}}
\newcommand{\evaluationsVecRV}{\mathbf{E}}
\newcommand{\evaluationRating}{e_{ij}}
\newcommand{\evaluationRatingRV}{E_{ij}}
\newcommand{\evaluationRatingVec}{\mathbf{e}_{[:n]j}}
\newcommand{\evaluationRatingVecRV}{\mathbf{E}_{[:n]j}}
\newcommand{\evaluationRatingPanel}{\mathbf{e}_{i[:k_e]}}
\newcommand{\evaluationRatingPanelRV}{\mathbf{E}_{i[:k_e]}}
\newcommand{\evaluationRatingPanelVec}{\mathbf{e}_{[:n][:k_e]}}
\newcommand{\evaluationRatingPanelVecRV}{\mathbf{E}_{[:n][:k_e]}}
\newcommand{\evaluationRatingSingle}{e_{i1}}
\newcommand{\evaluationRatingSingleRV}{E_{i1}}
\newcommand{\evaluationRatingSingleVecRV}{\mathbf{E}_{[:n]1}}
\newcommand{\evaluationPanelLabel}{\textrm{maj}(\evaluationRatingPanel)}
\newcommand{\evaluationPanelLabelRV}{\textrm{maj}(\evaluationRatingPanelRV)}
\newcommand{\evaluationPanelLabelVec}{\textrm{maj}(\evaluationRatingPanelVec)}
\newcommand{\evaluationPanelLabelVecRV}{\textrm{maj}(\evaluationRatingPanelVecRV)}
\newcommand{\genericLabel}{y_{ij}}
\newcommand{\scoringLabel}{e^{*}_{i}}
\newcommand{\scoringLabelRV}{E^{*}_{i}}
\newcommand{\scoringLabelVec}{\mathbf{e}^{*}}
\newcommand{\benchmarksVec}{\mathbf{b}}
\newcommand{\benchmarksVecRV}{\mathbf{B}}
\newcommand{\benchmarkRating}{b_{ij}}
\newcommand{\benchmarkRatingRV}{B_{ij}}
\newcommand{\benchmarkRatingVec}{\mathbf{b}_{[:n]j}}
\newcommand{\benchmarkRatingVecRV}{\mathbf{B}_{[:n]j}}
\newcommand{\benchmarkRatingPanel}{\mathbf{b}_{i[:k_b]}}
\newcommand{\benchmarkRatingPanelRV}{\mathbf{B}_{i[:k_b]}}
\newcommand{\benchmarkRatingPanelVecRV}{\mathbf{B}_{[:n][:k_b]}}
\newcommand{\benchmarkRatingPanelInfRV}{\mathbf{B}_{[:][:k_b]}}
\newcommand{\comb}{\textsc{comb}}
\newcommand{\benchmarkPanelLabelVecRV}{\comb(\benchmarkRatingPanelVecRV)}
\newcommand{\benchmarkPanelLabelInfRV}{\comb(\benchmarkRatingPanelInfRV)}
\newcommand{\combCaliSub}{\textsc{comb}}
\newcommand{\benchmarkPanelLabelCalibratedS}{\combCaliSub(\benchmarkRatingPanel)}
\newcommand{\powerscore}{p}
\newcommand{\powercurve}{pc}
\newcommand{\classifierescore}{\textsc{CES}}
\newcommand{\fratereqG}{\textsc{REQ}_{\comb}^{GT}}
\newcommand{\fratereqS}{\textsc{REQ}_{\comb}}
\newcommand{\fratereqcaliS}{\textsc{REQ}_{\comb'}}
\newcommand{\utilityFunction}{\textsc{Utility}}
\newcommand{\scoreFunction}{\textsc{Score}}
\newcommand{\empowerscore}{\hat{p}}
\newcommand{\empowercurve}{\hat{pc}}
\newcommand{\emfratereq}{\tilde{\textsc{REQ}}_{\comb}}
\newcommand{\allVec}{\mathbf{w}}
\newcommand{\allVecRV}{\mathbf{W}}
\newcommand{\allpartitions}{\textsc{AllPartitions}(\allVec, k_b, k_e)}
\newcommand{\sampledpartitions}{\textsc{Partitions}(\allVec, k_b, k_e)}
\newcommand{\emscore}{\hat{\scoreFunction}(c, k_e)}
\newcommand{\abc}{\textsc{ABC}}
\newcommand{\numraters}{k_w}
\begin{document}

\title{Principled Evaluation with Human Labels: One Rater at a Time and Rater Equivalence}

\author{\name Paul Resnick \email presnick@umich.edu \\
       \addr School of Information, University of Michigan
       \AND
       \name Yuqing Kong \email yuqing.kong@pku.edu.cn \\
       \addr Center on Frontiers of Computing Studies, Peking University
       \AND
       \name Grant Schoenebeck \email schoeneb@umich.edu \\
       \addr School of Information, University of Michigan
       \AND
       \name Tim Weninger \email tweninger@nd.edu \\
       \addr Department of Computer Science and Engineering,        University of Notre Dame
       }

\maketitle

\begin{abstract}
In many classification tasks, there is no definitive ground truth, only human judgments that may disagree. 
We address two challenges that arise in such settings: (1) how to use human raters to score classifiers, and (2) how to use them for comparison benchmarks.
For the first, the common practice is to score classifiers against the majority vote of an evaluation panel of several human raters.
We argue that this is not justified when either of two properties fails: objectivity or equanimity.
Instead, under a utility model appropriate for such settings, scoring against \emph{one rater at a time} and averaging the scores across raters is a more principled approach.
For the second, we introduce the concept of \emph{rater equivalence}: the smallest number of human raters whose combined judgment matches the classifier's performance. We provide a provably optimal algorithm for combining benchmark panel labels, and demonstrate the framework through case studies.
\end{abstract}
\begin{keywords}
human-AI collaboration, benchmark, evaluating machine learners
\end{keywords}

% Text of your paper here

\section{Introduction}

A news site or social media platform is considering using a large language model (LLM) to identify comments that violate their policies. Before deploying the LLM in their moderation and ranking systems, the company needs to evaluate its performance. To do this, they recruit a panel of human raters who are trained on the company's definitions and policies regarding inappropriate comments. The raters independently assess a dataset of comments. The LLM's performance is then evaluated based on how often its output matches the majority vote of the human raters. The evaluation reveals that the LLM achieves an accuracy of only 80\% on this dataset. The natural conclusion is that it is not ready for deployment.

However, the data scientists on the team question this conclusion. They note that the human raters did not always agree with each other, so it may not make sense to treat the majority vote as a proxy for ground truth correct labels. 
Moreover, not all misclassifications are equally bad: failing to flag a comment that 90\% of raters would agree violates the policy seems worse than missing one where only 60\% would agree. The 80\% accuracy number obscures both of these issues.

This is just one of a large class that we will call \emph{human judgment settings}, where classifiers have to be evaluated against human labels, even though not all humans provide the same label for each item. Consider some other common classification tasks:
\begin{itemize}
\item Is there a person in this image?
\item Does this set of radiology images show a potential malignancy that warrants further investigation?
\item What grade should be assigned to this student's homework submission?
\end{itemize}

The practice of evaluating classifiers using human labels is well established, but it is not yet on a firm theoretical footing, which we try to remedy in this paper. We consider two aspects: how to use human labels for evaluation, and how to use them for comparison benchmarks. There is also a growing literature on using noisy human labels for training~\citep{frenay2013classification}, but that is not our focus.

% Figure~\ref{fig:landscape} situates the use of human labels for evaluation in the research landscape and summarizes our key results. The rows distinguish whether labels are used for training or evaluation. The columns distinguish whether ground truth labels are available.  When ground truth labels are available, training is conventional supervised learning and evaluation is straightforward: score the classifier against the ground truth. When ground truth labels are not available, training falls under the growing literature on noisy label learning \citep{frenay2013classification}. Our focus in this paper is on the remaining cell: \emph{evaluation without ground truth labels}.

The key question that we address about using human labels for evaluation is whether they should be combined into a single pseudo-ground-truth label for each item, or whether the classifier should be scored against each rater's labels separately. There is a widely shared intuition that combining labels will reduce any idiosyncratic noise from individual raters. Thus, it has become a common practice to use the combined labels of a panel as gold-standard labels (e.g., ten raters for evaluating the Perspective API \citep{Wulczyn2017machina}, five for the MNLI/SNLI tasks used in the GLUE benchmark \citep{williams2018broad}).

Contrary to that intuition, we find that it is usually preferable \emph{not} to combine them by taking a majority vote. Instead, the classifier should be scored against the labels from each individual rater separately. Then, the scores should be averaged to yield an estimate of the expected utility of the classifier. In short, average the scores, don't score against the average label.

\begin{figure}[t]
\centering
\includegraphics[width=.8\textwidth]{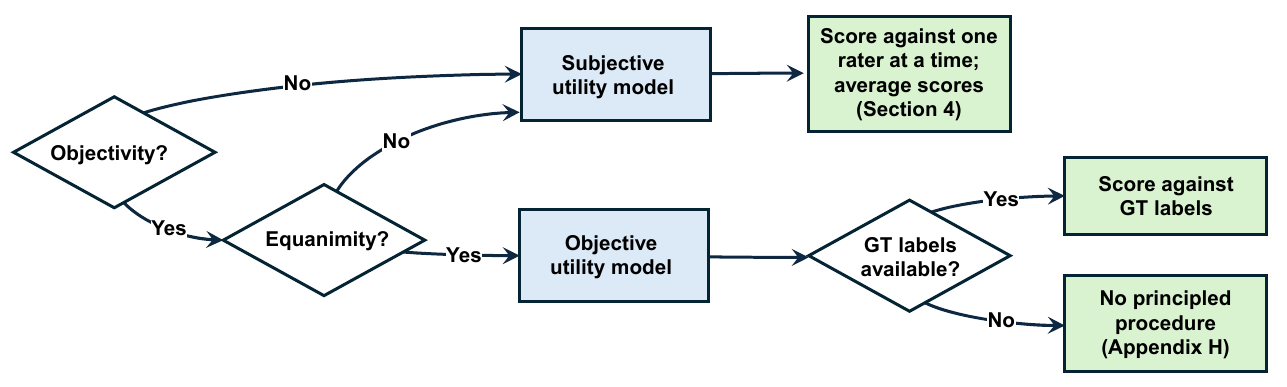}
\caption{Decision tree for choosing an evaluation approach. Two properties of the evaluation setting---\emph{objectivity} (whether an objective ground truth exists) and \emph{equanimity} (whether all misclassifications are equally problematic regardless of how obvious the item is)---determine which utility model applies. In the subjective utility model, the best practice is to score against each rater's labels separately and average the scores. In the objective utility model with no ground truth labels, there is no theoretically justified best practice for evaluation against human labels.}
\label{fig:decision-tree}
\end{figure}

We arrive at this advice by considering two properties of evaluation settings. First is \emph{objectivity}: is there an objective ground truth for each item? If we think there is an objective ground truth, we think of any disagreement among raters as noise. Alternatively, with a subjective ground truth, we think of disagreement as reflecting valid differences in perspectives and judgments.

For some tasks, the answer is clear. In radiology, a tumor is either present or not. In rating jokes, humor is in the eye of the beholder. For other tasks, a reasonable analyst could conceptualize it either way. For example, many content moderation platforms treat the task as having objective answers defined by their detailed policies. When raters disagree about whether a particular post is "dehumanizing", it's an indicator that the post is difficult to judge but has no effect on what the correct determination is with respect to the policy. An alternative approach conceives of policies as providing high-level guidance but does not expect them to yield a binary outcome for every item. An item is somewhat dehumanizing if 60\% of raters judge it to be so, and very dehumanizing if 90\% of raters do.

Second is \emph{equanimity}: Is the harm of a misclassification the same regardless of how obvious or borderline the item is? Or is it worse for a classifier to get wrong an item that most people would agree on? From the patient's perspective, a missed tumor causes the same harm whether it was easy or hard for radiologists to detect---equanimity holds. But from the hospital's perspective, missing a tumor that 90\% of radiologists would have caught may carry greater legal or reputational liability than missing one that only 60\% would detect---equanimity does not hold. Similarly, even if a content moderation platform is convinced that its policies define an objective ground truth for every post, it may still matter more to correctly classify the clear-cut cases.

When there is no objective ground truth, we apply what we call the \emph{subjective utility model}, in which utility is defined as the expected score against a randomly selected rater for each item. When there is an objective ground truth but equanimity does not hold, the subjective utility model is also a pragmatic choice---other utility functions could model 
that it is more problematic to misclassify items with high rater agreement,
but our subjective utility model has the benefit that it is directly measurable from individual rater labels. This is shown in Figure~\ref{fig:decision-tree}, where the subjective utility model applies in both cases.

Under the subjective utility model, we prove in Section~\ref{sec:subjective-gt} that scoring against a single rater yields an unbiased estimate of the expected utility. To reduce variance, score the classifier against each available rater's labels separately and average the scores.

When there is an objective ground truth and equanimity holds---the harm of a misclassification depends only on matching it---we call this the \emph{objective utility model}. If ground truth labels are available, the classifier should be evaluated against them directly---that is what defines utility. Ground truth labels may be expensive or delayed, but they can often be obtained for at least a validation subset. In radiology, the ground truth is eventually confirmed by biopsy, surgery, or long-term follow-up. In forecasting, predictions can be evaluated against realized outcomes. In fraud detection, investigations confirm or disconfirm the initial classification. In all such cases, the best practice is to evaluate against those ground truth labels rather than using human labels as a proxy.

What about settings where the objective utility model applies but no ground truth evaluation labels are available? Unfortunately, in those settings there is no theoretically justified best practice. We prove negative results in Appendix~\ref{appdx:objective-utility}. There is no finite size of evaluation panel that will always yield the same classifier score as scoring against the ground truth. Moreover, it is possible for evaluation against a single rater to match the ground truth ordering of two classifiers, while evaluation against a three-person panel reverses the ordering.

Fortunately, such evaluation settings are uncommon in practice. The problematic case requires that all three conditions hold simultaneously: no ground truth labels, objectivity, and equanimity. 
Most settings fail at least one: when ground truth is unobservable it is usually because the concept is inherently subjective, and even when it is objective, obvious errors rarely cause the same harm as borderline ones.

So far, we have discussed how to score a classifier against human labels. But a score in isolation may not be enough for a deployment decision---a manager also needs to know how the classifier compares to the alternative, which is often a human process. This leads to the second use of human labels: as comparison benchmarks. Figure~\ref{fig:framework} illustrates both uses together.
The middle row summarizes the evaluation procedure just described: individual raters provide labels, which are used to score the classifier.
For generality, it depicts the common practice of combining an \textbf{evaluation panel}'s labels using majority vote or the mean for numeric labels, with single rater panels as a special case. 

\begin{figure}[t]
\centering
\includegraphics[width=.75\textwidth]{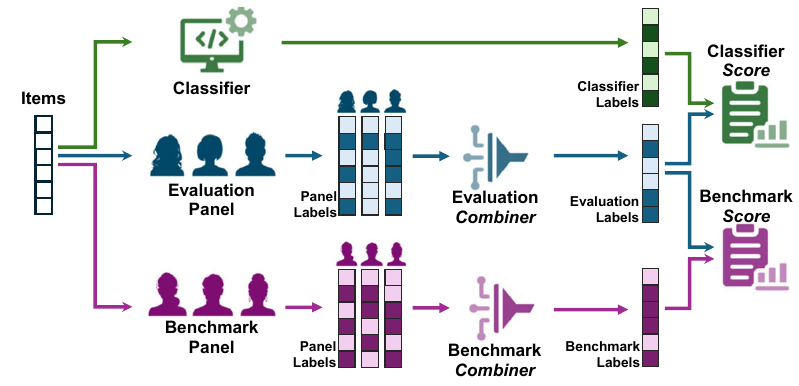}
\caption{Using human labels both for evaluation and for comparison benchmarks. Evaluation labels are used to score the classifier (row 2); benchmark labels are combined into a human panel classifier to compare against (row 3).}
\label{fig:framework}
\end{figure}

The bottom row of Figure~\ref{fig:framework} illustrates how benchmark panels work. Raters are assembled into a \textbf{benchmark panel}. Each rater generates an independent label for each item. A \textit{benchmark combiner} then processes the labels from the benchmark panel to produce a single benchmark label for each item. The classifier is scored against these benchmark labels using the same scoring function as for the evaluation panel. The resulting score can be compared to the score of the classifier against the same evaluation panel to determine which is better.

A benchmark panel can be drawn from the same population as the evaluation raters or from a different group. For example, misinformation researchers compared the performance of benchmark panels consisting of either Mechanical Turkers or expert journalists and fact-checkers, evaluating both against expert panels \citep{doi:10.1126/sciadv.abf4393, doi:10.1177/26339137231173407}. In a course grading setting, the benchmark panel might consist of peer graders and the evaluation panel consist of a single professor.

In Section~\ref{sec:power-curve}, we provide an approach that simulates benchmark panels of various sizes, by drawing samples from the available labels for each item. This method allows us to compare the classifier to a range of human benchmark classifiers of increasing quality as the benchmark panel size grows. By doing so, we can plot a \emph{power curve}, with the benchmark panel size on the x-axis and the expected score of a classifier derived from a benchmark panel of that size on the y-axis, as shown in Figure~\ref{fig:ce_power_curve}. We define the \emph{rater equivalence} of a classifier as the smallest human benchmark panel that has a higher expected score than the classifier.\footnote{In practice, there may be additional fairness criteria that would also come into play in deciding whether a classifier is better than an alternative human process. For example, even if a classifier gets a better expected score overall than three-person human benchmark panels, if it scores worse for some subgroups of items, such as assignments from male vs. female students, that may be problematic. If additional fairness criteria can be quantified, then a power curve could also be plotted for other metrics besides overall expected score.}
Thus, a higher rater equivalence indicates a better-performing classifier.

\begin{figure}[t]
\centering
\begin{tikzpicture}
\sffamily
\begin{axis}[
legend style={font=\small,
	nodes={scale=0.7, transform shape},
	at={(0.0,1)},
	anchor=north west,
	draw=none,
	fill=none},
legend cell align={left},
width = 4.0in, height = 2.0in,
ylabel near ticks,
ylabel = {\small score},
xlabel near ticks,
every tick label/.append style={font=\scriptsize},
xmin=-1,xmax=10,ymin=0,ymax=0.4,
xtick={0, 1.959349891708352, 4, 6, 8},
xlabel={\small Number of benchmark raters},
xlabel style = {yshift=0.05in},
yticklabel style={
		/pgf/number format/fixed,
		/pgf/number format/precision=5
},
scaled y ticks=false
]

\addplot[solid, mark=o, mark options={scale=.8}, black]
table{
0	0.0
1	0.08024457130058849
2	0.13522196667357766
3	0.16583024313435013
4	0.18584839594613023
5	0.1986896732953577
6	0.2082526492172606
7	0.21407454595233355
8	0.2189467764130909
9	0.2227354184241982
};

\addplot[mark=none, red, dashed, thick, samples=2, domain=-1:31] {0.1329871295980729};

\addlegendentry{$k$ raters}
% \addlegendentry{omniscient classifier}

% \addlegendentry{calibrated hard classifier}
\addlegendentry{classifier}

\addplot[mark=*, mark options={scale=.8}, red, thick]
    table[]{
        1.959349891708352	0.1329871295980729
    };

\addplot[mark=none, red, dashed, thick, samples=2, domain=-1:31] coordinates {(1.959349891708352,0.1329871295980729) (1.959349891708352,0)};

\end{axis}
\end{tikzpicture}
\caption{An example power curve depicting a classifier's score of 0.13. The classifier's rater equivalence is 1.96. A single benchmark rater yields a lower expected score, whereas a benchmark panel comprising two raters shows a slightly higher expected score.}
\label{fig:ce_power_curve}
\end{figure}

Note that using a poor benchmark panel combiner function can artificially inflate the rater equivalence, leading to an overestimation of the classifier's performance. In Section~\ref{sec:abc}, we introduce an algorithm called the \textit{Anonymous Bayesian Combiner} for combining labels from simulated benchmark panels. It produces the Bayesian posterior probability distribution for the next label, based on the labels observed so far. We prove that it is the optimal combiner if cross-entropy is used as the scoring function, and that it is computationally tractable.\footnote{We provide a software library that implements all of the procedures for generating power curves and computing rater equivalence values. The software can be downloaded from https://pypi.org/project/surveyequivalence/}

Finally, in Section~\ref{sec:case-studies}, we examine three case studies from prior literature. For each case study, we plot power curves and calculate rater equivalence values. These analyses offer additional context for interpreting the results of these prior studies.

In summary, our contributions are:
\begin{itemize}
    \item A decision framework based on two properties of the evaluation setting---\emph{objectivity} and \emph{equanimity}---that determines whether classifiers should be evaluated according to the subjective or objective utility model (Section~\ref{sec:utility-models});

    \item Proof that, under the subjective utility model, scoring against a single evaluator at a time yields an unbiased estimate of expected utility, while scoring against the combined votes of larger panels can introduce bias (Section~\ref{sec:subjective-gt});

    \item Counter-examples showing that, under the objective utility model, there is no guarantee that evaluating against panels of any finite size will yield the same managerial decision as evaluating against the unknown ground truth, and that bigger panel sizes are not always better (Appendix~\ref{appdx:objective-utility});

    \item The concept of a \emph{power curve} and the definition of \emph{rater equivalence}---a formal measure of classifier performance relative to human benchmark panels of varying sizes (Section~\ref{sec:power-curve});

    \item A computationally tractable algorithm for combining labels from simulated benchmark panels that is provably optimal if cross-entropy is used as the scoring function (Section~\ref{sec:abc}).
\end{itemize}

\section{Related Work}
% Human labels are widely used to train and evaluate machine classifiers across a wide variety of tasks. In the social sciences researchers have long relied on human coding of text and images according to investigator-defined concepts. These include dimensions such as emotion \citep{lucey2010extended}, political leaning \citep{zhou2011classifying}, and outrage \citep{sobieraj2011incivility}.

This section begins by surveying scoring functions that have been used for scoring classifier outputs against human labels. We then turn to past work that explicitly considers the sources of rater disagreement and their impacts on evaluating and training classifiers.

\subsection{Scoring Functions}
Classifier performance is typically measured using accuracy, precision, recall, and both frequentist and Bayesian comparative analyses \citep{rainio2024evaluation, demvsar2006statistical, ferri2009experimental}. In cost-sensitive contexts, scoring functions account for asymmetric error costs using cost matrices or F$_\beta$-Scores \citep{elkan2001foundations, chinchor-1992-muc}.

Multi-class tasks often rely on macro- and micro-averaging to aggregate performance across classes \citep{sokolova2006beyond}. Although ROC AUC is widely used, it may mask class imbalance, prompting interest in alternatives like the Matthews Correlation Coefficient (MCC) \citep{chicco2021matthews}.

Domain-specific metrics offer further refinement. In text classification, ranked relevance is better captured by measures such as Mean Reciprocal Rank (MRR) and Normalized Discounted Cumulative Gain (NDCG) \citep{voorhees1999trec, jarvelin2002cumulated}.

\subsection{Models of Rater Disagreement}

Rater disagreement is a central challenge in using human-generated labels, especially for social or value-laden tasks. Prior work suggests that disagreement can arise from three distinct but interrelated sources: (1) the type of task, (2) variation in how raters interpret labels, and (3) noise in the labeling process. Each has distinct implications for how classifier performance should be evaluated and interpreted.

\subsubsection{Task Type: Objective vs. Subjective Ground Truth}

Plato introduces the juxtaposition between knowledge (episteme) and opinion (doxa) in \textit{The Republic}, where opinion is only an imperfect (noisy) reflection of the truth whereas knowledge is a perfect reflection of truth~\citep{plato2007republic}.  In contrast, ~\cite{whitehead1978process} gives human opinions their own ontological existence as ``prehensions''.  
Similarly, in content labeling, it is natural to distinguish between objective tasks where each item has a single correct label (\eg, digit classification, spelling correction) and subjective tasks where multiple judgments can all be valid (\eg, sentiment analysis~\citep{10.5555/2002736.2002760, haralabopoulos2020objective}, respondent's personally preferred moderation action~\citep{10.1145/3610082}).
We distinguish between tasks where the actions induced by the labels lead to the same utility for everyone, which we call the \textit{objective utility model} and tasks where an action on an item has a different effect on different people, which we call the \textit{subjective utility model} (see Section~\ref{sec:utility-models}).

\subsubsection{Interpreting Variation Across Raters}

Disagreement among annotators often arises from systematic differences in how labeling criteria are interpreted, shaped by cultural, ideological, or demographic backgrounds. For example, liberal and conservative raters may judge the tone of political content differently, each applying consistent but divergent standards. This interpretive variation challenges the assumption that subjective tasks have a single, universal ground truth. Disaggregated evaluation approaches recommend explicitly modeling and reporting subgroup differences rather than averaging across them \citep{barocas2021designing}. Similarly, ground truth built from one annotator population (\eg, Mechanical Turk workers) may fail to generalize, underscoring the importance of viewing human labels not as neutral truths but as community-dependent judgments \citep{sen2015turkers}.

\subsubsection{Label Noise}
Label noise, \ie, a mismatch between the rater's reported label and their true internal judgment, arises from many sources: lack of attention, expertise, fatigue, poor task design, or ambiguous instructions. In many models, this is treated as i.i.d. noise, supporting results like the Condorcet Jury Theorem \citep{10481/57153}, but this assumption often breaks down in practice \citep{paun2018comparing, pmlr-v216-burrell23a}.

Noise can be item-dependent, varying with difficulty \citep{ghosh2011moderates}, ambiguity \citep{dumitrache2018capturing}, or even with the label itself \citep{https://doi.org/10.2307/2346806}. More refined models incorporate both item and rater parameters \citep{lakkaraju2015bayesian, zarkoob2023better}. For example, item-response theory models the interaction between item difficulty and rater expertise \citep{hambleton1985item}, while matrix factorization captures latent rater-item effects \citep{koren2009matrix}. 
\citet{gordon2021disagreement} factor the rating matrix into latent rater and item factors, imputing per-rater item ratings that they reinterpret as subjective truths of which observed labels are noisy proxies.

In our framework, this distinction is reflected in the assumptions underlying the utility models. The objective model assumes labels are samples from a latent distribution that reflects noisy judgments. The subjective model treats each label as a valid judgment, free from noise, but drawn from a distribution that reflects differences in subjective judgments across the population.

\subsection{Evaluation with Rater Disagreement}

When labels are noisy or subjective, evaluating classifiers becomes difficult. 
Prior work has approached this by reconstructing absolute scores or comparing to imperfect human benchmarks.

\subsubsection{Reconstructing Absolute Scores}

\paragraph{Objective Ground Truth.}
A classifier's score against any imperfect proxy may not match the classifier's score against ground truth. 
Agreement thresholds such as Kappa or Krippendorff’s alpha are often used to justify proxies \citep{doi:10.1177/001316446002000104, krippendorff04} but even datasets with agreement levels above commonly used thresholds may yield misleading evaluation scores (see Appendix~\ref{appdx:obj-absolute}).
Taking the majority vote of several raters can yield a better proxy for ground truth and probabilistic models like Dawid-Skene can improve on this further by inferring rater expertise \citep{https://doi.org/10.2307/2346806} and then taking that into account when combining rater labels. Still, they provide only a proxy.

It would be nice to reconstruct the absolute score against ground truth labels by applying an adjustment to the observed score against proxy labels. \cite{lam2003evaluating} make a parametric assumption about the error generating process for proxy labels. They also assume independence between proxy label errors and classifier errors. They then produce an adjusted classifier score that reconstructs what the score would have been against ground truth labels. 

To see why some assumption is needed about the joint distribution of ground truth, proxy labels, and classifier labels, consider a simple scenario. Suppose half the items are ground truth positive, 90\% of positive items have positive proxy labels and 90\% of negative items have negative labels. A classifier that correctly outputs the ground truth on every item would have a score against the proxy labels of 90\%. But a classifier with only 80\% true accuracy could also have the same score against the proxy labels, if it matched all the incorrect proxy labels and had mismatches on 10\% of the items where the proxy labels were correct.
The issue here is not that we cannot learn the error of the raters, or that the rater model is too simple or too complicated. Instead, the issue is that, without an assumption about how the classifiers errors are correlated with proxy label errors, we do not know whether the correct score against the ground truth is 80\%, 100\%, or something in between.

Unfortunately, generally it will not be safe to assume classifier errors are uncorrelated with human proxy label errors. Automated classifiers may key on different features, or process them in different ways than people do. Any strong assumption about the three-way joint distribution of ground truth, human-generated proxy labels, and classifier labels, is unwarranted and likely to lead to misleading results.

\paragraph{Subjective Labels.}

If each rater's label reflects a subjective judgment, evaluation is ideally based on the distribution of views in the population rather than just the central tendency \citep{pavlick-kwiatkowski-2019-inherent,10.1145/3136755.3136792}. Unlike the objective setting, labels are not imperfect proxies for a hidden ground truth but valid draws from the population distribution of views.

\subsubsection{Relative Scores Against Imperfect Labels}

Some evaluations focus on relative comparison to human benchmarks. In polling and economics, approaches like forecast accuracy and ``Equivalent Number of Observations'' quantify how model performance compares to human groups \citep{rothschild2011forecasting, erevrothENO2007}. 
We generalize this idea using power curves and rater equivalence.

\subsubsection{Rater Groups}

Rater groups may vary systematically and it is critical that the evaluation labels come from the right pool. \citet{sen2015turkers} showed that algorithms that perform well against evaluation labels gathered from Mechanical Turk can perform worse when evaluated against labels from other rater pools. \citet{barocas2021designing} argue for disaggregated evaluation, with scores calculated separately for different subgroups that may be affected differently. In evaluating classifiers, for example, that would involve acquiring labels from multiple subgroups where those labels differ systematically. 

Our subjective utility model can be thought of as an extreme version of disaggregated evaluation, where every individual's judgments generates a different evaluation score. Whether we think of evaluators as unique individuals or as representative of subgroups, it is crucial that the rater pool be representative of the population of interest. Any estimate of average utility for the whole group or for subgroups will be biased if the rater pool systematically differs from the population of interest.

\subsection{Learning with Rater Disagreement}

While related to evaluation, learning under disagreement involves different goals and metrics. Learning seeks to find parameter values that produce the best classifier, while evaluation seeks to quantify how good the resulting classifier is. 

If the labeled data used in the learning process doesn't always match the objective ground truth, the standard supervised learning techniques learn a sub-optimal classifier. 
% Learning typically uses smooth loss functions \citep{10.5555/2621980} while evaluation may use either a smooth loss function or a discrete metric such as accuracy. 
% 
% the best classifier using smooth loss functions---unlike discrete metrics such as accuracy, which hinder optimization . Relative scores suffice since only the ranking of classifiers matters \citep{frenay2013classification,9729424}.
% 
% Standard supervised learning assumes the presence of a clear, objective ground truth and often utilizes aggregated labels (e.g., majority voting, maximum likelihood estimation, or more sophisticated spectral methods), typically with multiple annotations per item. In contrast, learning with noisy labels operates under the assumption that each item has only a single, potentially incorrect label. 
\citet{frenay2013classification} provide a comprehensive survey of this area.
Existing methods for handling noisy labels typically either leverage a small amount of clean data to correct loss functions \citep{DBLP:conf/cvpr/PatriniRMNQ17,10.5555/3327546.3327707,sukhbaatar2015training,10.5555/2999611.2999745,10.1109/TPAMI.2015.2456899} or design loss functions that are invariant to noise~\citep{6342929,GHOSH201593,10.5555/3298483.3298518,pmlr-v97-charoenphakdee19a}. Most work relies on strong assumptions---such as the assumption that label noise is independent and identically distributed (i.i.d.)---to provide theoretical guarantees. While these loss functions help optimize learning, they are not well-suited for evaluating the utility of learned models.  

Additionally, techniques like dropout \citep{JMLR:v15:srivastava14a}, data augmentation (e.g., mixup \citep{zhang2018mixup}), pruning, and weighting strategies  \citep{10.1016/j.knosys.2012.01.015,Muhlenbach2004} have been proposed to address issues like overfitting and noisy labels. Again, these techniques are training strategies and are not directly useful in model evaluation.

The methods above assume that a single ground-truth label exists and that rater disagreement is a form of noise to overcome. A separate line of research instead treats rater disagreement as signal, preserving rater-level information in the training objective. \citet{davani-etal-2022-dealing} train multi-task models with per-annotator prediction heads and a shared task representation, finding that this yields aggregate classification performance equal to or better than training on majority-vote labels, and also produces uncertainty estimates that correlate with annotator disagreement. Gordon et al.'s ``jury learning'' \citep{gordon2022jury} extends per-annotator modeling to inference: at prediction time, a jury is sampled from the learned per-annotator models, and its collective decision is the classifier's output. This enables the analyst to correct for systematic differences between the rater pool and the population of interest, or to produce predictions that reflect the view of a subgroup of interest.

%Noise-tolerant loss functions are used to train models despite corrupted labels \citep{manwani2013noise, natarajan2013learning, ghosh2017robust}, though they rely on strong assumptions---most notably that label noise is i.i.d.---for theoretical guarantees. These loss functions optimize learning but are not suited for evaluating utility.

\section{Modeling Rater Disagreement and Classifier Utility}

Conceptually, we can think of a rater $j$'s label for an item $i$, $\genericLabel$, as being determined by properties of the rater, the item, and the rating context. However, we do not directly model all of these properties. Instead, we consider a reduced form model, where each item has a \emph{rater response state} $\itemResponseState$, which is a probability distribution. Each rater label is an i.i.d.\ draw from $\itemResponseState$. Although individual raters may differ systematically, this model ignores those differences and treats each label as if it had been collected anonymously. 
% In many practical settings, individual raters would not label enough items to allow useful inferences about individual rater characteristics.

\begin{figure}[t]
\centering
\includegraphics[width=0.7\textwidth]{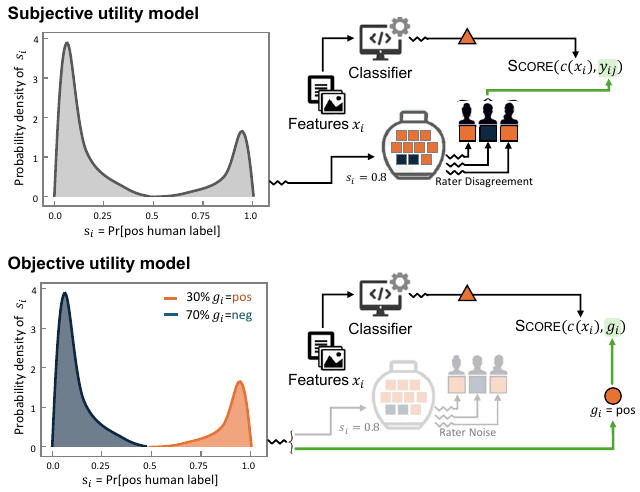}
\caption{Two utility models, with different data generating processes and interpretations of the item-specific response state $s_i$. In both models, raters' labels are i.i.d. draws from an item-specific response state. In the subjective utility model (top), there is no presumed underlying ground truth; $s_i$ directly represents the distribution of valid human judgments for item $i$, and classifier utility is determined by agreement with individual rater labels. In the objective utility model (bottom), each item has a hidden true label $g_i$, and $s_i$ captures noisy human responses conditional on that truth. Accordingly, classifier utility is determined by agreement with $g_i$ only, regardless of the distribution of observed labels.} 
\label{fig:noisy_signals}
\end{figure}

Figure~\ref{fig:noisy_signals} illustrates our two models of classifier utility. In either case, a classifier processes the item's content, represented as a set of features $\itemFeatures$, to produce an output $\itemClassifierOutput$. However, the utility of classifier outputs is different in the two models, leading to different ways of assessing the classifier.

In the \textit{subjective utility model}, each item $i$'s rater response state is an independent draw from a distribution over possible response states $\itemResponseStateRV$. The top of Fig.~\ref{fig:noisy_signals}  depicts a distribution, with the selected item having $\itemResponseState=0.8$, depicted as an urn with 8 orange and two blue balls. The utility of a classifier output depends on matching individual rater judgments, which are independent draws from the urn. This model treats different rater labels for the same items as indicators of valid differences in perspectives and preferences rather than errors. There is no objective ground truth but rather a subjective one, the distribution $\itemResponseState$ of valid human judgments for each item. The entire distribution will not be observable to the analyst for any item; one of our main insights is that using a single label provides an unbiased estimate of the expected utility.

In the \textit{objective utility model}, there is a different model of the data generating process. Each item $i$ has an underlying, hidden ground truth, $\itemGT$. It and the rater response state are a draw from a joint distribution. For example, in the bottom of Fig.~\ref{fig:noisy_signals}, there are 70\% negative items, and $\itemResponseStateRV | \itemGT = \textit{neg}$ is shown as the blue portion of the graph; there are 30\% positive items, and $\itemResponseStateRV | \itemGT = \textit{pos}$ is shown as the orange portion. The selected item has a positive ground truth label and a rater response state of $0.8$, again depicted as an urn with 8 orange and two blue balls. The utility of a classifier output depends only on matching the ground truth, not on how accurately or consistently raters perceive that ground truth. In this model, different rater labels for the same item are indicators of noise in the process of perceiving the objective ground truth. There is an objective ground truth, but it may not be directly observable at the time that the classifier makes decisions. If the ground truth is not even observable for evaluation items, then the analyst may try to use human labels as a proxy, but we provide negative results about the reliability of that approach.

For example, consider the task of determining whether posts in a subreddit violate that community's standards. Suppose there are two posts, with rater response states $0.9$ and $0.6$. 
In the subjective utility model, we think of violating community standards as a continuous property. The different response states represent that the first post \emph{is more violative} than the second, because more of the relevant population would agree that it is violative.
In the objective utility model, we think of violating community standards as a binary property; a post either objectively violates or does not violate the standards. Assuming both posts are truly violative, the different response rates reflect noise levels, with the first's violation being \emph{easier to detect}.
Simply put, in the subjective model, posts are more or less violative while in the objective model, violative posts are easier or harder to detect.

More formally\footnote{A table of notations appears in Appendix~\ref{app:notation}}, let $\La$ be the set of possible labels that a rater can pick for an item. Given a sequence of items, the $\ith$ item will have a realized set of features $\itemFeatures$, a ground truth state $\itemGT$, and a rater response state $\itemResponseState$. 
$\itemGT$ is a one-hot vector over labels, the single element with value 1 indicating the correct ground truth label.
$\itemResponseState$ is a vector of discrete probabilities for each of the labels.
$\itemFeatures$, $\itemGT$ and $\itemResponseState$ are realizations of the random variables $\itemFeaturesRV$, $\itemGTRV$, and $\itemResponseStateRV$ respectively. 
Each item's properties are an i.i.d.\ draw from the joint distribution $D_{\X, \GT, \St}$.\footnote{To avoid degenerate conditions, we assume both that $\itemGTRV$ and $\itemResponseStateRV$ have full support over the set of labels, and that there is more than one possible response state.}

A classifier $c$ operates on the features of an item, producing output $\itemClassifierOutput$, an $|\La|$ dimensional vector with one value for each possible label in $\La$, and values summing to 1. We will refer to $c$ as a hard classifier if $\itemClassifierOutput$ is always a one-hot vector.
Otherwise, we will call $c$ a soft classifier and interpret $\itemClassifierOutput @l$ as the probability assigned to label $l$.

One or more human raters may provide labels for the item. We divide the raters into two groups. The evaluation group's labels for an item are used to score classifiers. The benchmark group's labels are used to simulate human benchmark classifiers. Rather than relying on the generic notation $\genericLabel$ for a human label, we will use $\evaluationRating$ to refer to the $\jth$ evaluation rater's label for the $\ith$ item and $\benchmarkRating$ to refer to the $\jth$ benchmark rater's label for the $\ith$ item. We model both evaluation and benchmark labels as i.i.d. draws from the item's realized response state.
That is: $\forall i \forall j, \evaluationRatingRV \sim \itemResponseState$ and $\benchmarkRatingRV \sim \itemResponseState$.\footnote{If the evaluation raters and benchmark raters are drawn from different populations, an item could have different rater response states for the two populations. For simplicity of exposition, we focus on the setting where both benchmark and evaluation raters are drawn from the same population.}

Throughout the paper, we will illustrate with examples where labels and ground truth are both binary, either positive or negative. 
In that setting, $\itemResponseState$ is a random variable with a Bernoulli distribution, defined by the single parameter $\itemResponseState @ pos$, and a soft classifier's output is similarly defined by the single parameter $\itemClassifierOutput @ pos$. For simplicity, we will abuse notation slightly by using $\itemResponseState$ and $\itemClassifierOutput$ to refer to the probability of a positive label rather than the more cumbersome $\itemResponseState @ pos$ and $\itemClassifierOutput @ pos$. We will also abuse notation in the binary setting by treating $\itemGT$ as an indicator variable that is 1 for a positive label and 0 otherwise, rather than as a more cumbersome one-hot vector.

\subsection{Scoring Functions: Modeling Utility}
\label{sec:utility-models}

Given a realized vector of item feature sets $\itemFeaturesVec$ for $n$ items, a scoring function takes as input a classifier's outputs $\itemClassifierOutputVec$, and a vector of evaluation labels $\scoringLabelVec$. A common practice is to use a single human rater's label for each item. Alternatively, a panel of several raters may be used. The raters' labels are combined into a single \emph{panel label} for each item: if the labels are discrete, the majority vote is typically used as the panel label; if the labels are continuous, the average may be used.
% When scoring a classifier, its output on the $\ith$ item will be compared to a scoring label $\scoringLabel$. 
Let $\scoringLabel = \evaluationPanelLabel$ be the panel label constructed from the majority vote of $k_e$ randomly selected raters from the evaluation group. When $k_e=1$, this reduces to using a single evaluation rater's label, $\scoringLabel = \evaluationRatingSingle$.

\setlength{\abovedisplayskip}{0pt}
\setlength{\belowdisplayskip}{0pt}

The scoring function outputs a numeric evaluation score. Popular scoring functions for hard classifiers include percent agreement (accuracy) and F1 score. Popular scoring functions for soft classifiers include area under the receiver operating characteristic curve (AUC) and cross-entropy (CE), which is defined as 
\begin{equation*}
\textsc{CE}(\itemFeaturesVec,\scoringLabelVec)=-\frac{1}{n}\sum_{i =1}^{n} \log\mleft(\itemClassifierOutput @\scoringLabel\mright).
\end{equation*}

To determine what scoring function should be used and what evaluation labels should be used, it is helpful to think of the scoring process as estimating the utility that would result from the action induced by the classifier's outputs. For example, in the radiology scenario, the classifier output might determine whether to take a further round of diagnostic tests, leading to a saved life for a true positive or reduced quality of life for a false positive. In the content moderation scenario, the classifier output might lead to removing a post, pleasing some customers and angering others.

\paragraph{Subjective Utility Model} With subjective ground truth, each person may get a different utility, because they have different subjective judgments about the corresponding items. 
% In this case, we model utility as a function that depends only on the person's labels for the set of items, but not on any other personal characteristics. 
Following Bentham's utilitarian principle, we define the overall utility as the population's average utility. Because our model is based on anonymous ratings, the average utility of the population is the expected utility for any single individual, where the expectation is taken over a randomly drawn evaluation label for each of the items. We use the notation $\evaluationsVecRV \sim \itemResponseStateVec$ to represent that for all items and for all ratings, the random variables $\evaluationRatingRV$ are independent draws from the corresponding rater response state $\itemResponseState$.

\begin{equation*}
\utilityFunction(\itemClassifierOutputVec, \itemResponseStateVec) = \E_{\evaluationsVecRV \sim \itemResponseStateVec} [\scoreFunction(\itemClassifierOutputVec, \evaluationRatingSingleVecRV)]
\end{equation*}

A scoring function is item-separable if it can be computed independently for each item. 
For example, cross-entropy and agreement can be interpreted as item-separable utility functions. Precision, recall, and correlation are not item-separable; indeed, they are defined only for collections of items, not for single items.

When the scoring function is item-separable and labels are discrete, the subjective utility decomposes nicely: 
\begin{align*}
\utilityFunction&\left(\itemClassifierOutputVec, \itemResponseStateVec\right) \\
&= \E_{\evaluationsVecRV \sim \itemResponseStateVec} \left[\scoreFunction\left(\itemClassifierOutputVec, \evaluationRatingSingleVecRV\right)\right] \\
& =\sum_{i}\E_{\evaluationRatingSingleRV \sim \itemResponseState} \left[\scoreFunction(\itemClassifierOutput, \evaluationRatingSingleRV)\right] \tag{item-separable} \\
& =\sum_{i}{\sum_{\ell\in\La}\scoreFunction\left(\itemClassifierOutput,\ell\right) \cdot \itemResponseState @ \ell } \tag{discrete labels}
\end{align*}

\paragraph{Objective Utility Model} In the objective utility model, the utility  depends only on the objectively correct label for each item, $\itemGT \in \La$, not the raters' perceived item labels nor rater response states from which they are drawn. The utility is expressed in terms of a scoring function that takes a vector of classifier outputs and a vector of ground truth labels as inputs.
% For example, in the radiology scenario, the ground truth would be whether a malignant tumor is present.
More formally:
\begin{equation*}
\utilityFunction^{GT}(\itemClassifierOutputVec, \itemGTVec) = \scoreFunction(\itemClassifierOutputVec, \itemGTVec)
\end{equation*}

Note the contrast with the objective utility model. There, the utility is the \emph{expected} score against a random rater's realized labels. Here, the utility is the actual score against the realized ground truth labels. 

With a scoring function that is item-separable: 
\begin{align*}
\utilityFunction^{GT}&\left(\itemClassifierOutputVec, \itemGTVec\right) \\
&= \scoreFunction(\itemClassifierOutputVec, \itemGTVec) \\
&=\sum_{i} \scoreFunction(\itemClassifierOutput,\itemGT) \tag{item-separable}
\end{align*}

% \paragraph{Measuring Utility}

% In both utility models, the utility is an ideal quantity that cannot be perfectly measured. In the subjective case, we observe only one or a few random raters' realized labels, and have to use them to estimate the expected score. 
% In the objective case, when the ground truth is not available, the realized ground truth is not observed; we will substitute the majority vote of an evaluation panel's labels as a pseudo ground truth proxy.
% The next two sections explore how well these empirical approximations work for measuring the absolute utility of a classifier and the relative utility of two classifiers.

% Because we treat whatever scoring function is selected as if it measures utility, we will often use the terms scoring function and utility function interchangeably. We caution, however, that they are interchangeable only when we score against ideal evaluation labels. The key exploration in sections \ref{sec:TKTK} and \ref{sec:TKTK} is the extent to which we will be misled by scoring against evaluation labels that are imperfect proxies for the underlying state of the world.

\section{Ideal Evaluation Panel Size}
%\label{sec:score_against_gt}
\label{sec:danger-of-majority-vote}

% For results, see notebook in https://github.com/presnick/surveyequivalence/tree/simulations/simulations

We now consider the ideal evaluation panel size for estimating the utility of a classifier, given data for a sample of items. It is common practice to use the majority vote of a panel of three or more raters as the evaluation label for an item when multiple labels are available. Presumably, this stems from the intuition that one person's label may not reflect what most people think, but the majority vote of several people's labels will be more likely to reflect the population's preferences. However, we show that this does not imply that the majority vote of a panel of three or more raters will provide a better estimate of utility than a single rater's label. In fact, it can provide a worse estimate.

As a shorthand, we will refer to a classifier's score against a panel of $k_e$ raters' labels (or a single rater when $k_e=1$) as its \textit{evaluation panel score}. Formally, for odd $k_e$, let $\evaluationPanelLabel$ represent the majority vote from $k_e$ evaluation labels for the $\ith$ item. When $k_e=1$, this is just an individual label $e_{i1}$. Analogously, we let the random variable $\evaluationPanelLabelRV$ represent a process of collecting $k_e$ evaluation labels, each an i.i.d. draw from $\itemResponseState$, and taking the majority vote as the panel label. 
% Finally, let $\evaluationPanelLabelVecRV$ represent the results of running that process for a randomly selected set of items.

This section reports results for the subjective utility model. We prove that it is best to use a single evaluation rater. Scoring against the majority vote of larger evaluation panels yields a biased estimate of subjective utility. Instead, the best way to make use of multiple evaluation raters is to compute the classifier's score against each rater individually, and then take the average of the scores.

Appendix~\ref{appdx:objective-utility} presents results for the objective utility model. There, the results are mixed. Larger evaluation panels will sometimes be better. However, there is no theoretical justification for picking any particular evaluation panel size, and there are scenarios where larger panels can yield an incorrect ordering of classifiers where evaluating against a single rater would give the correct ordering.

% Proofs are in Appendix~\ref{app:raterdisagree}.

% Objective Utility Model subsection moved to Appendix (appdx_objective_utility.tex)

\subsection{Estimating Absolute Utility}
\label{sec:subjective-gt}

% In the subjective utility model, scoring a classifier against a single rater provides an unbiased estimate of the average individual utility. Scoring against a panel yields an incorrect estimate. Indeed, the estimate typically diverges more and more as the panel size increases.

\begin{restatable} [Individual Label Works for Estimating Absolute Utility]{claim}{claimsubjectivejudgmenet} \label{claim:subjectivejudgmenet}

With subjective ground truth, evaluating against one randomly selected rater's labels is an unbiased estimator of the overall utility.
\begin{align*}
\E_{\itemFeaturesVecRV,\itemResponseStateVecRV}&\mleft[\utilityFunction(\itemClassifierOutputVecRV,\itemResponseStateVecRV)\mright]=\\
&\E_{\itemFeaturesVecRV, \itemResponseStateVecRV}\E_{\evaluationsVecRV \sim \itemResponseStateVecRV} \mleft[\scoreFunction(\itemClassifierOutputVecRV, \evaluationRatingSingleVecRV)\mright]
\end{align*}
\end{restatable}
\begin{proof}
Recall that with subjective judgments, the overall utility reflects the average benefit or satisfaction that the population gets from all items:
% \[\textsc{Utility}(\cI,\sI) = \E_{\eIRV} [\textsc{IndUti}(\cI, \eIRV)] \]
\begin{equation*}
\utilityFunction(\itemClassifierOutputVec, \itemResponseStateVec) = \E_{\evaluationsVecRV \sim \itemResponseStateVec} [\scoreFunction(\itemClassifierOutputVec, \evaluationRatingSingleVecRV)]
\end{equation*}
Since the equality holds for every realized draw of items with feature sets $\itemFeaturesVec$ and response states $\itemResponseStateVec$, it also holds in expectation for a randomly selected set of items. 
\end{proof}

% More formally, we investigate, for different values of $k$, whether 
% $\E_{\itemvecRV} \E_{\eIRV} \textsc{Utility}(\cIRV,\eIRV^{(k)})$ is a good substitute for $\E_{\itemvecRV}\textsc{Utility}(\cIRV,\gIRV)$ in the objective utility model and for $\E_{\itemvecRV}\textsc{Utility}(\cIRV,\sIRV)$

In contrast, as stated formally below in Claim~\ref{claim:subjnonunbias}, using a panel label $\evaluationPanelLabel$ derived from the majority vote of three or more panelists' ratings will provide a distorted estimate of the utility.

% The first part of the claim states that with subjective ground truth, we can pick one random individual rater $j$ and use their utility as an unbiased estimate of overall population utility. 

% In practice, when there are too many items to evaluate and the rater is limited in time, it may not be feasible to ask one rater to rate all items. When utility is item-separable, the utility on each item can be estimated from any single rater's label. 

% \begin{align*}
% \textsc{Utility}(\mathbf{a}_{\I},s_{\I}) 
% = & \sum_{i}\E_{ \mathbf{W}_{ij} \sim s_i }[{\textsc{Utility}(a_i, \mathbf{W}_{ij})]} \\
% = &  \sum_{i}\E_{ \mathbf{y}_{\I}^{(1)} \sim s_i }[{\textsc{Utility}(a_i, \mathbf{y}_{\I}^{(1)})]} \\
% \end{align*}

% Thus, when utility is item-separable, it is safe to use different raters for different items. 
% We could randomly select one rater for each item, ask her to evaluate just this one item and use the sum of raters' utilities to estimate the utility for the whole set.

%However, taking the majority vote of a panel of raters $y^{(k)}_i$ and using that to calculate utility does NOT give an unbiased estimate. 

\begin{restatable} [Panel Labels Fail]{claim}{claimsubjnonunbias} \label{claim:subjnonunbias}

With subjective ground truth, $\forall k > 1$, there exists a scenario such that: 
% $$\E_{\yI \sim \sI} \textsc{Utility}(\cI,\yI^{(k)}) \neq \lim_{|\itemvec| \mrightarrow \infty} \textsc{Utility}(\cI,\sI).$$
% $$\E_{\itemvec, \yI^{(k)} \sim \sI^{(k)}} \textsc{Utility}(\cI,\yI^{(k)}) \neq  \E_{\itemvec}\textsc{Utility}(\cI,\sI).$$   
% $$\E_{\itemvecRV}\textsc{Utility}(\cIRV,\sIRV) \neq \E_{\itemvecRV} \E_{\eIRV} \textsc{Utility}(\cIRV,\eIRV^{(k)})$$ 
\begin{align*}
\E_{\itemFeaturesVecRV,\itemResponseStateVecRV}&\mleft[\utilityFunction(\itemClassifierOutputVecRV,\itemResponseStateVecRV)\mright] \neq \\ 
&\E_{\itemFeaturesVecRV, \itemResponseStateVecRV}\E_{\evaluationsVecRV \sim \itemResponseStateVecRV}\mleft[\scoreFunction(\itemClassifierOutputVecRV, \evaluationPanelLabelRV)\mright]
\end{align*}
\end{restatable}
\begin{proof}
The key insight is that whatever label is most frequent will be over-represented in the majority votes.
The proof is by counter-example. Set the individual utility as the percentage of agreements between the classifier's outputs and labels. There is a single rater response state,  $0.6$. The classifier always outputs `pos'. The overall utility for any item, then, will be the expected agreement with a random label, which is the proportion of positive individual labels, $0.6$. However, for $k_e>1$, the expected fraction of positive panel labels will be greater than $0.6$. Indeed, as $k_e \rightarrow \infty$, the measured utility score against panel labels will approach $1$, diverging farther and farther from the correct expected utility of $0.6$. 
\end{proof}

\subsection{Estimating Relative Utility}

The same results extend to the relative ordering of two classifiers under subjective utility. Of course, an unusual sample of items could always lead to an ordering of observed evaluation panel scores that is reversed from the ordering of expected utilities. We are concerned, however, about systematic reversals where the expected scores against human evaluation panels are reversed from the ordering of expected utilities.

\setlength{\abovedisplayskip}{0pt}
\setlength{\belowdisplayskip}{0pt}

\begin{definition}[Reliable Ordering] 
\label{def:reliableordering}
An evaluation panel size $k_e$ gives a \textit{reliable ordering} if the ordering of expected evaluation panel scores matches the ordering of expected utility for classifiers: 
\begin{align*}
\E_{\itemFeaturesVecRV,\itemResponseStateVecRV}&\utilityFunction(\itemGivenClassifierOutputVecRV{c_1},\itemResponseStateVecRV) \ge \\ &\E_{\itemFeaturesVecRV,\itemResponseStateVecRV}\utilityFunction(\itemGivenClassifierOutputVecRV{c_2},\itemResponseStateVecRV) \\
& \qquad\iff \\
\E_{\itemFeaturesVecRV, \itemResponseStateVecRV}&\E_{\evaluationsVecRV \sim \itemResponseStateVecRV}\mleft[\scoreFunction\mleft(\itemGivenClassifierOutputVecRV{c_1},\evaluationPanelLabelVecRV\mright)\mright] \ge \\
& \E_{\itemFeaturesVecRV, \itemResponseStateVecRV}\E_{\evaluationsVecRV \sim \itemResponseStateVecRV}\mleft[\scoreFunction\mleft(\itemGivenClassifierOutputVecRV{c_2},\evaluationPanelLabelVecRV\mright)\mright]
\end{align*}
\end{definition}

\begin{restatable} [Individual Labels Work for Ordering Classifiers]{claim}{claimsubjectivesuccessrelative} \label{claim:subjective-success-relative}

In the subjective utility model, $k_e=1$ yields a reliable ordering of classifiers. 
\end{restatable}
\begin{proof}
From Claim~\ref{claim:subjectivejudgmenet}, we have, for each classifier: 
\begin{align*}
\E_{\itemFeaturesVecRV,\itemResponseStateVecRV}&\utilityFunction(\itemClassifierOutputVecRV,\itemResponseStateVecRV) =\\
&\E_{\itemFeaturesVecRV, \itemResponseStateVecRV}\E_{\evaluationsVecRV \sim \itemResponseStateVecRV} \scoreFunction(\itemClassifierOutputVecRV, \evaluationRatingSingleVecRV)
\end{align*}
Thus, the ordering of expected scores for the two classifiers also gives the correct ordering of the expected subjective utility.
\end{proof}

\begin{restatable} [Larger Panels Fail for Ordering Classifiers]{claim}{claimsubjectivefailure} \label{claim:subjective-failure}

There exists a scenario where $\forall k_e>1$, an unreliable ordering happens. 
\end{restatable}
\begin{proofsketch}
We construct a case with two classifiers—$c_1$ that always predicts positive, and $c_2$ that always predicts negative—evaluated under agreement-based utility. The true utility of $c_1$ exceeds that of $c_2$, since it better aligns with the average rater response across item types.
However, for any $k_e > 1$, majority vote introduces bias: it over-represents the most probable label within each response state. Full details and calculations appear in Appendix~\ref{app:evalbenchmarkclass}.

\end{proofsketch}

In summary, with the subjective utility model, scoring should be based on individual rater evaluations rather than combining panel ratings through a majority vote. Intuitively, scoring against a single rater rewards a classifier in proportion to the fraction of the population that agrees with its output on each item. By contrast, scoring against the majority vote of a large evaluation panel rewards a classifier only for matching the most common label on each item, regardless of the margin. When these diverge — for example, when a classifier does well on items with strong consensus but poorly on items where opinion is nearly split — scoring against larger panels can provide a misleading result.

There is still a benefit from having multiple evaluation raters available. But a separate score should be computed using each rater's evaluation labels and the scores should be averaged. The mean of scores for individual raters provides an unbiased estimate of utility, with a lower variance than using only one rater.

\section{Human Panels as Benchmarks: Theory}
\label{sec:human-benchmarks}

In a setting where an automated classifier is already used as part of decision processes, and the question is whether to replace it with another one, the currently used classifier should be the benchmark for comparison.
If, however, the new classifier is not replacing any existing classifier, it may be useful to compare the classifier's performance to that of a human panel, especially if the practical alternative to using the classifier is to rely on judgments of human panels.

There are many ways to organize a panel with multiple people. They may discuss each item and come to a consensus judgment. Or they may make independent judgments. Or there might be a more complex workflow, such as two people making independent judgments, and bringing in a third person when they disagree. In this paper, we consider only panels with the same number of raters for each item and with each rater making an independent judgment.

We first consider alternative ways of combining those independent judgments into a panel label. We will refer to this as the benchmark combiner. We then define a theoretical power curve based on the expected score for panels of different sizes and define the rater equivalence of the classifier as the point where the power curve first reaches the classifier's expected score. Finally, we remark on the importance of calibration for the benchmark classifier, as an uncalibrated benchmark classifier may artificially raise the power curve and inflate the rater equivalence score. Section~\ref{sec:human_benchmark_implementation} explores how to empirically estimate power curves from a dataset of rater labels.

% Finally, we report on a particular benchmark combiner function, the Anonymous Bayesian Combiner. We show that when scoring against a single evaluation rater using the cross-entropy scorer, it is the optimal combiner: it produced the highest expected score for each panel size, and thus provides a lower bound on the rater equivalence value. We also show that it can be computed efficiently.

\subsection{Benchmark Combiner Functions}

For a particular item $i$, let $\benchmarkRatingPanel = \benchmarksVec_{i1}, \benchmarksVec_{i2}, \ldots, \benchmarksVec_{ik_b}$, represent a vector of $k_b$ human benchmark labels. 
A benchmark combiner $\comb$ is a classifier that ignores all properties of the item and instead takes as input the benchmark panel labels for an item.

A hard combiner outputs a single label.
% , $\textsc{Comb}:\La^{|\I|\times K}\times\La^*\mapsto \La$. 
For example, with binary labels, the \emph{majority vote combiner} outputs the label assigned by the majority of the raters, picking at random to break ties. The \emph{plurality combiner} generalizes to a setting with more than two labels; it outputs the most common label even if less than half of the raters assign it. The \emph{average vote combiner} works in settings with continuous numeric labels by taking the mean of the labels.

A soft combiner produces a probability distribution over discrete labels instead of a single label. 
For example, the \emph{frequency combiner} outputs, for every possible label, the frequency of that label in the $k_b$ observed labels. 
% If a soft combiner is to be scored against the cross-entropy scoring function, the logarithm of 0 is undefined, so we adjust outputs of 0 or 1 by a small epsilon, .02 in our implementation.

A myopic combiner makes use of only the observed labels for the particular item, ignoring other items' labels.
All of the example combiners mentioned above are myopic. 
In Section~\ref{sec:abc} we will introduce a non-myopic combiner, the Anonymous Bayesian Combiner,  that is learned from a dataset; the output for one item will depend on the pattern of labels observed for the rest of the items.

% Given the human benchmark panel $\blabels$, and a combiner function $\comb$, we define the human panel benchmark classifier as $\khbench$ where for each item $i$, the classifiers maps the human benchmark panel to a prediction \[\khbench(\blabels)=\comb(\blabel_{i,1}, \blabel_{i,2}, ..., \blabel_{i,k}).\]

\subsection{Power Curve}
\label{sec:power-curve}

The power score for a particular combiner and panel size $k_b$ is the expected score for a benchmark panel of that size, intuitively, the expected utility of using such panels. The expectation is taken over a randomly selected set of items and a randomly selected set of human benchmark labels for those items. As $k_b$ increases, so does the expected score (for an optimal combiner) because the benchmark panel labels will provide more information about the rater response state, and thus about the evaluation labels. 
% We refer to the collection of scores, for increasing panel sizes, as a \emph{Power Curve}.

More formally, let $\benchmarkRatingPanelInfRV$ be an infinite height matrix of random variables corresponding to a process of randomly selecting an infinite number of items and then selecting $k_b$ benchmark labels for each item as i.i.d. draws from the rater response state $\itemResponseStateRV$. $\benchmarkPanelLabelInfRV$ is an infinite vector of induced random variables, one for each item, corresponding to running the combiner on the $k_b$ benchmark labels for each item. 
$\comb(\benchmarkRatingPanelRV)$ represents the combiner's output for the $\ith$ item.

% We use $\mathcal{B}$ to denote the random variable of the matrix of benchmark labels. $\mathcal{B}_i$ denotes the random variable of benchmark labels for item $i$. That is, for all items $i$ and benchmark rater $j$,  $\mathcal{B}_{i,j}$  is randomly drawn independently from the rater response state $s_i$. 

% That is, each randomly drawn item has a rater response state, $s_i$ and all of the benchmark labels $B_{i,1},B_{i,2},\cdots, B_{i,k}$ are i.i.d. samples from $s_i$. 

The expected score for a benchmark classifier's outputs may depend, for some benchmark combiners and scoring functions, on the number of items. Thus, it is defined as a limit as the number of items grows.\footnote{We restrict ourselves to well behaved utility functions where the limit exists.}

\begin{definition} [Power Score]
\label{def:powerscore} 

\begin{align*}
& \powerscore_{\comb}(k_b) = \\
& \quad\lim_{n\rightarrow \infty}\E_{\itemResponseStateVecRV}\E_{\benchmarksVecRV,\evaluationsVecRV \sim \itemResponseStateVecRV}\mleft[\scoreFunction(\benchmarkPanelLabelVecRV, \evaluationRatingSingleVecRV)\mright]
\end{align*}
\end{definition}

Note that for the subjective utility model, the benchmark labels are drawn from the rater response state $\itemResponseStateRV$ and the vector of rater response states is also a direct input to the utility function. Wherever human labels are generated, they are all i.i.d. draws from the same rater response state. Thus, the rater response states affect both the benchmark labels and, in the subjective utility model, the evaluation labels. 

% \E_{\itemvec, \yI \sim \sI} \textsc{Utility}(\cI,\yI^{(k)}) \neq

\begin{definition}[Power Curve]
\label{def:powercurve}
We define the power curve $\powercurve_{\comb}(x)$ as a piecewise linear function that matches the power score at every natural number, and for other values is defined by linear interpolation between those points.\footnote{For example, for $k_b \leq x \leq k_b+1$, 
$\powercurve_{\comb}(x) = (x - k_b) \powerscore_{\comb}(k_b{+}1) + (1{-}(x - k_b)) \powerscore_{\comb}(k_b)$.} The power at a non-integer value between $k_b$ and $k_b + 1$ can be interpreted as the expected score for a benchmark panel that sometimes has $k_b +1$ raters and sometimes one less. For example, a panel of 1.96 raters would have two raters for 96\% of the items and one rater for the rest of the items.
\end{definition}

\subsection{Rater Equivalence}

The expected score (i.e., utility) for the classifier may also depend, for some scoring functions, on the number of items. It is again defined as the limit, as the number of items grows.

\begin{definition} [Classifier Expected Score]
\label{def:classifierscore}

\begin{align*}
\classifierescore 
&= \lim_{n\rightarrow \infty}\E_{\itemFeaturesVecRV, \itemResponseStateVecRV}[\scoreFunction(c(\itemFeaturesVecRV), \itemResponseStateVecRV) ]
\end{align*}

\end{definition}

Given a classifier score \( \classifierescore \), we aim to find a benchmark panel size that produces the same power score. 
% The rater equivalence refers to the smallest value of the benchmark panel size where the power curve exceeds the classifier’s expected score. \yk{check}

%The power score for a non-integer benchmark panel size is defined as the value on the power curve, obtained through linear interpolation between the power scores of the nearest integer panel sizes. 

%When this value is an integer, such as 2, it means that the classifier’s prediction power is equivalent to that of 2 raters. When the value is a non-integer, such as 2.5, it indicates that the classifier's prediction power exceeds that of 2 raters but is still less than that of 3 raters. In this case, the classifier's prediction power equals a probabilistic mix of 2 and 3 raters, each with equal probability. 

%The rater equivalence, intuitively, is the smallest benchmark panel size whose expected score exceeds that of the classifier. More formally, the integer rater equivalence is the smallest integer for which the power score is higher than the classifier's expected score. The rater equivalence is a linear interpolation between the integer rater equivalence and the next smaller integer.

\begin{definition}[Rater Equivalence]
\label{def:ratereq}
\label{def:idealraterequivalence}

\begin{equation*}
\fratereqS(c) = \min \Big\{x|\powercurve_{\comb}(x) \geq \classifierescore\Big\}.
\end{equation*}
\end{definition}

If there is no such $x$, we define the rater equivalence as $\infty$.
%$$\iratereqG(c) = \argmin_{k_b \in \mathbb{N}} \Big( \powerscore_{\comb}^{G}(k_b) \geq \classifierescore^{G}\Big).$$ 

%$$\fratereqG(c)  = \iratereqG(c)-1+\frac{\classifierescore^{G}-\powerscore_{\comb}^{G}(\iratereqG(c)-1)}{\powerscore_{\comb}^{G}(\iratereq(c))-\powerscore_{\comb}^{G}(\iratereqG(c)-1)}. $$ 

Graphically, when the power curve is increasing, the rater equivalence is the x-value for the point where the horizontal line representing the score of the classifier intersects the power curve (1.96 in Figure~\ref{fig:ce_power_curve}, corresponding to a benchmark panel that has two raters for 96\% of items and one rater for the rest). %If there is no point of intersection, the rater equivalence is either zero or infinite.
Intuitively, this is the smallest panel size that would have the same expected utility as the classifier.
In the perverse case where the power curve is non-monotonic and there are multiple points of intersection, the rater equivalence is either the leftmost intersection point or zero.

\subsection{Calibrated Human Benchmark Classifier $\Rightarrow$ Minimal Rater Equivalence}
\label{sec:calibration_minimizes_rater_equiv}

%A classifier $c$ is calibrated if \[\E_{\itemvec}[\textsc{Utility}(\cI,\gI)]\geq \max_f \E_{\itemvec}[\textsc{Utility}(\mathbf{f}\circ \mathbf{c}(\itemvec),\gI)]\] where $f\circ c(i)=f(c(i))$ for all $i$.

Rater equivalence is affected by the choice of combiner function. If the combiner function is not effective, the human benchmark classifier is weak. This leads to an artificially high rater equivalence, making the classifier appear better than it is.  In this section, we discuss using a calibrated combiner, which minimizes rater equivalence with respect to a family of utility functions, including the commonly used cross-entropy. 

Intuitively, a calibrated classifier is one whose outputs can be correctly interpreted as event frequencies~\citep{b2d25122-884a-3c72-a8a5-06152481379b}. For example, if a weather forecaster is calibrated, it will rain on 40\% of the days on which it reports a 40\% chance of rain. Below is a more formal definition that applies more generally to settings where the evaluation labels may not directly correspond to an observable ground truth. 

\begin{definition}[Calibration]~A classifier $c$ is calibrated with respect to a joint distribution of input $Z$ and evaluation label $Y$ if, for all possible realized outputs $c(z)$:
% \[
% c(z) = \E_{Z, Y}[Y \mid c(Z) = c(z)].
% \]
\begin{equation*}
c(z) = \E_{Y, Z  \mid c(Z) = c(z)}\mleft[Y\mright].
\end{equation*}
\end{definition}

%\yk{Please make it consistent, throughout the paper: $\E Y$ or $\E(Y)$ or $\E[Y]$}

A realized evaluation label is always a discrete label. When it is represented as a one-hot vector, $\E(Y)$ is a vector of probabilities, one for each of the possible labels. Thus, $\E_{Y, Z \mid c(Z) = c(z) }[Y]@pos$ can be interpreted as the Bayesian posterior probability that the evaluation label is positive, given a particular observed realized classifier output $c(z)$.

% In the binary label setting, where we treat $Y$ as an indicator variable that is 1 when the label is positive, $\E_{Z, Y}[Y \mid c(Z) = c(z)]$
% can be interpreted as the Bayesian posterior probability that the evaluation label is positive.
% % Similarly, \( \E [\itemResponseStateRV \mid \benchmarkRatingPanelRV = \benchmarkRatingPanel] \) can be understood as the Bayesian posterior probability that an additional random rater's label will be positive or negative.

In the context of a human benchmark classifier, the input distribution $Z$ is the distribution of benchmark labels $\benchmarkRatingPanelRV$. 
In the subjective utility model, the combiner that is \emph{calibrated with respect to item response states}, $\combCaliSub$, has the property that for any item $i$ and any realization of $k_b$ labels, $\benchmarkRatingPanel$:
% \[
% \benchmarkPanelLabelCalibratedS = \E_{\itemResponseStateRV} \E_{\benchmarksVecRV_i \sim \itemResponseStateRV}[\itemResponseStateRV \mid \benchmarkRatingPanelRV = \benchmarkRatingPanel]
% \]
\begin{equation*}
\benchmarkPanelLabelCalibratedS = \E_{\itemResponseStateRV, \benchmarksVecRV_i \sim \itemResponseStateRV \mid \benchmarkRatingPanelRV = \benchmarkRatingPanel}[\itemResponseStateRV ]
\end{equation*}

% \[\khbenchcali(\blabels)=\E_{\itemvec, \mathcal{B}}[s_i\mid \blabels]\] A human benchmark classifier is calibrated if a 

% In the binary label setting, where we treat $\itemGTRV$ as an indicator variable that is 1 when the label positive, rather than as a one-hot vector, 
% \( \E [\itemGTRV \mid \benchmarkRatingPanelRV = \benchmarkRatingPanel] \)
% % \( \mathbb{E}[g_i \mid \blabels] \)
% can be interpreted as the Bayesian posterior probability that the ground truth is positive.
% Similarly, \( \E [\itemResponseStateRV \mid \benchmarkRatingPanelRV = \benchmarkRatingPanel] \) can be understood as the Bayesian posterior probability that an additional random rater's label will be positive or negative.

% In the non-binary setting, $\itemGTRV$ is a one-hot vector, where the \(\ell\)th entry is 1 if \(\ell\) is the ground truth label. For $\itemResponseStateRV$, the \(\ell\)th entry is the probability that a random rater labels \(\ell\). Even in this more general context, \( \E [\itemGTRV \mid \benchmarkRatingPanelRV = \benchmarkRatingPanel] \) can still be understood as the Bayesian posterior for the ground truth, and \( \E [\itemResponseStateRV \mid \benchmarkRatingPanelRV = \benchmarkRatingPanel] \) as the Bayesian posterior for a random rater's label.

In other words, the calibrated benchmark classifier is the one that produces the Bayesian posterior probability of a label drawn from the rater response state. Here, the posterior is conditional on having observed the realized benchmark labels for an item.

Note that simple combiners, such as majority vote or frequency, are not inherently calibrated. They do not even consider the prior, the base probability of the rater response state, and thus do not produce updated posterior probabilities. In Section~\ref{sec:abc}, we introduce a benchmark combiner, learned from benchmark data, that approaches the calibrated combiner $\benchmarkPanelLabelCalibratedS$ if the benchmark dataset that it learns from is large enough.

\begin{restatable}[Proper Scoring Rules Lead to Maximum Power Score]{claim}{propmaxpower}\label{prop:maxpower}

If the utility function corresponds to the cross-entropy scoring function, or any other proper scoring rule, the calibrated human benchmark classifier ($\comb$) leads to the maximum power score. That is, for any classifier $c$ and any potential alternative classifier $\comb'$:
\begin{align*}
\powerscore_{\comb}(k_b) &\geq \powerscore_{\comb'}(k_b).
\end{align*}
\end{restatable}
\begin{proofsketch}
Proper scoring rules, such as cross-entropy, are designed so that truthful, calibrated predictions maximize expected score~\citep{723c01ad-4d66-3cb4-987e-5f56192b514e, Gneiting01032007}. In our setting, the calibrated human benchmark classifier corresponds to the Bayesian posterior over human labels. By the definition of proper scoring rules, this benchmark achieves the highest expected score compared to any alternative prediction strategy.

As a result, when power scores are defined using a proper scoring rule, the calibrated benchmark always attains the maximum power score. No alternative combiner can outperform it in expectation. A full formal justification appears in Appendix~\ref{app:humanpanel}.
\end{proofsketch}

\begin{restatable}[Calibrated Classifier Plus Proper Scoring Rules Lead to Minimal Rater Equivalence]{claim}{propminre}\label{prop:minre}

If the utility function corresponds to the cross-entropy scoring function, or any other proper scoring rule, the calibrated human benchmark classifier ($\comb$) leads to the minimal rater equivalence. That is, for any classifier $c$ and any potential alternative classifier $\comb'$:
\begin{align*}
\fratereqcaliS(c) &\leq \fratereqS(c).
\end{align*}
\end{restatable}
\begin{proof}
Given Claim~\ref{prop:maxpower}, the calibrated combiners yield maximal power scores for all $k_b\in\mathbb{N}$. Since power curves are generated through linear interpolation of power scores, the calibrated combiners result in power curves that lie pointwise (weakly) above the power curve induced by other combiners. 
% Consequently, this leads to maximal power scores for all $x\geq 0$.

The rater equivalence is defined as the smallest panel size with power score greater than the expected score of the classifier. Substituting the calibrated human benchmark classifier for any other human benchmark classifier will not decrease the power score. Thus, it will not increase the rater equivalence. 
\end{proof}

% \gs{Do we want to say that using 50\% to round the calibrated value gives you the best combiner for accuracy?} Conjecture: analog of claim 8 is true and claim 9 is false.

\section{Human Panels as Benchmarks: Practice}
\label{sec:human_benchmark_implementation}

Empirically, we work with a rating matrix $\allVec$ with $n$ rows (items) and $\numraters$ labels for each item, as well as classifier outputs $c(\itemFeaturesVec)$ for those items. 
We use the empirical data to compute an empirical power curve and empirical classifier scores, from which we generate an empirical rater equivalence.

Section~\ref{subsec:estimating-directly-from-data} defines the process in more detail. Section~\ref{sec:empirical-approximation} explores the relationship between empirical and theoretical values for the power curve and rater equivalence. Section~\ref{sec:abc} describes the Anonymous Bayesian Combiner, which outputs the Bayesian posterior over labels, thus assuring that empirical rater equivalence is minimized when cross-entropy is the scoring function. Finally, section~\ref{sec:running_time} analyzes the running time of the empirical estimation process, showing that it is efficient enough to use in practice.

% We use the empirical data to compute an empirical power curve that approximates the theoretical power curve defined in the previous section. From the empirical power curve and the empirical classifier, we compute an empirical rater equivalence that approximates the theoretical rater equivalence.

% To compute the empirical power curve, we simulate human benchmark panels by sampling the empirical labels. We reserve one or more of the columns of $\allVec$ to use in evaluation panels. Subsets of the remaining columns form simulated benchmark panels. The expected utility estimate is the
% average score over many such selections.
% % The main advantage of this approach is that it does not assume any specific distribution over item response states. However, a key disadvantage is the limited domain of the power curve; 
% This procedure can estimate power scores up to the number of columns not held out for evaluation panels.

Appendix~\ref{app:simulatingpanels} explores a second method that simulates panels by generating synthetic data. From the empirical rating matrix and classifier outputs, it estimates the joint distribution of item response states and classifier outputs. From that inferred distribution, it is possible to compute power scores for any finite number of raters. The challenge, however, is the difficulty of accurately estimating the joint distribution. In practice one would need to make a heroic assumption about a parametric model for the distribution and then risk misestimating the parameters of the model due to insufficient data.

\setlength{\abovedisplayskip}{0pt}
\setlength{\belowdisplayskip}{0pt}

\subsection{Simulating Panels by Sampling}
\label{subsec:estimating-directly-from-data}

%Each item $i$ obtains labels $\allVec_{i[:]}$.
For any division of $\allVec$ into two panels, a human benchmark panel $\benchmarksVec$ and an evaluation panel $\evaluationsVec$, run the combiner on the benchmark labels for each item, and compute the majority vote of the evaluation labels for each item. Then run the scoring function. 
\footnote{Recall that in Section~\ref{sec:danger-of-majority-vote} we showed that $k_e=1$ is optimal under the subjective utility model but that for the objective utility model there is no theoretical justification for any particular $k_e$. For generality, we treat $k_e$ as a parameter here.}
That yields a score $\scoreFunction(\comb(\benchmarksVec),\textrm{maj}(\evaluationsVec))$.  
We define the empirical power score for a benchmark panel size $k_b$ as the average computed score for many divisions of $\allVec$ into panels where $\benchmarksVec$ has $k_b$ ratings per item.

\begin{definition} [Dataset Partitions] \label{def:allpartitions}
If $\benchmarksVec$ and $\evaluationsVec$ are disjoint sets of columns from $\allVec$, we refer to the pair $(\benchmarksVec, \evaluationsVec)$ as a partition (even though some columns may not be used). 
For fixed $k_b$ and $k_e$ with $k_b + k_e \leq \numraters$, define $\allpartitions$ as the set of all such pairs where $\benchmarksVec$ has $k_b$ columns and $\evaluationsVec$ has $k_e$ columns. Define $\sampledpartitions$ as a random subset of 200 pairs from $\allpartitions$, or all of them if there are fewer than 200.\footnote{Using only a subset of the potential partitions is just a way to reduce the running time of computing the empirical power curve and classifier expected score.}
\end{definition}

%Let $\elabels^{(k')}$ denote the majority vote of $k'$ labels that is drawn uniformly from $\elabels$. $\elabel^{(k')}$ denote the vector where the $i^{th}$ entry is $\elabels^{(k')}$.

\begin{definition} [Empirical Power Score] \label{def:emppowerscore}
Let $\comb$ be a combiner function and let \scoreFunction ~be a scoring function.
The empirical power score is the average score over  $(\benchmarksVec,\evaluationsVec) \in \sampledpartitions$:
%\begin{align*}
%\empowerscore_{\comb}(k_b, k_e) = \frac{\sum_{(\benchmarksVec,\evaluationsVec) \in \sampledpartitions }{\scoreFunction(\comb(\benchmarksVec),\textrm{maj}(\evaluationsVec))}}{|\sampledpartitions|}
%\end{align*}
% \begin{align*}
% & \empowerscore_{\comb}(k_b, k_e) =\\
% & \quad \mleft[\frac{
% \begin{aligned}
% \sum_{(\benchmarksVec,\evaluationsVec)\in\sampledpartitions} 
% \scoreFunction\big(
% \comb(\benchmarksVec),\, 
% \textrm{maj}(\evaluationsVec)
% \big)
% \end{aligned}
% }{
% |\sampledpartitions|
% }\mright]
% \end{align*}

\begin{align*}
\empowerscore_{\comb}(k_b, k_e) = 
\frac{
  \sum_{\benchmarksVec, \evaluationsVec}
  \scoreFunction\big(
    \comb(\benchmarksVec),\,
    \textrm{maj}(\evaluationsVec)
  \big)
}{
  |\sampledpartitions|
}
\end{align*}

\end{definition}

\begin{definition} [Empirical Power Curve] 
\label{def:emppowercurve}
The empirical power curve $\empowercurve$ is defined as a linear interpolation of the empirical power scores for integer numbers of raters. It can be calculated only for values up to $\numraters - k_e$, due to the limited number of ratings available for each item.
% Given a positive integer $k_e$, for all $x\in[0,\numraters - k_e]$:
% \begin{equation*}
% \empowercurve_{\comb}(x,k_e) =
% \begin{cases}
% \text{if } x \in \mathbb{N}, \\
% \quad\empowerscore_{\comb}(x,k_e) \\[4pt]
% \text{if } x \in (k_b, k_b{+}1),\; k_b \in \mathbb{N}. \\
% \quad\empowerscore_{\comb}(k_b,k_e) \\
% \quad + (x - k_b) \big( 
% \empowerscore_{\comb}(k_b{+}1,k_e) \\
% \qquad - \empowerscore_{\comb}(k_b,k_e) \big) \\ 
% \end{cases}
% \end{equation*}

\end{definition}

% \begin{definition}[Empirical Power Curve] \label{def:emppowercurve}
% Given a positive integer $k_e$, for all $k_b \in [0, \numraters - k_e]$ and $k_b \leq x \leq k_b + 1$, define:
% \begin{align*}
% \empowercurve_{\comb}(x, k_e) =\ 
%  \empowerscore_{\comb}(k_b, k_e) +\ \\
% (x - k_b) \left(
% \empowerscore_{\comb}(k_b{+}1, k_e)
% - \empowerscore_{\comb}(k_b, k_e)
% \right)
% \end{align*}
% \end{definition}

%$$\empowercurve_{\comb}(k_e) = \{\powerscore_{\comb}(k_b, k_e)\}_{k_b \in (1,\cdots,\numraters-k_e)}$$

Working with empirical data, we also do not know the classifier's score against the item response states. Thus, we compute an empirical classifier score analogously to the empirical power scores. Here, however, we use only the evaluation panel from each partition, ignoring the benchmark panel.
\begin{definition} [Empirical Mean Classifier Score]
\label{def:empirical_mean_classifier_score}
The empirical mean classifier score is the average score, over ($\benchmarksVec,\evaluationsVec) \in \sampledpartitions$:
%$$
%\emscore = \frac{\sum_{(\benchmarksVec,\evaluationsVec) \in \sampledpartitions }{\scoreFunction(c(\itemFeaturesVec),\textrm{maj}(\evaluationsVec))}}{|\sampledpartitions|}
%$$
\[
\emscore = \frac{
\sum_{\benchmarksVec, \evaluationsVec} \scoreFunction\big(
c(\itemFeaturesVec),\,
\operatorname{maj}(\evaluationsVec)
\big)
}{
|\sampledpartitions|
}
\]

\end{definition}
%The empirical power curve is an unbiased estimator of $\E_{\itemResponseStateVecRV}\E_{\allVecRV \sim \itemResponseStateVecRV}\textsc{Utility}(\textsc{comb}(\allbenchmarksRatingPanelVecRV),\textrm{maj}(\allevaluationsRatingPanelVecRV))$. average case

\begin{definition} [Empirical Rater Equivalence] \label{def:empraterequivalence}
The empirical rater equivalence is the smallest benchmark panel size for which the empirical power score is higher than the classifier's score:
%$$\emfratereq(c, k_e) = \min\Big\{x|\empowercurve_{\comb}(x, k_e) \geq \emscore, 0 \leq x \leq \numraters - k_e \Big\}$$ 
\begin{align*}
\emfratereq&(c, k_e) = \\
& \min \Big\{ x \,\Big|\,
\empowercurve_{\comb}(x, k_e) \geq \emscore
\Big\}
\end{align*}

%The fractional rater equivalence $\emfratereq(c, k_e)$ is again a linear interpolation between the integer rater equivalence and the next smaller integer. 
% $$\emfratereq(c, k_e)  = \emiratereq(c, k_e)-1+\frac{\emscore-\empowerscore_{\comb}(\iratereq(c)-1, k_e)}{\empowerscore_{\comb}(\iratereq(c), k_e)-\empowerscore_{\comb}(\iratereq(c)-1, k_e)}. $$ 

\end{definition}

\begin{remark}[No $k_b$ in range] \hspace{.4em}
    If no $k_b$ value is found where the human benchmark panels' average score is higher than the classifier's average score, we define the empirical rater equivalence as ``at least $\numraters-k_e$''. It could be infinite, or it could be some finite value larger than it was possible to measure given the available ratings.
\end{remark}

\begin{remark}[Choice of $k_e$] \hspace{.4em}
Because the benchmark panel $\benchmarksVec$ and the evaluation panel $\evaluationsVec$ come from the same shared rating panel $\allVec$, we have a binding constraint \( k_b + k_e \leq\numraters \). If we choose a smaller \( k_e \), we increase the maximum $k_b$ for which power scores can be calculated.
\end{remark}

\subsection{Do Empirical Values Approximate Theoretical?}
\label{sec:empirical-approximation}

We now consider whether the empirical power scores, classifier score, and rater equivalence are good proxies for the corresponding theoretical values of interest. We treat the matrix $\allVec$ as a realization of the random variable $\allVecRV$, based on items being independent draws from the unknown underlying distribution $D_{\X, \GT, \St}$ and individual ratings for each item in the cells of $\allVec$ being independent draws from the rater response state $\itemResponseState$ for that row.

We restrict our attention to utilities that are \textit{well behaved}, excluding, for example, pathological utilities that are determined by behavior on one randomly selected item. All common scoring functions are well-behaved according to the following definition.

\begin{definition}[Well Behaved]\label{def:well-behaved} A utility function defined by a scoring function is well-behaved if, for any joint distribution of classifier outputs and item states, as the number of items increases, the limit of the expected score exists and the empirical utility converges in probability (denoted $X_n \xrightarrow{P} X$)\footnote{A sequence of random variables \( X_n \) converges in probability to a random variable \( X \) if for every \( \epsilon > 0 \), $
\lim_{n \to \infty} P\left( |X_n - X| \geq \epsilon \right) = 0$.} to that limit. Formally, for vectors of $n$ items with properties $\itemFeaturesVecRV, \itemResponseStateVecRV$ drawn IID from any $D_{\X, \St}$,
\begin{align*}
\utilityFunction&\mleft(c(\itemFeaturesVecRV), \itemResponseStateVecRV\mright) \xrightarrow{P} \\
&\lim_{n\rightarrow \infty}\E_{\itemFeaturesVecRV,\itemResponseStateVecRV}[\utilityFunction(\itemClassifierOutputVecRV,\itemResponseStateVecRV)].
\end{align*}
\end{definition}

% Objective Utility Model subsubsection moved to Appendix (appdx_objective_utility.tex)

In the subjective utility model, it is reasonable to interpret the empirical rater equivalence with single rater evaluation panels ($k_e=1$) as a proxy for the theoretical rater equivalence. With few items, it gives a biased estimate on the high side, as we will see, but as the number of items grows it approaches the theoretical rater equivalence. We state results and provide some intuitions here. Full details appear in Appendix~\ref{app:empirical}.

\begin{restatable} [empirical power score approximates theoretical]{claim}{claimempiricalunbiased} 
\label{claim:empiricalunbiased}

In the subjective utility model, if we set $k_e=1$, for any $k_b \leq \numraters-1$, as the number of items $n$ in $\allVec$ increases, the empirical power score converges in probability to the theoretical power score:
% $$
% \begin{aligned}
% \lim_{n\rightarrow \infty}\E_{\allVecRV}[\empowerscore_{\comb}(k_b, 1)] &=  \powerscore_{\comb}(k_b)
% \end{aligned}
% $$
\begin{equation*}
\empowerscore_{\comb}(k_b, 1) \xrightarrow{P} \powerscore_{\comb}(k_b)
\end{equation*}
% If the utility function is item-separable, for any number of items the empirical power score is an unbiased estimator of the theoretical power score:
% $$
% \E_{\allVecRV}[\empowerscore_{\comb}(k_b, 1)] =  \powerscore_{\comb}(k_b)
% $$
\end{restatable}
\begin{proofsketch}
The key observation is that the theoretical and empirical processes both generate benchmark and evaluation ratings as random draws from the same underlying item response states.
Although the empirical score is computed from a finite matrix of rater labels, each sampled partition of that matrix is probabilistically equivalent to a set of independent draws under the theoretical model. As a result, the expected score for any partition matches the expected theoretical score. Given this equivalence, and assuming the scoring function is well-behaved, the law of large numbers ensures that the empirical average converges in probability to the theoretical expectation. Full details are provided in Appendix~\ref{app:empirical}.
\end{proofsketch}

\begin{restatable} [empirical classifier score approximates theoretical]{claim}{claimempclassifierconvergence}
\label{claim:empirical_classifier_score_limit}

In the subjective utility model, if we set $k_e=1$, as the number of items in $\allVec$ increases, the empirical classifier score $\emscore$ converges in probability to the expected classifier score:
% $$
% \begin{aligned}
% \lim_{n\rightarrow \infty}\E_{\allVecRV}[\emscore] &=  \classifierescore
% \end{aligned}
% $$
\begin{equation*}
\emscore \xrightarrow{P} \classifierescore
\end{equation*}
% If the utility function is item-separable, the result holds for any number of items in $\allVecRV$.
\end{restatable}
\begin{proofsketch}
This result mirrors the power score convergence argument. With $k_e = 1$, each classifier score is computed against a single randomly sampled rater per item. Since both the empirical and theoretical processes draw rater responses from the same distribution, their expectations match. As the number of items grows, the empirical average over samples from $\allVec$ converges in probability to the expected utility, assuming the scoring function is well-behaved. Averaging over multiple sampled partitions further reduces variance without biasing the result. Full details appear in Appendix~\ref{app:empirical}.
\end{proofsketch}

% Even with an item-separable utility function, where the empirical power score and classifier score are unbiased estimators of their theoretical counterparts, the 
The empirical rater equivalence will be a biased estimator of the theoretical rater equivalence, due to the convexity of the power curve. Averaging over many rater equivalence values is not the same as finding an equivalence between average scores.
For example, suppose that the theoretical power curve rises sharply between benchmark panel sizes 3 to 5, but more slowly from 5 to 7, and that the theoretical rater equivalence is 5. If we randomly generate many empirical matrices $\allVec$, we will sometimes have an empirical rater equivalence of 7 or even more, since the theoretical power score is rising slowly beyond 5 raters. But observing a rater equivalence of 3 or below would be much less likely, because of the large gap between the theoretical power curve and the theoretical classifier score in that region. Thus, the expected empirical rater equivalence will be above 5. 

However, we show that the empirical rater equivalence does converge in probability to the theoretical rater equivalence as the number of items grows. Intuitively, with enough items the empirical power score and classifier score will be arbitrarily close to their theoretical values, so the slope of the power curve will have little impact on the rater equivalence. To ensure convergence, we additionally assume that the power curve is monotonically increasing. This is a mild assumption, because any reasonable combiner function should, in expectation, provide better predictions of the next label when it combines more benchmark labels.
% The assumption of monotonicity prevents a problematic scenario where the theoretical power curve initially increases and subsequently decreases, with the classifier expected score coinciding precisely with the peak. In that scenario, as the empirical classifier score approaches the ideal score from above, the empirical rater equivalence remains at infinity. Only in the limit would the rater equivalence abruptly transition to the benchmark panel size that yields the peak power score.

\begin{restatable} [empirical rater equivalence approximates theoretical]{claim}{thmereqconverge}  \label{claim:ereqconverge}

In the subjective utility model with $k_e=1$, if the theoretical power curve is increasing and the theoretical rater equivalence $\fratereqS(c)$ is in $(0,\numraters-1)$, as the number of items in $\allVec$ increases, the empirical rater equivalence converges in probability to the theoretical rater equivalence.
\begin{equation*}
\emfratereq(c, 1) \xrightarrow{P}  \fratereqS(c)
\end{equation*}
\end{restatable}

\begin{proofsketch}
The theoretical rater equivalence is defined as the inverse of the power curve evaluated at the classifier's expected score. For large $n$, the empirical power curve converges to the theoretical one (Claim~\ref{claim:empiricalunbiased}) and becomes strictly increasing with high probability, making it invertible.

In parallel, the empirical classifier score converges to its expectation (Claim~\ref{claim:empirical_classifier_score_limit}). Together, these imply that the empirical rater equivalence---defined as the inverse of the empirical power curve applied to the empirical classifier score---converges in probability to the theoretical rater equivalence. 
\end{proofsketch}

% \subsection{Simulating Panels by Generating Synthetic Data}

% We explored two advanced topics in estimating power scores and rater equivalence. First, we examined an alternative approach based on synthetic data generation, which uses parametric assumptions to simulate rater response states. While this approach can extend power curve estimation to larger panel sizes, we show that it is highly sensitive to model misspecification and often yields unreliable results in practice. Second, we introduced the Anonymous Bayesian Combiner (ABC), a nonparametric and rater-agnostic method that approximates the Bayesian posterior over labels. We prove that ABC is calibrated and achieves maximal power scores under proper scoring rules. Full details, algorithms, and theoretical results for simulating panels and ABC are in Appendix~\ref{app:simulatingpanels}. Finally, we analyze the computational complexity of different combiner strategies in Appendix~\ref{app:running_time}.

\subsection{ABC: A Calibrated Human Benchmark Combiner}
\label{sec:abc}

We now introduce the Anonymous Bayesian combiner, $\abc$, which empirically approximates the theoretical calibrated combiner $\comb$ for the subjective utility model. We can think of the $\abc$ in two parts, a learner and an executor. The learner calculates the probability of a random draw of $b_k$ ratings for a randomly drawn item in the dataset producing any realized label sequence $\benchmarkRatingPanel$. The executor then uses the learned frequencies to predict a next rater's label for an item, conditional on some observed labels.
% \footnote{Note that the executor invokes the learner portion with one row excluded from $\allVec$, the item about which predictions will be made.} 
Intuitively, if two positive labels and a negative have been followed by a positive label on $90\%$ of other items, and the current item has received two positive and one negative label, $\abc$ will predict $0.9$.  Appendix~\ref{app:abc} provides details of the algorithm.

The $\abc$ is anonymous in the sense that it does not try to make any inferences based on the identities of the raters. It does not customize its predictions to the particular rater who will provide the next label, and it does not use any information about idiosyncratic rating patterns of the previous labelers; it treats all labels as if they were realized independent draws from an item's unknown rater response state, $\itemResponseState$. No assumptions are made about the distribution of rater response states for items, but it is assumed that all items' rater response states were realized independent draws from that distribution.

\begin{restatable}[ABC is calibrated]{claim}{abciscali}
\label{lem:abc_is_cali}

As the number of items $n$ approaches infinity, the anonymous Bayesian combiner converges to the Bayesian posterior, which is the combiner that is calibrated with respect to item response states, $\combCaliSub$. That is: for all $k_b<k_w$,
\begin{align*}
\lim_{n \rightarrow \infty} &\E_{\allVecRV}\mleft[\abc\mleft(\benchmarkRatingPanel, \allVecRV_{[:n][:]}\mright)\mright] \\
% \benchmarkPanelLabelCalibratedS = 
=&\E_{\itemResponseStateRV, \benchmarksVecRV_i \sim \itemResponseStateRV \mid \benchmarkRatingPanelRV = \benchmarkRatingPanel}\mleft[\itemResponseStateRV \mright]
\end{align*}
\end{restatable}

\begin{restatable}[ABC Leads to Maximal Power Scores and Minimal Rater Equivalence]{claim}{thmabsminratereq}\label{thm:abs_min_ratereq}

In the subjective utility model, if the utility function corresponds to the cross-entropy scoring function or any other proper scoring rule, in the limit as the number of items becomes infinity, the anonymous Bayesian combiner leads to maximal power scores $\{\powerscore_{\comb}(k_b)\}_{k_b \in \mathbb{N}}$ for all $k_b\leq \numraters-1$ and the minimal rater equivalence.
\end{restatable}

See Appendix~\ref{app:abc} for proofs.

\subsection{Running Time Analysis}
\label{sec:running_time}

To assess the practicality of the empirical processes, we analyze the running time. With simple combiners, such as majority vote and frequency, the total running time is proportional to the number of items times the square of the number of ratings per item. The Anonymous Bayesian Combiner's implementation requires more computation, especially with a naive implementation. However, memoization can make it computationally tractable.

\begin{restatable}[Running time of empirical power curve computation]{claim}{thmrunningtime}\label{thm:runningtime}

To compute the empirical power curve, when we use the frequency combiner, there exists an implementation such that the total running time is $O(n k_w^2)$; when we use Anonymous Bayesian Combiner, there exists an implementation such that the total running time is 
$O(n k_w^{|\La|+1})$.
\end{restatable}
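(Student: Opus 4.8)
\section*{Proof proposal for Theorem~\ref{thm:runningtime}}

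The plan is to bound Algorithm~\ref{alg:spc} directly, using two structural facts. First, by Algorithm~\ref{alg:ratersubset} the number of rater subsets examined at each survey size is capped by the constant $200$, so I may treat the middle loop of Algorithm~\ref{alg:spc} as contributing only a constant factor. Second, each scoring function in scope (accuracy, F1, Pearson correlation, cross-entropy) is evaluable on $n$ prediction--label pairs in $O(n)$ time in a single pass. Consequently, for a fixed $k$ and a fixed subset $rs$, the predictions $\mathbf{p}_i$ are produced once (line~\ref{line:predict-exclude}) and \emph{reused} across every held-out reference rater, so the \textsc{HScore} call (Algorithm~\ref{alg:hscore}) costs $O((K-k)|I|)=O(K|I|)$; summed over the constant number of subsets and the $K$ values of $k$, all scoring work is $O(K^2|I|)$ and can never dominate. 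The running time is therefore governed by the cost of line~\ref{line:predict-exclude}. For the frequency combiner the prediction is \emph{myopic}: it depends only on the $k$ selected labels of item $i$, costing $O(K)$ per item (or $O(1)$ with incremental counts), hence $O(K|I|)$ per $(k,rs)$ pair and $O(K^2|I|)$ overall, matching the first claim.

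For the Anonymous Bayesian Combiner the predictions are non-myopic, and the two terms of the minimum arise from two implementations of the bottleneck quantity \textsc{LabelSeqProb} (Algorithm~\ref{alg:jointdist}), which sums \textsc{ProbabilityOneItem} over all items via \textsc{SumOfProbabilities} and then applies the leave-one-out correction on line~\ref{line:exclude}. \textbf{Direct implementation.} Evaluating \textsc{LabelSeqProb} for a given observed label count vector takes one pass over the items; the denominator and the $|\La|$ numerators (one per candidate next label $\ell$) can be produced in that same pass by sharing the per-item combinatorial factor, so each call to Algorithm~\ref{alg:abc_predictor} costs $O(|I||\La|)$. There are $O(K)$ survey sizes and $O(|I|)$ predicted items per subset, i.e.\ $O(K|I|)$ predictions, giving $O(K|I|^2|\La|)$. \textbf{Memoized implementation.} Because the \textsc{ABC} is anonymous, \textsc{LabelSeqProb} depends on the observed labels only through their count vector, and the number of distinct count vectors of total size at most $K$ is $\sum_{k\le K}{k+|\La|-1 \choose |\La|-1}=O(K^{|\La|})$ by a stars-and-bars count. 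Memoizing the all-items \textsc{SumOfProbabilities} once per distinct count vector costs $O(|I||\La|)$ each, hence $O(K^{|\La|}|I||\La|)$ in total; crucially, the excluded-item correction does not trigger recomputation, since (following the footnote to Algorithm~\ref{alg:jointdist}) I cache the sum over \emph{all} items and, at each prediction, subtract the single excluded item's \textsc{ProbabilityOneItem} in $O(|\La|)$ time. Predictions and scoring are then lower order, so the total is $O(K^{|\La|}|I||\La|)$. Running whichever implementation is cheaper yields $\min\{O(K^{|\La|}|I||\La|),\,O(K|I|^2|\La|)\}$, with the crossover determined by whether $|I|$ or $K^{|\La|-1}$ is larger.

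The main obstacle is the \textsc{ABC} memoization, and specifically reconciling it with the leave-one-out requirement: the entire payoff of caching is to amortize a single $O(|I|)$ sum across the many $(k,\text{predicted item})$ pairs that reuse the same count vector, yet \textsc{LabelSeqProb} is item-specific because it must exclude the item being predicted. The delicate step is verifying that the cached global sum, corrected by an $O(|\La|)$ per-item subtraction, reproduces exactly the excluded-item quantity without re-traversing the dataset -- this is precisely what the ``entire matrix'' formulation in the footnote to Section~\ref{sec:abc} is designed to enable. The remaining points are routine and I would dispatch them directly: confirming that accuracy, F1, Pearson, and cross-entropy are $O(n)$-evaluable (cross-entropy using the $\epsilon$-clipping already introduced so $\log$ is defined), justifying the stars-and-bars bound on distinct count vectors, and checking that the capped subset count and the averaging over held-out raters contribute only a constant and an $O(K)$ factor respectively.
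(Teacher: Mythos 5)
Your proposal is correct and follows essentially the same route as the paper's proof: both separate the (lower-order) scoring cost from the combiner cost, obtain $O(K^2|I|)$ for the myopic frequency combiner, and for the \textsc{ABC} derive the two terms of the minimum from (i) an $O(|I||\La|)$-per-prediction pass over the items and (ii) memoizing \textsc{SumOfProbabilities} keyed on label count vectors, which is exactly enabled by caching the sum over the entire matrix $\mathbf{W}$ and subtracting the excluded item's contribution in $O(|\La|)$ time. Your write-up is somewhat more explicit than the paper's (the stars-and-bars bound on distinct count vectors and the shared per-item combinatorial factor), but the underlying ideas coincide.
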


The proof can be found in appendix~\ref{app:running_time}.

\section{Case Studies}
\label{sec:case-studies}

We illustrate the rater equivalence calculation through three case studies of previously published research where the ground truth was subjective and the labels were provided by human raters. In the first, the items were comments made on Wikipedia and the classifier was the initial version of the Jigsaw personal attacks classifier described in \citet{Wulczyn2017machina}. In the second, the items were news articles and the classifier was the one described in \citet{mitra2015credbank}. In the last, the items were pairs of images, and the classifier selected whichever image had accumulated a higher net upvote score on Reddit. In all three cases, multiple human ratings were available for each item, making it possible to compute a power curve and use held out raters to score both human benchmark classifiers as well as the classifiers. 

\begin{figure}[t]
     \centering
     \begin{tikzpicture}
\sffamily
\begin{axis}[
title = {ABC + Cross Entropy},
title style={align=center,yshift=-.1in},
legend style={font=\small,
	nodes={scale=0.7, transform shape},
	at={(0.0,1)},
	anchor=north west,
	draw=none,
	fill=none},
legend cell align={left},
width = 2.9in, height = 2.0in,
ylabel near ticks,
ylabel = {\small Info Gain ($c_k - c_0$)},
xlabel near ticks,
every tick label/.append style={font=\scriptsize},
xmin=-1,xmax=20,ymin=-0.2,ymax=0.5,
xtick={0, 4.24481358024944, 6, 7.630604332572648, 12, 15, 18},
xlabel={\small Number of raters},
xlabel style = {yshift=0.05in},
yticklabel style={
		/pgf/number format/fixed,
		/pgf/number format/precision=5
},
scaled y ticks=false
]

\addplot[solid, mark=o, mark options={scale=.8}, black]
plot [error bars/.cd, y dir = both, y explicit]
table[y error minus index=2, y error plus index=3]{
0	0.0	0.07525221458878162	0.05442542825419183
1	0.04936886784476313	0.08244307408780027	0.057326979832063696
2	0.08967483698039314	0.08308442776472269	0.05815275851045898
3	0.11399828715692373	0.07993917653486227	0.05770353453101118
4	0.1325425310709319	0.0834731800685713	0.05909259764601049
5	0.14690264061362823	0.08545050984834313	0.06026661638081077
6	0.1544160030071844	0.0830120258915803	0.058494214509282105
7	0.16972734463160244	0.07996888670249125	0.05667789460183281
8	0.17562078237907613	0.08585136555029205	0.0607082348017709
9	0.1810572497234622	0.08140301148472667	0.058213047232555326
10	0.1847156549820811	0.08036826770369804	0.05755891325532708
11	0.18924499518116428	0.07833138547306939	0.05684024740091331
12	0.19632511546171888	0.07824008393738408	0.05714416130063399
13	0.2001985098097599	0.08061770683693442	0.058455541435198544
14	0.20140807379596226	0.08249804258703586	0.059381011929158234
15	0.2055948961157512	0.07972533297491885	0.05724936082568094
16	0.20681277093253914	0.07946237373035142	0.057600105297727855
17	0.21195816158953873	0.0816174590864786	0.059076532684115224
18	0.20967809252439257	0.0794549626434039	0.05749578629389873
19	0.21375287621620143	0.079790656916293	0.05846538174218974
};

\addlegendentry{$k$ raters}

\addplot[mark=none, black, dashed, thick, samples=2, domain=-1:31] {0.13605808090085353};

\addlegendentry{Jigsaw Personal Attack Classifier}

\draw [black, fill=black, opacity=0.1] (axis cs:-1,0.04096808753102943) rectangle (axis cs:31,0.19121684373771108);

\addplot[mark=none, red, dashed, thick, samples=2, domain=-1:31] {0.17344377200890654};

\addlegendentry{Calibrated Personal Attack Classifier}

\draw [red, fill=red, opacity=0.1] (axis cs:-1,0.1071683900710867) rectangle (axis cs:31,0.2114280759331456);

\addplot[mark=*, mark options={scale=.8}, black, thick]
    table[]{
        4.24481358024944	0.13605808090085353
    };

\addplot[mark=none, black, dashed, thick, samples=2, domain=-1:31] coordinates {(4.24481358024944,0.13605808090085353) (4.24481358024944,-0.28233627462045101)};

\draw [black, fill=black, opacity=0.1] (axis cs:2.7990185911719845,1) rectangle (axis cs:6.14718915337283,-0.2);
\addplot[mark=*, mark options={scale=.8}, red, thick]
    table[]{
        7.630604332572648	0.17344377200890654
    };

\addplot[mark=none, red, dashed, thick, samples=2, domain=-1:31] coordinates {(7.630604332572648,0.17344377200890654) (7.630604332572648,-0.28233627462045101)};

\draw [red, fill=red, opacity=0.1] (axis cs:5.218138186868687,1) rectangle (axis cs:14.53696886605178,-0.2);

\end{axis}
\end{tikzpicture}
     \begin{tikzpicture}
\sffamily
\begin{axis}[
title = {Frequency Combiner + Cross Entropy},
title style={align=center,yshift=-.1in},
legend style={font=\small,
	nodes={scale=0.7, transform shape},
	at={(0.0,1)},
	anchor=north west,
	draw=none, fill=none},
legend cell align={left},
width = 2.9in, height = 2.0in,
ylabel near ticks,
ylabel = {\small Info Gain ($c_k - c_0$)},
xlabel near ticks,
every tick label/.append style={font=\scriptsize},
xmin=-1,xmax=20,ymin=-0.2,ymax=0.5,
xtick={0, 3, 6, 9.488535342452668, 12, 16.429822968722306, 19},
xlabel={\small Number of raters},
xlabel style = {yshift=0.05in},
yticklabel style={
		/pgf/number format/fixed,
		/pgf/number format/precision=5
},
scaled y ticks=false
]

\addplot[solid, mark=o, mark options={scale=.8}, black]
plot [error bars/.cd, y dir = both, y explicit]
table[y error minus index=2, y error plus index=3]{
0	0.001079029053995595	0.12235642133600799	0.08495701502200248
1	-0.05012933564167388	0.06725447803303009	0.0473831294493644
2	-0.030445423282303263	0.07911900016520201	0.05370657773919485
3	-0.00011208734824896815	0.09150527188039292	0.06158074175964545
4	0.023283612487382088	0.09775850951832188	0.0635301594574893
5	0.05135721792250991	0.10452087505167718	0.06837955871289025
6	0.05730269142865929	0.10362987366207554	0.06759671287102809
7	0.08257455275125025	0.11286727014481024	0.07141911581397398
8	0.0975970010073135	0.11489008093072006	0.07332267553651989
9	0.1073892359345412	0.11596059622581412	0.07314177475660943
10	0.12120953940483814	0.11766167712183118	0.07704827690115801
11	0.13073674633966081	0.11465710586472855	0.07410100897420002
12	0.1444538268526806	0.11649173794276424	0.07445236427401325
13	0.14459934884890924	0.11872252587124166	0.07583048874293796
14	0.15799465368231835	0.11864700973427078	0.07650699280320983
15	0.1641987666353859	0.11848844923224433	0.07559939366243196
16	0.16786904914961975	0.12098912246641735	0.07785072212240007
17	0.17560799135677435	0.12172935697092913	0.07606244270930929
18	0.17790998082754522	0.11703931379474491	0.0761289531595058
19	0.18584106449550059	0.11744874815541895	0.07629895138933435
};
\addlegendentry{$k$ raters}

\addplot[mark=none, black, dashed, thick, samples=2, domain=-1:31] {0.11414094262320251};
\addlegendentry{Jigsaw Personal Attack Classifier}

\draw [black, fill=black, opacity=0.1] (axis cs:-1,-0.11800307885557604) rectangle (axis cs:31,0.2281284405487617);

\addplot[mark=none, red, dashed, thick, samples=2, domain=-1:31] {0.1711954242638693};
\addlegendentry{Calibrated Personal Attack Classifier}

\draw [red, fill=red, opacity=0.1] (axis cs:-1,0.015349812034862076) rectangle (axis cs:31,0.24927654364317753);

\addplot[mark=*, mark options={scale=.8}, black, thick]
    table[]{
        9.488535342452668	0.11414094262320251
    };

\addplot[mark=none, black, dashed, thick, samples=2, domain=-1:31] coordinates {(9.488535342452668,0.11414094262320251) (9.488535342452668,-0.2)};

\draw [black, fill=black, opacity=0.1] (axis cs:0.9855370160596577,1) rectangle (axis cs:16.16206681625036,-1);
\addplot[mark=*, mark options={scale=.8}, red, thick]
    table[]{
        16.429822968722306	0.1711954242638693
    };

\addplot[mark=none, red, dashed, thick, samples=2, domain=-1:31] coordinates {(16.429822968722306,0.1711954242638693) (16.429822968722306,-0.2)};

\draw [red, fill=red, opacity=0.1] (axis cs:12.64232261151947,1) rectangle (axis cs:19.0,-1);

\end{axis}
\end{tikzpicture}
     \begin{tikzpicture}
\sffamily
\begin{axis}[
title = {ABC + AUC},
title style={align=center,yshift=-.1in},
legend style={font=\small,
	nodes={scale=0.7, transform shape},
	at={(0.0,1)},
	anchor=north west,
	draw=none,
	fill=none},
legend cell align={left},
width = 3.0in, height = 2.0in,
ylabel near ticks,
ylabel = {\small AUC score},
xlabel near ticks,
every tick label/.append style={font=\scriptsize},
xmin=-1,xmax=20,ymin=0.4,ymax=1.0,
xtick={0, 3.765792358444934, %3.947217120001903, 
6, 9, 12, 15, 18},
xlabel={\small Number of raters},
xlabel style = {yshift=0.05in},
yticklabel style={
		/pgf/number format/fixed,
		/pgf/number format/precision=5
},
scaled y ticks=false
]

\addplot[solid, mark=o, mark options={scale=.8}, black]
plot [error bars/.cd, y dir = both, y explicit]
table[y error minus index=2, y error plus index=3]{
0	0.5	0.0	0.0
1	0.608538290188263	0.013045014803354293	0.012779028248469992
2	0.6717120617764561	0.017829238503799583	0.017219901971411766
3	0.7051856527517107	0.01747519515474283	0.017989069472778674
4	0.7279262421182479	0.0163832247002641	0.017222304102014308
5	0.7320799952186493	0.01551480180424003	0.015455494966162164
6	0.7442021825310284	0.016693954556551094	0.0177671780370543
7	0.7444969895051412	0.01649464605964579	0.017453707767692617
8	0.7470975612070602	0.017566563229967636	0.019457144128046888
9	0.7556134450889516	0.018187807612442097	0.019811294156642734
10	0.7617765119741529	0.020206084298294402	0.022010512052541342
11	0.7659934266371806	0.020108283031531227	0.022582859933816368
12	0.767461695776738	0.020525714276187323	0.022301151650847006
13	0.772363186216702	0.022961851572468994	0.02253283645041304
14	0.7734227118281832	0.02326434910321562	0.02400812148610032
15	0.7754563375349438	0.023979962070387772	0.022981962332748296
16	0.7737622781795761	0.02376817616364435	0.023885428939405373
17	0.7749373961141801	0.024679020836417598	0.024235405856575465
18	0.7724008518552778	0.025602858580582644	0.024473966825947824
19	0.7749734742801917	0.029031072897587884	0.026321214655929515
};
\addlegendentry{$k$ raters}

\addplot[mark=none, black, dashed, thick, samples=2, domain=-1:31] {0.722600222315139};
\addlegendentry{Jigsaw Personal Attack Classifier}

\draw [black, fill=black, opacity=0.1] (axis cs:-1,0.669997143634911) rectangle (axis cs:31,0.7775795168153095);

\addplot[mark=none, red, dashed, thick, samples=2, domain=-1:31] {0.7267259283186279};
\addlegendentry{Calibrated Personal Attack Classifier}

\draw [red, fill=red, opacity=0.1] (axis cs:-1,0.6791907030216355) rectangle (axis cs:31,0.7816359379280209);

\addplot[mark=*, mark options={scale=.8}, black, thick]
    table[]{
        3.765792358444934	0.722600222315139
    };

\addplot[mark=none, black, dashed, thick, samples=2, domain=-1:31] coordinates {(3.765792358444934,0.722600222315139) (3.765792358444934,0.0)};

\draw [black, fill=black, opacity=0.1] (axis cs:2.008259474960942,1) rectangle (axis cs:19.0,0);

\addplot[mark=*, mark options={scale=.8}, red, thick]
    table[]{
        3.947217120001903	0.7267259283186279
    };

\addplot[mark=none, red, dashed, thick, samples=2, domain=-1:31] coordinates {(3.947217120001903,0.7267259283186279) (3.947217120001903,0.0)};

\draw [red, fill=red, opacity=0.1]  (axis cs:2.238112645176561,1) rectangle (axis cs:19.0,0);

\end{axis}
\end{tikzpicture}
     \begin{tikzpicture}
\sffamily
\begin{axis}[
title = {Frequency Combiner + AUC},
title style={align=center,yshift=-.1in},
legend style={font=\small,
	nodes={scale=0.7, transform shape},
	at={(0.0,1)},
	anchor=north west,
	draw=none},
legend cell align={left},
width = 3.0in, height = 2.0in,
ylabel near ticks,
ylabel = {\small AUC score},
xlabel near ticks,
every tick label/.append style={font=\scriptsize},
xmin=-1,xmax=20,ymin=0.4,ymax=1.0,
xtick={0, 3, 6, 9, 12, 15, 19.0},
xlabel={\small Number of raters},
xlabel style = {yshift=0.05in},
yticklabel style={
		/pgf/number format/fixed,
		/pgf/number format/precision=5
},
scaled y ticks=false
]

\addplot[solid, mark=o, mark options={scale=.8}, black]
plot [error bars/.cd, y dir = both, y explicit]
table[y error minus index=2, y error plus index=3]{
0	0.5	0.0	0.0
1	0.5147631588848356	0.004783127782616137	0.004917835638169832
2	0.5285107900804795	0.008654306101756215	0.009531520627389711
3	0.5461023382117635	0.017349700910805232	0.017720470657226928
4	0.5558145938581375	0.022168501747761638	0.023367320622532772
5	0.5738147322873032	0.027114867074992	0.028346545929523903
6	0.5777588241668529	0.0311867137336459	0.03213401353984291
7	0.5961150218575055	0.034184939809576265	0.0371667849121734
8	0.6075756935785265	0.03575553740947757	0.03724612348017742
9	0.620687767973551	0.03866049922229986	0.04128351428297883
10	0.6303022677865165	0.04045480035557303	0.04181093730511409
11	0.6362483202541023	0.04035256648891383	0.04154602738857227
12	0.642319708326077	0.042706615928502	0.04311002385158058
13	0.6505291054570168	0.043398597635662894	0.04368017308433192
14	0.654710196208361	0.04389495454327286	0.043748515090273554
15	0.6576129563769623	0.04413694348244346	0.0435675192359557
16	0.6680779860640713	0.04699399725143805	0.04511032542161997
17	0.6689358148444348	0.045439323853909475	0.043954037286727976
18	0.6695187997761471	0.04464962499499303	0.043327221424624196
19	0.6746435724825622	0.04614750994176653	0.04394419797876514
};
\addlegendentry{$k$ raters}

\addplot[mark=none, black, dashed, thick, samples=2, domain=-1:31] {0.7420979483689865};
\addlegendentry{Jigsaw Peresonal Attack Classifier}

\draw [black, fill=black, opacity=0.1] (axis cs:-1,0.6925680582772198) rectangle (axis cs:31,0.7785248072813234);

\addplot[mark=none, red, dashed, thick, samples=2, domain=-1:31] {0.7438469566587296};
\addlegendentry{Calibrated Personal Attack Classifier}

\draw [red, fill=red, opacity=0.1] (axis cs:-1,0.6945474523616997) rectangle (axis cs:31,0.7780547730537732);

\addplot[mark=*, mark options={scale=.8}, black, thick]
    table[]{
        19.0	0.7420979483689865
    };

\addplot[mark=none, black, dashed, thick, samples=2, domain=-1:31] coordinates {(19,0.7356822453082545) (19.0,0)};

\draw [black, fill=black, opacity=0.1] (axis cs:18.9,-1) rectangle (axis cs:19.1,1);
\addplot[mark=*, mark options={scale=.8}, red, thick]
    table[]{
        19	0.7438469566587296
    };

\addplot[mark=none, red, dashed, thick, samples=2, domain=-1:31] coordinates {(19,0.7420979483689865) (19,0)};

\draw [red, fill=red, opacity=0.1] (axis cs:18.9,-1) rectangle (axis cs:19.1,1);

\end{axis}
\end{tikzpicture}
     \caption{Rater equivalence between human labels and Jigsaw's Wikipedia comment personal attack classifier under different combiner and scoring function pairings. Error bars cover 95\% of 500 bootstrap item samples. }
     \label{fig:Wiki}
\end{figure}
\subsection{Personal Attacks}

Jigsaw and the Wikimedia Foundation collected annotations for 23,179 Wikipedia comments, each labeled by 10 to 20 raters for the presence of ``personal attack or harassment''~\citep{Wulczyn2017machina}. A classifier also produced a predicted probability of an attack label. We evaluate predictions using single-rater binary labels, applying scoring functions suitable for soft classifiers and discrete outcomes.

Figure~\ref{fig:Wiki} reports power curves and rater equivalence scores across two scoring functions (cross-entropy and AUC) and two combiners (ABC and frequency). Note that since ABC approximates the calibrated combiner, the cross-entropy scores for benchmark panels are higher with ABC (top left figure) than with the frequency combiner (top right figure).

For the classifier scores, we include both raw and isotonic-calibrated outputs.\footnote{Calibration used \texttt{CalibratedClassifierCV} from \texttt{sklearn}.} Calibration improves the classifier's cross-entropy score but has little effect on AUC.

Our preferred estimate, ABC with cross-entropy, yields a rater equivalence of 7.63 for the calibrated Jigsaw classifier and 4.24 for the uncalibrated Jigsaw classifier. Confidence intervals are wide because the classifier score lies on a flat portion of the power curve, though both classifier and power curve scores individually have narrow error bars.

The original paper presented a ``Human Baseline Comparison,'' a form of rater equivalence. They used a 10-rater majority vote as ground truth and either AUC or Spearman correlation as scoring metrics, along with the frequency and majority vote combiners. They reported that their model outperformed groups of three raters but underperformed groups of five---below our estimate of 7.63.

Differences stem from two sources. First, we score against a single randomly selected rater, which we argue better reflects subjective utility. Second, while their evaluation used a stratified sample enriched for blocked users, we analyzed a uniform random sample of 2,000 comments from the released dataset.

\begin{figure}[t]
     \centering
     \begin{tikzpicture}
\sffamily
\begin{axis}[
title = {ABC + Cross Entropy},
title style={align=center,yshift=-.1in},
legend style={font=\small,
	nodes={scale=0.7, transform shape},
	at={(1,1)},
	anchor=north east,
	draw=none, fill=none},
legend cell align={left},
width = 3.0in, height = 2.0in,
ylabel near ticks,
ylabel = {\small score},
xlabel near ticks,
every tick label/.append style={font=\scriptsize},
xmin=-1,xmax=20,ymin=-0.024726247920585966,ymax=0.0811545676300434,
xtick={0.4792188849549335, 3, 6, 9, 12, 15, 18},
xlabel={\small Number of raters},
xlabel style = {yshift=0.05in},
yticklabel style={
		/pgf/number format/fixed,
		/pgf/number format/precision=5
},
scaled y ticks=false
]

\addplot[solid, mark=o, mark options={scale=.8}, black]
plot [error bars/.cd, y dir = both, y explicit]
table[y error minus index=2, y error plus index=3]{
0	0.0	0.014414745851065391	0.013083300760035721
1	0.00478195181583585	0.013766446258585585	0.012869640402801918
2	0.008655639936390291	0.013292789077305245	0.012757180975514482
3	0.012405271731333656	0.013146222984928513	0.012698756683812928
4	0.01542167057124988	0.013140689750906653	0.012524584455967935
5	0.017901608428898208	0.012915212752321659	0.012683824855584391
6	0.02019363469459534	0.012793627807315189	0.012384147842770865
7	0.02234431884397059	0.012761143815558929	0.012498242857697517
8	0.023902777562520905	0.012586731061017487	0.012301999932207486
9	0.02559298029388657	0.012678796899583955	0.012306327649181714
10	0.026799335363795307	0.012592894337210225	0.012233259014667097
11	0.0283723913055397	0.012584669031107665	0.012298904564477797
12	0.029543739928732382	0.01249269189813973	0.012379253690605885
13	0.030743654991246094	0.012561911352821364	0.012318922126324683
14	0.03197818618175097	0.01240043052643891	0.012314333117889209
15	0.03244410985938473	0.012706457218075506	0.012277780470807631
16	0.033699687781902576	0.012542169594997321	0.012467902091836458
17	0.03429475754742517	0.0125467743092178	0.01195351561483815
18	0.0350506608137412	0.012510095309786529	0.012177460565237164
19	0.036311974083577225	0.012453325843877372	0.012348056820078113
};
\addlegendentry{$k$ raters}

\addplot[mark=none, black, dashed, thick, samples=2, domain=-1:31] {0.0022916016170930753};
\addlegendentry{CredWeb Classifier}

\draw [black, fill=black, opacity=0.1] (axis cs:-1,-0.012873933487272282) rectangle (axis cs:31,0.014766970458949924);

\addplot[mark=*, mark options={scale=.8}, black, thick]
    table[]{
        0.4792188849549335	0.0022916016170930753
    };

\addplot[mark=none, black, dashed, thick, samples=2, domain=-1:31] coordinates {(0.4792188849549335,0.0022916016170930753) (0.4792188849549335,-0.2)};

\draw [black, fill=black, opacity=0.1] (axis cs:0.18764968586431346,1) rectangle (axis cs:0.7815511971008702,-1);

\end{axis}
\end{tikzpicture}
     \caption{Rater equivalence between human labels of news credibility and CredBank's heuristic classifier for ABC and cross-entropy scorer. Error bars cover 95\% of 500 bootstrap samples.}
     \label{fig:credweb}
\end{figure}

\subsection{News Credibility}

The CredBank dataset contains 1,377 news events, each annotated by 30 crowd workers who viewed tweets related to the event~\citep{mitra2015credbank}. Labels were collected on a five-point credibility scale and binarized for analysis: ``certainly accurate'' versus everything else. A linguistic classifier was trained to predict the proportion of raters who would label an event as ``certainly accurate,'' using four discrete prediction buckets: $>90\%$, $80$–$90\%$, $60$–$80\%$, and $<60\%$~\citep{mitra2017parsimonious}. While the original paper reported precision and recall of 50–75\%, the practical informativeness of the classifier remains unclear.

We compute its rater equivalence using the Anonymous Bayesian Combiner and cross-entropy. To calibrate the classifier, we mapped each of its four output buckets to empirical probabilities: $83.1\%$, $80.0\%$, $77.6\%$, and $77.7\%$, respectively. The overall base rate of ``certainly accurate'' labels was $79.7\%$, yielding an information gain of just $0.0023$ bits.

Figure~\ref{fig:credweb} shows the resulting power curve. The classifier’s rater equivalence is 0.48, meaning its performance is much worse than a single human rater. The equivalent human process would acquire a human rater's label on fewer than half the items (48\%). For those, the human process would apply Bayesian inference (\ie, predicting $81.4\%$ if the label is ``certainly accurate,'' $73.1\%$ if not). For the remaining items where no labels are collected, the human process would output the default base rate prediction of $79.7\%$. 
% This suggests that, like many real-world models, the classifier performs worse than a single rater overall.

\begin{figure}[t]
     \centering
     \begin{tikzpicture}
\sffamily
\begin{axis}[
title = {ABC + Cross Entropy},
title style={align=center,yshift=-.1in},
legend style={font=\small,
	nodes={scale=0.7, transform shape},
	at={(1,1)},
	anchor=north east,
	draw=none, fill=none},
legend cell align={left},
width = 3.0in, height = 2.0in,
ylabel near ticks,
ylabel style = {yshift=-0.15in, align=center},
ylabel = {\small Info Gain ($c_k - c_0$)},
xlabel near ticks,
every tick label/.append style={font=\scriptsize},
xmin=-1,xmax=20,ymin=-0.06,ymax=0.1,
xtick={0, 2.3873774335172957, 6, 9, 12, 15, 18},
xlabel={\small Number of raters},
xlabel style = {yshift=0.05in},
yticklabel style={
		/pgf/number format/fixed,
		/pgf/number format/precision=5
},
scaled y ticks=false
]

\addplot[solid, mark=o, mark options={scale=.8}, black]
plot [error bars/.cd, y dir = both, y explicit]
table[y error minus index=2, y error plus index=3]{
0	0.0	0.0011378749810377897	0.0010781294186446022
1	0.0032834945791385683	0.0024546375796217212	0.00265600031761537
2	0.006714043745178144	0.003848878912517395	0.004474376906080324
3	0.009930098581262148	0.004742657997470867	0.005929434727149796
4	0.013225108606867098	0.005684406637073369	0.007038127383650727
5	0.015619809356476888	0.00625985346139224	0.007855904992378426
6	0.018119915739087222	0.006822264990619553	0.008342797845187988
7	0.019562116285638642	0.00707639406255911	0.008708792654585684
8	0.022026236693761803	0.007364015922582623	0.00944364437792633
9	0.02422099960942181	0.00780703070798483	0.0095890113738889
10	0.026258166293613994	0.008232057320598396	0.010177671879952777
11	0.02882180300772219	0.008470327138768607	0.010671320174686083
12	0.03038616280424067	0.008583456321584748	0.010563815572551816
13	0.03112734881280199	0.00909367835505892	0.010819749149304059
14	0.03337906471295615	0.009122907173771289	0.010994910929458368
15	0.03416882835563362	0.009363844562618073	0.011210460791887589
16	0.03514805408415389	0.00925100133948531	0.011225479317669262
17	0.0367746381230416	0.00934450326512326	0.011258661334347497
18	0.038011319812587985	0.009672552275190371	0.011516072041572567
19	0.03913558550478302	0.009427247551637152	0.011651258295654165
};
\addlegendentry{$k$ raters}

\addplot[mark=none, black, dashed, thick, samples=2, domain=-1:31] {0.007959870813631253};
\addlegendentry{Reddit Scores}
\draw [black, fill=black, opacity=0.1] (axis cs:-1,0.0003618121865849311) rectangle (axis cs:31,0.01584246728988936);

\addplot[mark=*, mark options={scale=.8}, black, thick]
    table[]{
        2.3873774335172957	0.007959870813631253
    };

\addplot[mark=none, black, dashed, thick, samples=2, domain=-1:31] coordinates {(2.3873774335172957,0.007959870813631253) (2.3873774335172957,-0.12012248614933219)};

\draw [black, fill=black, opacity=0.1] (axis cs:0.2665861473579115,1) rectangle (axis cs:5.26157480883952,-1);

\end{axis}
\end{tikzpicture}
     \begin{tikzpicture}
\sffamily
\begin{axis}[
title = {Frequency + Cross Entropy},
title style={align=center,yshift=-.1in},
legend style={font=\small,
	nodes={scale=0.7, transform shape},
	at={(1,1)},
	anchor=north east,
	draw=none, fill=none},
legend cell align={left},
width = 3.0in, height = 2.0in,
ylabel near ticks,
ylabel style = {yshift=-0.15in, align=center},
ylabel = {\small Info Gain ($c_k - c_0$)},
xlabel near ticks,
every tick label/.append style={font=\scriptsize},
xmin=-1,xmax=20,ymin=-0.06,ymax=0.1,
xtick={0, 3, 6, 9, 11.365866533738242, 15, 18},
xlabel={\small Number of raters},
xlabel style = {yshift=0.05in},
yticklabel style={
		/pgf/number format/fixed,
		/pgf/number format/precision=5
},
scaled y ticks=false
]

\addplot[solid, mark=o, mark options={scale=.8}, black]
plot [error bars/.cd, y dir = both, y explicit]
table[y error minus index=2, y error plus index=3]{
0	-9.819527018950502e-06	0.000339841951431441	0.00032060664389799154
1	-0.03175281702882449	0.00642685658508646	0.006818659740057242
2	-0.037977780834982044	0.0101164912367524	0.010241696418615165
3	-0.0347940240272685	0.011382459289266178	0.01199106881260481
4	-0.029357894619143843	0.012238668254737517	0.013040929200980989
5	-0.022683517688153576	0.01298727658453025	0.01287393946442239
6	-0.017374904566763827	0.013220040803760602	0.013553434324547142
7	-0.011569537016097198	0.012723545072424125	0.01342052625794421
8	-0.005877602798506243	0.012772041833066572	0.014007594827009084
9	2.782868716599829e-05	0.013617656138379175	0.013988114733284096
10	0.0033862260960156876	0.013515191875336008	0.014171629546730014
11	0.007732863714201232	0.01335413077958647	0.013946501052180094
12	0.010478111693573378	0.013434225186780169	0.01442736926851429
13	0.01464359427097306	0.013339678295775204	0.013820745696196401
14	0.016984797357905257	0.013265743149304732	0.01390051377496715
15	0.018976670340946522	0.013443043615486538	0.01405313521231999
16	0.022006925140082734	0.013531151093639449	0.013849176206227387
17	0.025120900542875746	0.013594898478743067	0.014334629449352931
18	0.026088521624558436	0.013275209476231065	0.01401085873214547
19	0.029351585240185663	0.013812850773703866	0.01427821903605797
};
\addlegendentry{$k$ raters}

\addplot[mark=none, black, dashed, thick, samples=2, domain=-1:31] {0.008737258076666032};
\addlegendentry{Reddit Scores}

\draw [black, fill=black, opacity=0.1] (axis cs:-1,0.0011762541709926522) rectangle (axis cs:31,0.017584647304972223);

\addplot[mark=*, mark options={scale=.8}, black, thick]
    table[]{
        11.365866533738242	0.008737258076666032
    };

\addplot[mark=none, black, dashed, thick, samples=2, domain=-1:31] coordinates {(11.365866533738242,0.008737258076666032) (11.365866533738242,-0.1)};

\draw [black, fill=black, opacity=0.1] (axis cs:7.781807801645638,1) rectangle (axis cs:15.938075094696948,-2);

\end{axis}
\end{tikzpicture}
     \begin{tikzpicture}
\sffamily
\begin{axis}[
title = {Majority Vote Combiner + F1},
title style={align=center,yshift=-.1in},
legend style={font=\small,
	nodes={scale=0.7, transform shape},
	at={(1,1)},
	anchor=north east,
	draw=none, fill=none},
legend cell align={left},
width = 3.0in, height = 2.0in,
ylabel near ticks,
ylabel = {\small F1 Score},
xlabel near ticks,
every tick label/.append style={font=\scriptsize},
xmin=-1,xmax=20,ymin=0.2,ymax=0.8,
xtick={0, 3.8741621049495345, 6, 9, 12, 15, 18},
xlabel={\small Number of raters},
xlabel style = {yshift=0.05in},
yticklabel style={
		/pgf/number format/fixed,
		/pgf/number format/precision=5
},
scaled y ticks=false
]

\addplot[solid, mark=o, mark options={scale=.8}, black]
plot [error bars/.cd, y dir = both, y explicit]
table[y error minus index=2, y error plus index=3]{
0	0.4996115070407326	0.02006285129814811	0.019472842948632163
1	0.5338277622492633	0.005811071856594463	0.006209527044766294
2	0.541421446443775	0.00694190168500175	0.007535255232121618
3	0.5521111956610298	0.008313362241741729	0.009271700277606154
4	0.5574558251163481	0.008582717637161008	0.009962374386849882
5	0.5640521338019033	0.009619121925572793	0.01066226160865369
6	0.5685315290847643	0.010101499454346241	0.011590772104749014
7	0.5729004678875935	0.010841256646735298	0.01201005172616565
8	0.5753836750390733	0.010686635888919604	0.011826681074558754
9	0.578362779886669	0.011030455726408817	0.012475490944165002
10	0.5800925910192336	0.010974134323958995	0.012685246036935549
11	0.5824137297392906	0.011623788647032285	0.013167491551731025
12	0.5851951134107203	0.011777761353774285	0.01379285518344553
13	0.5864302154836948	0.011841053869194096	0.013572109041403269
14	0.5888864537324455	0.011994290660183093	0.013811093748082204
15	0.5894226116918378	0.0121815059848851	0.014261599011926474
16	0.5908008363179222	0.012314529329012003	0.013972706050789707
17	0.5920824769745053	0.01231567882204243	0.01416496088567687
18	0.593685746859177	0.012440042459925027	0.014343310379264729
19	0.5943024422861276	0.012788437446256351	0.01449049893984855

};
\addlegendentry{$k$ raters}

\addplot[mark=none, black, dashed, thick, samples=2, domain=-1:31] {0.5567832681958661};
\addlegendentry{Reddit Scores}

\draw [black, fill=black, opacity=0.1] (axis cs:-1,0.5377119729408435) rectangle (axis cs:31,0.5743417674354667);

\addplot[mark=*, mark options={scale=.8}, black, thick]
    table[]{
        3.8741621049495345	0.5567832681958661
    };

\addplot[mark=none, black, dashed, thick, samples=2, domain=-1:31] coordinates {(3.8741621049495345,0.5567832681958661) (3.8741621049495345,0.0)};

\draw [black, fill=black, opacity=0.1] (axis cs:1.9362086311352926,1) rectangle (axis cs:7.47919409300499,0);

\end{axis}
\end{tikzpicture}
     \begin{tikzpicture}
\sffamily
\begin{axis}[
title = {Majority Vote Combiner + Agreement},
title style={align=center,yshift=-.1in},
legend style={font=\small,
	nodes={scale=0.7, transform shape},
	at={(1,1)},
	anchor=north east,
	draw=none, fill=none},
legend cell align={left},
width = 3.0in, height = 2.0in,
ylabel near ticks,
ylabel = {\small Agreement Score},
xlabel near ticks,
every tick label/.append style={font=\scriptsize},
xmin=-1,xmax=20,ymin=0.2,ymax=0.8,
xtick={0, 3, 4.573540675784381, 9, 12, 15, 18},
xlabel={\small Number of raters},
xlabel style = {yshift=0.05in},
yticklabel style={
		/pgf/number format/fixed,
		/pgf/number format/precision=5
},
scaled y ticks=false
]

\addplot[solid, mark=o, mark options={scale=.8}, black]
plot [error bars/.cd, y dir = both, y explicit]
table[y error minus index=2, y error plus index=3]{
0	0.30617647058823527	0.01670779030174907	0.015380444992368347
1	0.33471988795518204	0.012683976235204342	0.011400932728381163
2	0.34042500000000003	0.013394487981867709	0.012068087319336951
3	0.3486661585365854	0.013820046948785247	0.012704343295116993
4	0.3527922839506175	0.0138495832409255	0.012896867376358478
5	0.3558723437500001	0.013865286146457101	0.01306916697854299
6	0.3592371835443036	0.013979245780590566	0.013198681434599402
7	0.36146330128205123	0.014413057858641554	0.013502246628537828
8	0.36365211038961026	0.014683888068019735	0.013845495048863399
9	0.3660832236842104	0.014489028850194152	0.013772319834016444
10	0.3676614999999998	0.014739586327345788	0.013769997005987744
11	0.36868023648648646	0.014800123570426982	0.013945991294438187
12	0.37041712328767146	0.014762066278401764	0.01375942344762554
13	0.37145173611111104	0.014869544591373018	0.013961705408627179
14	0.37271443661971837	0.015054370132973727	0.014306281275476729
15	0.3733317857142859	0.015105823353293357	0.014262123075277966
16	0.3749838768115941	0.014946076506407058	0.014354376392143597
17	0.3760266544117645	0.015311626196137185	0.014252528215627613
18	0.3754246268656714	0.015202563343462427	0.014225981432657242
19	0.377605303030303	0.015173313222040519	0.014269111020383807

};
\addlegendentry{$k$ raters}

\addplot[mark=none, black, dashed, thick, samples=2, domain=-1:31] {0.3545588235294117};
\addlegendentry{Reddit Scores}

\draw [black, fill=black, opacity=0.1] (axis cs:-1,0.33875) rectangle (axis cs:31,0.36930882352941174);

\addplot[mark=*, mark options={scale=.1}, black, thick]
    table[]{
        4.573540675784381	0.3545588235294117
    };

\addplot[mark=none, black, dashed, thick, samples=2, domain=-1:31] coordinates {(4.573540675784381,0.3545588235294117) (4.573540675784381,0.0)};

\draw [black, fill=black, opacity=0.1] (axis cs:2.577510123964533,1) rectangle (axis cs:8.097004672264381,0);

\end{axis}
\end{tikzpicture}
     \caption{Rater equivalence between GuessTheKarma survey responses and Reddit scores under different combiner and scoring function pairings. Error bars cover 95\% of 500 bootstrap item samples.}
     \label{fig:gtk}
\end{figure}
\subsection{Social Rating Systems}
\label{sec:gtk}

How informative are Reddit’s net upvotes? \citet{glenski2018guessthekarma} collected images from a subreddit, then presented pairs of them in a game-like setting, asking users which they preferred. They concluded that net upvotes reliably indicated user preference only when score differences were large.

We reanalyze their dataset using rater equivalence. We treat Reddit’s ranking mechanism as a hard classifier that predicts the higher-voted image in each pair. A calibrated version outputs $57.8\%$ if the left image had more net votes, and $42.8\%$ otherwise. Figure~\ref{fig:gtk} shows power curves and rater equivalence values.

Using the Anonymous Bayesian Combiner and cross-entropy, the classifier’s rater equivalence is 2.39. That is, the Reddit net upvotes for the two images convey less information about human preference between the two images than a poll of three people. This is especially striking given that most images received hundreds or thousands of votes. 

Interestingly, the frequency combiner performs worse than random guessing when only a few raters are sampled. Its overconfident predictions, \eg, assigning near-certainty after one vote—are heavily penalized under cross-entropy. It reaches Reddit equivalence at 11.41 raters.

Figure~\ref{fig:gtk} also shows that the majority combiner yields consistent curve shapes for both F1 and agreement metrics, with similar equivalence values.

\section{Discussion}
\label{sec:discussion}
This paper introduced a unified framework for comparing machine classifiers to human judgment by measuring how many independent human raters would be needed to match a model’s performance: its \textbf{rater equivalence}. Central to this framework is the distinction between two conceptual roles played by human raters: \textbf{evaluation panels} and \textbf{benchmark panels}. We use this distinction to guide a broader reflection on practical deployment, model evaluation, and future directions.

\subsection{Evaluation Panels}
\textit{Evaluation panels} define the standard against which performance is measured. In the \textit{subjective utility model}, the evaluation panel reflects human preferences. Here, a single randomly selected rater yields an unbiased estimate of expected utility (Claim~\ref{claim:subjectivejudgmenet}), and increasing the panel size may actually introduce bias (Claim~\ref{claim:subjnonunbias}). Because the single-rater score is an unbiased estimate of subjective utility, absolute performance thresholds are directly interpretable under this model.

If the managerial question is whether to engage in the classification task at all, then the absolute score can be compared to a threshold of acceptable performance. For example, in the content moderation example, a company might decide not to engage in content moderation at all if the best available process does not achieve at least 80\% agreement with human labels. In this scenario, if human raters can not agree sufficiently on what content should be moderated, then the manager would conclude that it is impossible to have sufficient public legitimacy for content moderation. They might then structure the product to not use content moderation, as many newspapers did when they turned off the public comments feature on online news stories.

Our subjective utility model assumes that the reported label reflects the rater's true internal judgment. In reality, reporting noise may be present; for example, a rater may occasionally not examine the image carefully or click on the wrong button by accident. 
If the noise in reporting yields a distribution of reports that diverges from the distribution of underlying judgments, scoring against an individual report will no longer be an unbiased estimator of the expected score, which defines the subjective utility.
Future work could try to characterize the conditions under which biased reporting by the evaluation does or doesn't affect the relative ordering of classifier scores.
Only changes in relative ordering would affect the rater equivalence.

Our formal model assumes a discrete label space. Our results on panel sizes consider only the majority vote as the combiner function. It is possible, however, to treat labels on a Likert scale (1-5) or letter grades (A-F) as an interval scale and use the mean as the combiner. The software library we provide includes mean as a combiner function and correlation as a scoring function, and it has been used, for example, in computing rater equivalence for Likert scale answers \citep{doi:10.1177/26339137231173407}.
With scoring functions that are linear in the evaluation label, such as percent agreement or mean squared error, scoring against the mean of a panel yields the same score as the mean of scores against each individual rater, so the choice of panel size becomes irrelevant. With non-linear scoring functions, such as correlation or cross-entropy, however, the choice of panel size can affect the score and thus the rater equivalence. 

The core result — that scoring against a single rater provides an unbiased estimate of subjective utility — extends directly, since the proof relies only on linearity of expectation. Indeed, it extends even to truly continuous label space such as the real numbers from 0-100. However, the Anonymous Bayesian Combiner, which relies on counting discrete label sequences, would need adaptation for interval or continuous settings, perhaps through discretization or a parametric model of the response state distribution.

In contrast, in the \textit{objective utility model}, there exists a correct label for each item and panels of human evaluators serve as the best available proxy. But the reliability of such proxies depends on inter-rater reliability, panel size and the scoring function. As we show in Appendix~\ref{appdx:objective-utility}, scoring against a finite evaluation panel does not yield an unbiased estimate of the score against ground truth (Claim~\ref{claim:objnonunbiased}). That means there is no principled basis for a manager to set an absolute deployment threshold: a score of 80\% against a human panel may correspond to a substantially different score against the unknown ground truth.
With estimates of the correlation between human labels and ground truth, it would be possible to bound the error in the absolute score of a classifier against the majority vote of certain panel sizes. Without any ground truth labels, however, such estimates would have to be driven by assumptions rather than empirical evidence.

Moreover, bigger evaluation panels are not guaranteed to be more reliable than smaller ones (Claims~\ref{prop:acc} and~\ref{prop:acc2}). That means that there is no principled basis for a manager to set a relative deployment threshold either: a classifier that outperforms another on a given evaluation panel may not outperform it on the ground truth. From a theory standpoint, this is a somewhat dissatisfying finding; a valuable direction for future research would be to characterize conditions under which bigger evaluation panels are guaranteed to improve reliability. 

These negative results have a clear practical implication: when the objective utility model is appropriate, managers should make every effort to obtain ground truth evaluation labels, even if only for a sample of items. As we argued in the introduction, the need for ground truth under the objective utility model is rarely an insurmountable obstacle, since objective ground truth is typically the kind of thing that can eventually be observed.

Crucially, classifiers and benchmark panels must be evaluated against the same evaluation panel for a fair, apples-to-apples comparison. It may be tempting, for example, to compare the correlation of a classifier with the majority vote of all available raters to the correlation of one human with another human \citep{doi:10.1126/sciadv.abf4393}, but that comparison would be misleading.

\subsection{Human Benchmark Panels}

Our framework generalizes the classic comparison target of machine learning systems: \textit{human-level performance}. We include \textit{team-level performance}, where more than one human participates in the benchmark panel. We simulate benchmark panels by assuming independent raters whose labels are combined using a \textit{combiner function}. While simple combiners like frequency voting perform poorly at small sizes (see Figures~\ref{fig:Wiki} and~\ref{fig:gtk}), the \textit{Anonymous Bayesian Combiner} (ABC) was shown to be optimal under proper scoring rules (Theorem~\ref{thm:abs_min_ratereq}), squeezing the most information out of a limited panel.

In some cases, the benchmark panel represents a human process that is currently in use and might be replaced by an automated classifier. For example, a content moderation team might currently have each comment reviewed by a single human moderator, and the question is whether to replace the human moderators with an LLM-based classifier. 

In other cases, the benchmark panel represents a hypothetical comparison point, a thought-experiment baseline (\eg, a team of peer reviewers), or a construct we have intuitions about (\eg, ``how well would I do?''). For example, a company may not currently engage in content moderation because it would be too expensive to hire human moderators for all the items that would need to be reviewed. However, the manager may be comfortable in principle with the decisions that human panels would make. If the automated classifier can be shown to perform at least that well on a set of evaluation items, then the manager may be comfortable deploying content moderation using the classifier. 

The human benchmark panel framework is particularly timely given the growing use of LLMs as substitutes for human raters in annotation and classification tasks \citep{rathje2024gpt}. Current benchmarks for LLMs like MMLU, ARC, and GSM8K focus on objective correctness on multiple-choice tasks.  In tasks that require nuanced or context-dependent human judgment such as essay grading, content moderation, or peer review, disagreement among humans is common, and ``ground truth'' is ill-defined or unknowable. When an LLM is used to label sentiment, toxicity, or other subjective constructs, the natural question is whether its labels are as good as those of a human rater, or a benchmark panel of human raters. 

Rater equivalence answers this directly: it quantifies how many human raters the LLM is equivalent to, under a given scoring function. If an LLM achieves a rater equivalence of three on a content moderation task, a manager knows that deploying it replaces roughly the informational value of a three-person human panel. Moreover, such comparisons can be translated into meaningful numeraires such as the \textit{dollar cost of equivalent human labor} or the number of LLM tokens required for a given rater equivalence level.

This framing also surfaces an important concern: for many subjective tasks, different demographic or cultural subgroups may systematically differ in their judgments. An LLM used as a rater may not faithfully represent the judgments of all such subgroups.  \citet{wang2025largelanguagemodelsreplace} show that LLMs tend to misportray the perspectives of marginalized identity groups, producing responses that resemble out-group stereotypes rather than actual in-group judgments. Similarly, content moderation judgments may differ systematically between politically liberal and conservative raters, and thus rater equivalence computed against one subgroup may not generalize to the other. 

Computing rater equivalence separately for subgroups of interest, based on labels from raters from those subgroups, would reveal whether an LLM rater is systematically worse for some populations than others.
When labels from a target subgroup are scarce, or the available rater pool is unrepresentative of the target population, the per-annotator models trained by jury learning \citep{gordon2022jury} could in principle be repurposed to label evaluation items, with simulated juries of specified composition providing subgroup-specific or population-weighted scoring targets.

\subsection{Robustness and the Fixed Environment Fallacy}
A critical assumption in our analysis is that the distribution of items remains fixed regardless of which classifier or panel is deployed. In some domains, like cancer detection or interstellar object classification, this assumption is plausible. But in others, especially social systems, the environment may respond to the classifier.

For example, if a peer review system adopts an LLM with a rater equivalence of 5, authors may begin tailoring submissions to exploit its known heuristics. This ``fixed environment fallacy'' extends to both the item pool (\eg, strategic adaptation) and the rater pool (\eg, learning effects or demographic shifts), as well as interactions between them: a classifier that alters the prevalence of certain item types can trigger \emph{prevalence-induced concept change}, shifting how raters categorize borderline items \citep{levari2018prevalence}. Our framework, like most evaluation approaches, does not yet account for this adaptive feedback loop, highlighting an important limitation and opportunity for future work.

Our framework also assumes that each item has a fixed rater response state $s_i$ throughout the rating process. In some settings, such as a breaking news story where new information changes people's judgments about a previously rated item, this assumption may not hold. Extending the framework to handle non-stationary items is an interesting direction for future work.

\subsection{Beyond Performance Goals}
Fairness, accountability, and institutional goals may override mean performance metrics. Experts may be better equipped to detect distribution shifts or contextual subtleties. Institutional processes like peer review panels may serve functions beyond prediction, such as stewardship of disciplinary norms, attribution of responsibility, or training new scholars. Thus, even if a classifier has higher rater equivalence, its substitution may be undesirable for reasons beyond informational value.

%\yk{Add in discussion? Rater Expertise and Cost Models: The equivalence value is influenced by rater expertise, with lower-expertise raters generally resulting in higher equivalence values. While this introduces variability, it can be managed through cost models. On platforms like Amazon Mechanical Turk, high-expertise raters typically charge between \$0.10 and \$0.50 per label, while less experienced raters may charge as little as \$0.01 to \$0.05. By converting the equivalence number into a cost model that accounts for rater expertise, we can better assess whether to use human annotators or automated classifiers.}

\subsection{Summary and Takeaways}

We close with several takeaways:

\begin{itemize}

  \item Under the \textbf{subjective utility model}, which typically applies when ground truth labels are unavailable, score classifiers against a \textbf{single rater at a time} and average the scores. This yields an unbiased estimate of expected utility; combining labels into a majority vote does not. 

  \item Under the \textbf{objective utility model}, every effort should be made to obtain ground truth labels. If unavailable, conclusions drawn from absolute or relative rankings should be made with caution.

    \item \textbf{Evaluation panels} must be shared across all methods being compared and selected in alignment with either subjective or objective utility goals.

  \item \textbf{Benchmark panels} define the human alternative. Their size and composition should be chosen carefully based on the application domain.
    
    \item \textbf{Rater equivalence} provides an intuitive and interpretable measure of model utility when human raters are a practical or hypothetical alternative to the classifier being evaluated.
    
    \item The \textbf{Anonymous Bayesian Combiner} (ABC) is a practical and theoretically optimal method for simulating benchmark panels, particularly useful in settings with limited rater data.
    
    \item Be wary of the \textbf{fixed environment fallacy}. Deployment of classifiers may change the distribution of inputs and the context in which they operate.
    
    \item \textbf{Non-informational criteria}, such as fairness, robustness to distributional shifts, and institutional roles, may also guide decisions about model deployment.
\end{itemize}

Ultimately, our goal is to clarify the meaning of performance metrics that involve people as both providers of evaluation labels and as benchmarks for comparison. As machine learning systems increasingly engage with human-centered domains, our framework offers a more grounded and interpretable lens on model performance.

\bibliography{ref} % if more than one, comma separated

@inproceedings{williams2018broad,
  title={A broad-coverage challenge corpus for sentence understanding through inference},
  author={Williams, Adina and Nangia, Nikita and Bowman, Samuel},
  booktitle={Proceedings of the 2018 Conference of the North American Chapter of the Association for Computational Linguistics: Human Language Technologies, Volume 1 (Long Papers)},
  pages={1112--1122},
  year={2018}
}

@misc{wang2025largelanguagemodelsreplace,
      title={Large language models that replace human participants can harmfully misportray and flatten identity groups}, 
      author={Angelina Wang and Jamie Morgenstern and John P. Dickerson},
      year={2025},
      eprint={2402.01908},
      archivePrefix={arXiv},
      primaryClass={cs.CY},
      url={https://arxiv.org/abs/2402.01908}, 
}

@article{rathje2024gpt,
  title={GPT is an effective tool for multilingual psychological text analysis},
  author={Rathje, Steve and Mirea, Dan-Mircea and Sucholutsky, Ilia and Marjieh, Raja and Robertson, Claire E and Van Bavel, Jay J},
  journal={Proceedings of the National Academy of Sciences},
  volume={121},
  number={34},
  pages={e2308950121},
  year={2024},
  publisher={National Academy of Sciences}
}

@article{b2d25122-884a-3c72-a8a5-06152481379b,
 ISSN = {01621459, 1537274X},
 URL = {http://www.jstor.org/stable/2287720},
 abstract = {Suppose that a forecaster sequentially assigns probabilities to events. He is well calibrated if, for example, of those events to which he assigns a probability 30 percent, the long-run proportion that actually occurs turns out to be 30 percent. We prove a theorem to the effect that a coherent Bayesian expects to be well calibrated, and consider its destructive implications for the theory of coherence.},
 author = {A. P. Dawid},
 journal = {Journal of the American Statistical Association},
 number = {379},
 pages = {605--610},
 publisher = {[American Statistical Association, Taylor & Francis, Ltd.]},
 title = {The Well-Calibrated Bayesian},
 urldate = {2025-09-30},
 volume = {77},
 year = {1982}
}

@inproceedings{lam2003evaluating,
  title={Evaluating classifiers by means of test data with noisy labels},
  author={Lam, Chuck P and Stork, David G},
  booktitle={IJCAI},
  volume={3},
  pages={513--518},
  year={2003}
}

@inproceedings{DBLP:conf/cvpr/PatriniRMNQ17,
  author       = {Giorgio Patrini and
                  Alessandro Rozza and
                  Aditya Krishna Menon and
                  Richard Nock and
                  Lizhen Qu},
  title        = {Making Deep Neural Networks Robust to Label Noise: {A} Loss Correction
                  Approach},
  booktitle    = {2017 {IEEE} Conference on Computer Vision and Pattern Recognition,
                  {CVPR} 2017, Honolulu, HI, USA, July 21-26, 2017},
  pages        = {2233--2241},
  publisher    = {{IEEE} Computer Society},
  year         = {2017},
  url          = {https://doi.org/10.1109/CVPR.2017.240},
  doi          = {10.1109/CVPR.2017.240},
  timestamp    = {Fri, 24 Mar 2023 00:02:53 +0100},
  biburl       = {https://dblp.org/rec/conf/cvpr/PatriniRMNQ17.bib},
  bibsource    = {dblp computer science bibliography, https://dblp.org}
}

@inproceedings{10.5555/3327546.3327707,
author = {Hendrycks, Dan and Mazeika, Mantas and Wilson, Duncan and Gimpel, Kevin},
title = {Using trusted data to train deep networks on labels corrupted by severe noise},
year = {2018},
publisher = {Curran Associates Inc.},
address = {Red Hook, NY, USA},
abstract = {The growing importance of massive datasets with the advent of deep learning makes robustness to label noise a critical property for classifiers to have. Sources of label noise include automatic labeling, non-expert labeling, and label corruption by data poisoning adversaries. In the latter case, corruptions may be arbitrarily bad, even so bad that a classifier predicts the wrong labels with high confidence. To protect against such sources of noise, we leverage the fact that a small set of clean labels is often easy to procure. We demonstrate that robustness to label noise up to severe strengths can be achieved by using a set of trusted data with clean labels, and propose a loss correction that utilizes trusted examples in a data-efficient manner to mitigate the effects of label noise on deep neural network classifiers. Across vision and natural language processing tasks, we experiment with various label noises at several strengths, and show that our method significantly outperforms existing methods.},
booktitle = {Proceedings of the 32nd International Conference on Neural Information Processing Systems},
pages = {10477–10486},
numpages = {10},
location = {Montr\'{e}al, Canada},
series = {NIPS'18}
}

@inproceedings{
sukhbaatar2015training,
title={Training convolutional networks with noisy labels},
author={Sukhbaatar, Sainbayar and Bruna, Joan and Paluri, Manohar and Bourdev, Lubomir and Fergus, Rob},
booktitle={International Conference on Learning Representations},
year={2015},
}

@inproceedings{10.5555/2999611.2999745,
author = {Natarajan, Nagarajan and Dhillon, Inderjit S. and Ravikumar, Pradeep and Tewari, Ambuj},
title = {Learning with noisy labels},
year = {2013},
publisher = {Curran Associates Inc.},
address = {Red Hook, NY, USA},
abstract = {In this paper, we theoretically study the problem of binary classification in the presence of random classification noise—the learner, instead of seeing the true labels, sees labels that have independently been flipped with some small probability. Moreover, random label noise is class-conditional— the flip probability depends on the class. We provide two approaches to suitably modify any given surrogate loss function. First, we provide a simple unbiased estimator of any loss, and obtain performance bounds for empirical risk minimization in the presence of iid data with noisy labels. If the loss function satisfies a simple symmetry condition, we show that the method leads to an efficient algorithm for empirical minimization. Second, by leveraging a reduction of risk minimization under noisy labels to classification with weighted 0-1 loss, we suggest the use of a simple weighted surrogate loss, for which we are able to obtain strong empirical risk bounds. This approach has a very remarkable consequence — methods used in practice such as biased SVM and weighted logistic regression are provably noise-tolerant. On a synthetic non-separable dataset, our methods achieve over 88\% accuracy even when 40\% of the labels are corrupted, and are competitive with respect to recently proposed methods for dealing with label noise in several benchmark datasets.},
booktitle = {Proceedings of the 26th International Conference on Neural Information Processing Systems - Volume 1},
pages = {1196–1204},
numpages = {9},
location = {Lake Tahoe, Nevada},
series = {NIPS'13}
}

@article{10.1109/TPAMI.2015.2456899,
author = {Liu, Tongliang and Tao, Dacheng},
title = {Classification with Noisy Labels by Importance Reweighting},
year = {2016},
issue_date = {March 2016},
publisher = {IEEE Computer Society},
address = {USA},
volume = {38},
number = {3},
issn = {0162-8828},
url = {https://doi.org/10.1109/TPAMI.2015.2456899},
doi = {10.1109/TPAMI.2015.2456899},
abstract = {In this paper, we study a classification problem in which sample labels are randomly corrupted. In this scenario, there is an unobservable sample with noise-free labels. However, before being observed, the true labels are independently flipped with a probability $rho in [0,0.5)$ , and the random label noise can be class-conditional. Here, we address two fundamental problems raised by this scenario. The first is how to best use the abundant surrogate loss functions designed for the traditional classification problem when there is label noise. We prove that any surrogate loss function can be used for classification with noisy labels by using importance reweighting, with consistency assurance that the label noise does not ultimately hinder the search for the optimal classifier of the noise-free sample. The other is the open problem of how to obtain the noise rate  $rho$ . We show that the rate is upper bounded by the conditional probability  $P(hat{Y}|X)$  of the noisy sample. Consequently, the rate can be estimated, because the upper bound can be easily reached in classification problems. Experimental results on synthetic and real datasets confirm the efficiency of our methods.},
journal = {IEEE Trans. Pattern Anal. Mach. Intell.},
month = mar,
pages = {447–461},
numpages = {15}
}

@ARTICLE{6342929,
  author={Manwani, Naresh and Sastry, P. S.},
  journal={IEEE Transactions on Cybernetics}, 
  title={Noise Tolerance Under Risk Minimization}, 
  year={2013},
  volume={43},
  number={3},
  pages={1146-1151},
  keywords={Noise;Risk management;Noise measurement;Training data;Training;Vectors;Fasteners;Label noise;loss functions;noise tolerance;risk minimization},
  doi={10.1109/TSMCB.2012.2223460}}

@article{GHOSH201593,
title = {Making risk minimization tolerant to label noise},
journal = {Neurocomputing},
volume = {160},
pages = {93-107},
year = {2015},
issn = {0925-2312},
doi = {https://doi.org/10.1016/j.neucom.2014.09.081},
url = {https://www.sciencedirect.com/science/article/pii/S0925231215001204},
author = {Aritra Ghosh and Naresh Manwani and P.S. Sastry},
keywords = {Classification, Label noise, Loss function, Risk minimization, Noise tolerance},
abstract = {In many applications, the training data, from which one needs to learn a classifier, is corrupted with label noise. Many standard algorithms such as SVM perform poorly in the presence of label noise. In this paper we investigate the robustness of risk minimization to label noise. We prove a sufficient condition on a loss function for the risk minimization under that loss to be tolerant to uniform label noise. We show that the 0–1 loss, sigmoid loss, ramp loss and probit loss satisfy this condition though none of the standard convex loss functions satisfy it. We also prove that, by choosing a sufficiently large value of a parameter in the loss function, the sigmoid loss, ramp loss and probit loss can be made tolerant to non-uniform label noise also if we can assume the classes to be separable under noise-free data distribution. Through extensive empirical studies, we show that risk minimization under the 0–1 loss, the sigmoid loss and the ramp loss has much better robustness to label noise when compared to the SVM algorithm.}
}

@inproceedings{10.5555/3298483.3298518,
author = {Ghosh, Aritra and Kumar, Himanshu and Sastry, P. S.},
title = {Robust loss functions under label noise for deep neural networks},
year = {2017},
publisher = {AAAI Press},
abstract = {In many applications of classifier learning, training data suffers from label noise. Deep networks are learned using huge training data where the problem of noisy labels is particularly relevant. The current techniques proposed for learning deep networks under label noise focus on modifying the network architecture and on algorithms for estimating true labels from noisy labels. An alternate approach would be to look for loss functions that are inherently noise-tolerant. For binary classification there exist theoretical results on loss functions that are robust to label noise. In this paper, we provide some sufficient conditions on a loss function so that risk minimization under that loss function would be inherently tolerant to label noise for multiclass classification problems. These results generalize the existing results on noise-tolerant loss functions for binary classification. We study some of the widely used loss functions in deep networks and show that the loss function based on mean absolute value of error is inherently robust to label noise. Thus standard back propagation is enough to learn the true classifier even under label noise. Through experiments, we illustrate the robustness of risk minimization with such loss functions for learning neural networks.},
booktitle = {Proceedings of the Thirty-First AAAI Conference on Artificial Intelligence},
pages = {1919–1925},
numpages = {7},
location = {San Francisco, California, USA},
series = {AAAI'17}
}

@InProceedings{pmlr-v97-charoenphakdee19a,
  title = 	 {On Symmetric Losses for Learning from Corrupted Labels},
  author =       {Charoenphakdee, Nontawat and Lee, Jongyeong and Sugiyama, Masashi},
  booktitle = 	 {Proceedings of the 36th International Conference on Machine Learning},
  pages = 	 {961--970},
  year = 	 {2019},
  editor = 	 {Chaudhuri, Kamalika and Salakhutdinov, Ruslan},
  volume = 	 {97},
  series = 	 {Proceedings of Machine Learning Research},
  month = 	 {09--15 Jun},
  publisher =    {PMLR},
  pdf = 	 {http://proceedings.mlr.press/v97/charoenphakdee19a/charoenphakdee19a.pdf},
  url = 	 {https://proceedings.mlr.press/v97/charoenphakdee19a.html},
  abstract = 	 {This paper aims to provide a better understanding of a symmetric loss. First, we emphasize that using a symmetric loss is advantageous in the balanced error rate (BER) minimization and area under the receiver operating characteristic curve (AUC) maximization from corrupted labels. Second, we prove general theoretical properties of symmetric losses, including a classification-calibration condition, excess risk bound, conditional risk minimizer, and AUC-consistency condition. Third, since all nonnegative symmetric losses are non-convex, we propose a convex barrier hinge loss that benefits significantly from the symmetric condition, although it is not symmetric everywhere. Finally, we conduct experiments to validate the relevance of the symmetric condition.}
}

@article{JMLR:v15:srivastava14a,
  author  = {Nitish Srivastava and Geoffrey Hinton and Alex Krizhevsky and Ilya Sutskever and Ruslan Salakhutdinov},
  title   = {Dropout: A Simple Way to Prevent Neural Networks from Overfitting},
  journal = {Journal of Machine Learning Research},
  year    = {2014},
  volume  = {15},
  number  = {56},
  pages   = {1929--1958},
  url     = {http://jmlr.org/papers/v15/srivastava14a.html}
}

@inproceedings{
zhang2018mixup,
title={mixup: Beyond Empirical Risk Minimization},
author={Hongyi Zhang and Moustapha Cisse and Yann N. Dauphin and David Lopez-Paz},
booktitle={International Conference on Learning Representations},
year={2018},
url={https://openreview.net/forum?id=r1Ddp1-Rb},
}

@article{10.1016/j.knosys.2012.01.015,
author = {Delany, Sarah Jane and Segata, Nicola and Mac Namee, Brian},
title = {Profiling instances in noise reduction},
year = {2012},
issue_date = {July, 2012},
publisher = {Elsevier Science Publishers B. V.},
address = {NLD},
volume = {31},
issn = {0950-7051},
url = {https://doi.org/10.1016/j.knosys.2012.01.015},
doi = {10.1016/j.knosys.2012.01.015},
abstract = {The dependency on the quality of the training data has led to significant work in noise reduction for instance-based learning algorithms. This paper presents an empirical evaluation of current noise reduction techniques, not just from the perspective of their comparative performance, but from the perspective of investigating the types of instances that they focus on for removal. A novel instance profiling technique known as RDCL profiling allows the structure of a training set to be analysed at the instance level categorising each instance based on modelling their local competence properties. This profiling approach offers the opportunity of investigating the types of instances removed by the noise reduction techniques that are currently in use in instance-based learning. The paper also considers the effect of removing instances with specific profiles from a dataset and shows that a very simple approach of removing instances that are misclassified by the training set and cause other instances in the dataset to be misclassified is an effective noise reduction technique.},
journal = {Know.-Based Syst.},
month = jul,
pages = {28–40},
numpages = {13},
keywords = {Visualisation, Profiling, Noise reduction, Instance based learning, Case-based editing}
}

@Article{Muhlenbach2004,
author={Muhlenbach, Fabrice
and Lallich, St{\'e}phane
and Zighed, Djamel A.},
title={Identifying and Handling Mislabelled Instances},
journal={Journal of Intelligent Information Systems},
year={2004},
month={Jan},
day={01},
volume={22},
number={1},
pages={89-109},
abstract={Data mining and knowledge discovery aim at producing useful and reliable models from the data. Unfortunately some databases contain noisy data which perturb the generalization of the models. An important source of noise consists of mislabelled training instances. We offer a new approach which deals with improving classification accuracies by using a preliminary filtering procedure. An example is suspect when in its neighbourhood defined by a geometrical graph the proportion of examples of the same class is not significantly greater than in the database itself. Such suspect examples in the training data can be removed or relabelled. The filtered training set is then provided as input to learning algorithms. Our experiments on ten benchmarks of UCI Machine Learning Repository using 1-NN as the final algorithm show that removal gives better results than relabelling. Removing allows maintaining the generalization error rate when we introduce from 0 to 20{\%} of noise on the class, especially when classes are well separable. The filtering method proposed is finally compared to the relaxation relabelling schema.},
issn={1573-7675},
doi={10.1023/A:1025832930864},
url={https://doi.org/10.1023/A:1025832930864}
}

@book{plato2007republic,
  author = {Plato},
  title = {The Republic},
  translator = {Desmond Lee},
  publisher = {Penguin Classics},
  year = {2007},
  note = {Original work published c. 380 BCE}
}

@book{whitehead1978process,
  author = {Alfred North Whitehead},
  title = {Process and Reality: An Essay in Cosmology},
  publisher = {Free Press},
  year = {1978},
  note = {Original work published 1929}
}

@inproceedings{haralabopoulos2020objective,
  author = {Giannis Haralabopoulos and Michail Tsikandilakis and Mercedes Torres Torres and Derek McAuley},
  title = {Objective Assessment of Subjective Tasks in Crowdsourcing Applications},
  booktitle = {Language Resources and Evaluation Conference},
  year = {2020}
}

@inproceedings{barocas2021designing,
  title={Designing disaggregated evaluations of ai systems: Choices, considerations, and tradeoffs},
  author={Barocas, Solon and Guo, Anhong and Kamar, Ece and Krones, Jacquelyn and Morris, Meredith Ringel and Vaughan, Jennifer Wortman and Wadsworth, W Duncan and Wallach, Hanna},
  booktitle={Proceedings of the 2021 AAAI/ACM Conference on AI, Ethics, and Society},
  pages={368--378},
  year={2021}
}

@inproceedings{sen2015turkers,
  title={Turkers, scholars," arafat" and" peace" cultural communities and algorithmic gold standards},
  author={Sen, Shilad and Giesel, Margaret E and Gold, Rebecca and Hillmann, Benjamin and Lesicko, Matt and Naden, Samuel and Russell, Jesse and Wang, Zixiao and Hecht, Brent},
  booktitle={Proceedings of the 18th acm conference on computer supported cooperative work \& social computing},
  pages={826--838},
  year={2015}
}

@article{10.1145/3610082,
author = {Atreja, Shubham and Hemphill, Libby and Resnick, Paul},
title = {Remove, Reduce, Inform: What Actions do People Want Social Media Platforms to Take on Potentially Misleading Content?},
year = {2023},
issue_date = {October 2023},
publisher = {Association for Computing Machinery},
address = {New York, NY, USA},
volume = {7},
number = {CSCW2},
url = {https://doi.org/10.1145/3610082},
doi = {10.1145/3610082},
abstract = {To reduce the spread of misinformation, social media platforms may take enforcement actions against offending content, such as adding informational warning labels, reducing distribution, or removing content entirely. However, both their actions and their inactions have been controversial and plagued by allegations of partisan bias. When it comes to specific content items, surprisingly little is known about what ordinary people want the platforms to do. We provide empirical evidence about a politically balanced panel of lay raters' preferences for three potential platform actions on 368 news articles. Our results confirm that on many articles there is a lack of consensus on which actions to take. We find a clear hierarchy of perceived severity of actions with a majority of raters wanting informational labels on the most articles and removal on the fewest. There was no partisan difference in terms of how many articles deserve platform actions but conservatives did prefer somewhat more action on content from liberal sources, and vice versa. We also find that judgments about two holistic properties, misleadingness and harm, could serve as an effective proxy to determine what actions would be approved by a majority of raters.},
journal = {Proc. ACM Hum.-Comput. Interact.},
month = oct,
articleno = {291},
numpages = {33},
keywords = {content moderation, crowdsourcing, fake news, misinformation, social media}
}

@inproceedings{10.5555/2002736.2002760,
author = {Alm, Cecilia Ovesdotter},
title = {Subjective natural language problems: motivations, applications, characterizations, and implications},
year = {2011},
isbn = {9781932432886},
publisher = {Association for Computational Linguistics},
address = {USA},
abstract = {This opinion paper discusses subjective natural language problems in terms of their motivations, applications, characterizations, and implications. It argues that such problems deserve increased attention because of their potential to challenge the status of theoretical understanding, problem-solving methods, and evaluation techniques in computational linguistics. The author supports a more holistic approach to such problems; a view that extends beyond opinion mining or sentiment analysis.},
booktitle = {Proceedings of the 49th Annual Meeting of the Association for Computational Linguistics: Human Language Technologies: Short Papers - Volume 2},
pages = {107–112},
numpages = {6},
location = {Portland, Oregon},
series = {HLT '11}
}

@article{723c01ad-4d66-3cb4-987e-5f56192b514e,
 ISSN = {00218952, 2163534X},
 URL = {http://www.jstor.org/stable/26174473},
 abstract = {Since a meteorologist's predictions are subjective, a framework for the evaluation of meteorological probability assessors must be consistent with the theory of subjective probability. Such a framework is described in this paper. First, two standards of "goodness," one based upon normative considerations and one based upon substantive considerations, are proposed. Specific properties which a meteorologist's assessments should possess are identified for each standard. Then, several measures of "goodness," or scoring rules, which indicate the extent to which such assessments possess certain properties, are described. Finally, several important uses of these scoring rules are considered.},
 author = {Robert L. Winkler and Allan H. Murphy},
 journal = {Journal of Applied Meteorology (1962-1982)},
 number = {5},
 pages = {751--758},
 publisher = {American Meteorological Society},
 title = {"Good" Probability Assessors},
 urldate = {2025-06-17},
 volume = {7},
 year = {1968}
}

@article{Gneiting01032007,
author = {Tilmann Gneiting and Adrian E Raftery},
title = {Strictly Proper Scoring Rules, Prediction, and Estimation},
journal = {Journal of the American Statistical Association},
volume = {102},
number = {477},
pages = {359--378},
year = {2007},
publisher = {ASA Website},
doi = {10.1198/016214506000001437},
URL = {https://doi.org/10.1198/016214506000001437},
eprint = {https://doi.org/10.1198/016214506000001437}
}

@article{rothschild2011forecasting,
  title={Forecasting elections: Voter intentions versus expectations},
  author={Rothschild, David M and Wolfers, Justin},
  url = {https://papers.ssrn.com/sol3/papers.cfm?abstract_id=1884644},
  journal={Available at SSRN 1884644},
  year={2011}
}

@inproceedings{10.1145/3136755.3136792,
author = {Zhang, Biqiao and Essl, Georg and Mower Provost, Emily},
title = {Predicting the distribution of emotion perception: capturing inter-rater variability},
year = {2017},
isbn = {9781450355438},
publisher = {Association for Computing Machinery},
address = {New York, NY, USA},
url = {https://doi.org/10.1145/3136755.3136792},
doi = {10.1145/3136755.3136792},
abstract = {Emotion perception is person-dependent and variable. Dimensional characterizations of emotion can capture this variability by describing emotion in terms of its properties (e.g., valence, positive vs. negative, and activation, calm vs. excited). However, in many emotion recognition systems, this variability is often considered "noise" and is attenuated by averaging across raters. Yet, inter-rater variability provides information about the subtlety or clarity of an emotional expression and can be used to describe complex emotions. In this paper, we investigate methods that can effectively capture the variability across evaluators by predicting emotion perception as a discrete probability distribution in the valence-activation space. We propose: (1) a label processing method that can generate two-dimensional discrete probability distributions of emotion from a limited number of ordinal labels; (2) a new approach that predicts the generated probabilistic distributions using dynamic audio-visual features and Convolutional Neural Networks (CNNs). Our experimental results on the MSP-IMPROV corpus suggest that the proposed approach is more effective than the conventional Support Vector Regressions (SVRs) approach with utterance-level statistical features, and that feature-level fusion of the audio and video modalities outperforms decision-level fusion. The proposed CNN model predominantly improves the prediction accuracy for the valence dimension and brings a consistent performance improvement over data recorded from natural interactions. The results demonstrate the effectiveness of generating emotion distributions from limited number of labels and predicting the distribution using dynamic features and neural networks.},
booktitle = {Proceedings of the 19th ACM International Conference on Multimodal Interaction},
pages = {51–59},
numpages = {9},
keywords = {Inter-rater Variability, Convolutional Neural Networks, Automatic Emotion Recognition},
location = {Glasgow, UK},
series = {ICMI '17}
}

@article{pavlick-kwiatkowski-2019-inherent,
    title = "Inherent Disagreements in Human Textual Inferences",
    author = "Pavlick, Ellie  and
      Kwiatkowski, Tom",
    editor = "Lee, Lillian  and
      Johnson, Mark  and
      Roark, Brian  and
      Nenkova, Ani",
    journal = "Transactions of the Association for Computational Linguistics",
    volume = "7",
    year = "2019",
    address = "Cambridge, MA",
    publisher = "MIT Press",
    url = "https://aclanthology.org/Q19-1043",
    doi = "10.1162/tacl_a_00293",
    pages = "677--694",
    abstract = "We analyze human{'}s disagreements about the validity of natural language inferences. We show that, very often, disagreements are not dismissible as annotation {``}noise{''}, but rather persist as we collect more ratings and as we vary the amount of context provided to raters. We further show that the type of uncertainty captured by current state-of-the-art models for natural language inference is not reflective of the type of uncertainty present in human disagreements. We discuss implications of our results in relation to the recognizing textual entailment (RTE)/natural language inference (NLI) task. We argue for a refined evaluation objective that requires models to explicitly capture the full distribution of plausible human judgments.",
}

@article{frenay2013classification,
  title={Classification in the presence of label noise: a survey},
  author={Fr{\'e}nay, Beno{\^\i}t and Verleysen, Michel},
  journal={IEEE transactions on neural networks and learning systems},
  volume={25},
  number={5},
  pages={845--869},
  year={2013},
  publisher={IEEE}
}

@article{jarvelin2002cumulated,
  title={Cumulated gain-based evaluation of IR techniques},
  author={J{\"a}rvelin, Kalervo and Kek{\"a}l{\"a}inen, Jaana},
  journal={ACM Transactions on Information Systems (TOIS)},
  volume={20},
  number={4},
  pages={422--446},
  year={2002},
  publisher={ACM New York, NY, USA}
}

@inproceedings{voorhees1999trec,
  title={The trec-8 question answering track report.},
  author={Voorhees, Ellen M and others},
  booktitle={Trec},
  volume={99},
  pages={77--82},
  year={1999}
}

@article{rainio2024evaluation,
  title={Evaluation metrics and statistical tests for machine learning},
  author={Rainio, Oona and Teuho, Jarmo and Kl{\'e}n, Riku},
  journal={Scientific Reports},
  volume={14},
  number={1},
  pages={6086},
  year={2024},
  publisher={Nature Publishing Group UK London}
}

@article{demvsar2006statistical,
  title={Statistical comparisons of classifiers over multiple data sets},
  author={Dem{\v{s}}ar, Janez},
  journal={The Journal of Machine learning research},
  volume={7},
  pages={1--30},
  year={2006},
  publisher={JMLR. org}
}

@inproceedings{sokolova2006beyond,
  title={Beyond accuracy, F-score and ROC: a family of discriminant measures for performance evaluation},
  author={Sokolova, Marina and Japkowicz, Nathalie and Szpakowicz, Stan},
  booktitle={Australasian joint conference on artificial intelligence},
  pages={1015--1021},
  year={2006},
  organization={Springer}
}

@article{chicco2021matthews,
  title={The Matthews correlation coefficient (MCC) is more reliable than balanced accuracy, bookmaker informedness, and markedness in two-class confusion matrix evaluation},
  author={Chicco, Davide and T{\"o}tsch, Niklas and Jurman, Giuseppe},
  journal={BioData mining},
  volume={14},
  pages={1--22},
  year={2021},
  publisher={Springer}
}

@inproceedings{elkan2001foundations,
  title={The foundations of cost-sensitive learning},
  author={Elkan, Charles},
  booktitle={International joint conference on artificial intelligence},
  volume={17},
  number={1},
  pages={973--978},
  year={2001},
  organization={Lawrence Erlbaum Associates Ltd}
}

@article{ferri2009experimental,
  title={An experimental comparison of performance measures for classification},
  author={Ferri, C{\'e}sar and Hern{\'a}ndez-Orallo, Jos{\'e} and Modroiu, R},
  journal={Pattern recognition letters},
  volume={30},
  number={1},
  pages={27--38},
  year={2009},
  publisher={Elsevier}
}

@inproceedings{chinchor-1992-muc,
    title = "{MUC}-4 Evaluation Metrics",
    author = "Chinchor, Nancy",
    booktitle = "{F}ourth {M}essage {U}nderstanding {C}onference ({MUC}-4): Proceedings of a Conference Held in {M}c{L}ean, {V}irginia, {J}une 16-18, 1992",
    year = "1992",
    url = "https://aclanthology.org/M92-1002",
}

@article{
doi:10.1126/sciadv.abf4393,
author = {Jennifer Allen  and Antonio A. Arechar  and Gordon Pennycook  and David G. Rand },
title = {Scaling up fact-checking using the wisdom of crowds},
journal = {Science Advances},
volume = {7},
number = {36},
pages = {eabf4393},
year = {2021},
doi = {10.1126/sciadv.abf4393},
URL = {https://www.science.org/doi/abs/10.1126/sciadv.abf4393},
eprint = {https://www.science.org/doi/pdf/10.1126/sciadv.abf4393},
abstract = {When rating articles’ accuracy, a small politically balanced crowd of laypeople yields high agreement with fact-checkers. Professional fact-checking, a prominent approach to combating misinformation, does not scale easily. Furthermore, some distrust fact-checkers because of alleged liberal bias. We explore a solution to these problems: using politically balanced groups of laypeople to identify misinformation at scale. Examining 207 news articles flagged for fact-checking by Facebook algorithms, we compare accuracy ratings of three professional fact-checkers who researched each article to those of 1128 Americans from Amazon Mechanical Turk who rated each article’s headline and lede. The average ratings of small, politically balanced crowds of laypeople (i) correlate with the average fact-checker ratings as well as the fact-checkers’ ratings correlate with each other and (ii) predict whether the majority of fact-checkers rated a headline as “true” with high accuracy. Furthermore, cognitive reflection, political knowledge, and Democratic Party preference are positively related to agreement with fact-checkers, and identifying each headline’s publisher leads to a small increase in agreement with fact-checkers.}}

@article{davani-etal-2022-dealing,
    title = "Dealing with Disagreements: Looking Beyond the Majority Vote in Subjective Annotations",
    author = "Davani, Aida Mostafazadeh and
      D{\'\i}az, Mark  and
      Prabhakaran, Vinodkumar",
    editor = "Roark, Brian  and
      Nenkova, Ani",
    journal = "Transactions of the Association for Computational Linguistics",
    volume = "10",
    year = "2022",
    address = "Cambridge, MA",
    publisher = "MIT Press",
    url = "https://aclanthology.org/2022.tacl-1.6",
    doi = "10.1162/tacl_a_00449",
    pages = "92--110",
    abstract = "Majority voting and averaging are common approaches used to resolve annotator disagreements and derive single ground truth labels from multiple annotations. However, annotators may systematically disagree with one another, often reflecting their individual biases and values, especially in the case of subjective tasks such as detecting affect, aggression, and hate speech. Annotator disagreements may capture important nuances in such tasks that are often ignored while aggregating annotations to a single ground truth. In order to address this, we investigate the efficacy of multi-annotator models. In particular, our multi-task based approach treats predicting each annotators{'} judgements as separate subtasks, while sharing a common learned representation of the task. We show that this approach yields same or better performance than aggregating labels in the data prior to training across seven different binary classification tasks. Our approach also provides a way to estimate uncertainty in predictions, which we demonstrate better correlate with annotation disagreements than traditional methods. Being able to model uncertainty is especially useful in deployment scenarios where knowing when not to make a prediction is important.",
}

@book{hambleton1985item,
  title={Item Response Theory: Principles and Applications},
  author={Hambleton, R.K. and Swaminathan, H.},
  isbn={9780898380651},
  lccn={83011385},
  series={Evaluation in education and human services},
  url={https://books.google.com/books?id=jKFMUFI-e1UC},
  year={1985},
  publisher={Springer Netherlands}
}

@inproceedings{ghosh2011moderates,
  title={Who moderates the moderators? crowdsourcing abuse detection in user-generated content},
  author={Ghosh, Arpita and Kale, Satyen and McAfee, Preston},
  booktitle={Proceedings of the 12th ACM conference on Electronic commerce},
  pages={167--176},
  year={2011}
}

@InProceedings{pmlr-v216-burrell23a,
  title = 	 {Testing conventional wisdom (of the crowd)},
  author =       {Burrell, Noah and Schoenebeck, Grant},
  booktitle = 	 {Proceedings of the Thirty-Ninth Conference on Uncertainty in Artificial Intelligence},
  pages = 	 {237--248},
  year = 	 {2023},
  editor = 	 {Evans, Robin J. and Shpitser, Ilya},
  volume = 	 {216},
  series = 	 {Proceedings of Machine Learning Research},
  month = 	 {31 Jul--04 Aug},
  publisher =    {PMLR},
  pdf = 	 {https://proceedings.mlr.press/v216/burrell23a/burrell23a.pdf},
  url = 	 {https://proceedings.mlr.press/v216/burrell23a.html},
  abstract = 	 {Do common assumptions about the way that crowd workers make mistakes in microtask (labeling) applications manifest in real crowdsourcing data? Prior work only addresses this question indirectly. Instead, it primarily focuses on designing new label aggregation algorithms, seeming to imply that better performance justifies any additional assumptions. However, empirical evidence in past instances has raised significant challenges to common assumptions. We continue this line of work, using crowdsourcing data itself as directly as possible to interrogate several basic assumptions about workers and tasks. We find strong evidence that the assumption that workers respond correctly to each task with a constant probability, which is common in theoretical work, is implausible in real data. We also illustrate how heterogeneity among tasks and workers can take different forms, which have different implications for the design and evaluation of label aggregation algorithms.}
}

@inproceedings{zarkoob2023better,
  title={Better peer grading through bayesian inference},
  author={Zarkoob, Hedayat and d'Eon, Greg and Podina, Lena and Leyton-Brown, Kevin},
  booktitle={Proceedings of the AAAI Conference on Artificial Intelligence},
  volume={37},
  number={5},
  pages={6137--6144},
  year={2023}
}

@article{paun2018comparing,
  title={Comparing Bayesian models of annotation},
  author={Paun, Silviu and Carpenter, Bob and Chamberlain, Jon and Hovy, Dirk and Kruschwitz, Udo and Poesio, Massimo},
  journal={Transactions of the Association for Computational Linguistics},
  volume={6},
  pages={571--585},
  year={2018},
  publisher={MIT Press One Rogers Street, Cambridge, MA 02142-1209, USA journals-info~…}
}

@inproceedings{lakkaraju2015bayesian,
  title={A bayesian framework for modeling human evaluations},
  author={Lakkaraju, Himabindu and Leskovec, Jure and Kleinberg, Jon and Mullainathan, Sendhil},
  booktitle={Proceedings of the 2015 SIAM International Conference on Data Mining},
  pages={181--189},
  year={2015},
  organization={SIAM}
}

@article{https://doi.org/10.2307/2346806,
author = {Dawid, A. P. and Skene, A. M.},
title = {Maximum Likelihood Estimation of Observer Error-Rates Using the EM Algorithm},
journal = {Journal of the Royal Statistical Society: Series C (Applied Statistics)},
volume = {28},
number = {1},
pages = {20-28},
keywords = {em algorithm, observer variation, latent class model, medical example},
doi = {https://doi.org/10.2307/2346806},
url = {https://rss.onlinelibrary.wiley.com/doi/abs/10.2307/2346806},
eprint = {https://rss.onlinelibrary.wiley.com/doi/pdf/10.2307/2346806},
abstract = {Summary In compiling a patient record many facets are subject to errors of measurement. A model is presented which allows individual error-rates to be estimated for polytomous facets even when the patient's “true” response is not available. The EM algorithm is shown to provide a slow but sure way of obtaining maximum likelihood estimates of the parameters of interest. Some preliminary experience is reported and the limitations of the method are described.},
year = {1979}
}

@book{krippendorff04,
  added-at = {2006-03-08T19:52:29.000+0100},
  author = {Krippendorff, Klaus},
  biburl = {https://www.bibsonomy.org/bibtex/2c7da4380bc00bf0d9d748810c0352db0/vanatteveldt},
  interhash = {7c543499191e7e18771c3f72b3ef2ed8},
  intrahash = {c7da4380bc00bf0d9d748810c0352db0},
  keywords = {imported},
  publisher = {Sage Publications},
  timestamp = {2006-03-08T19:52:29.000+0100},
  title = {Content Analysis: An Introduction to Its Methodology (second edition)},
  year = 2004
}

@article{doi:10.1177/001316446002000104,
author = {Jacob Cohen},
title ={A Coefficient of Agreement for Nominal Scales},

journal = {Educational and Psychological Measurement},
volume = {20},
number = {1},
pages = {37-46},
year = {1960},
doi = {10.1177/001316446002000104},
URL = {https://doi.org/10.1177/001316446002000104},
eprint = {https://doi.org/10.1177/001316446002000104}}

@misc{10481/57153,
year = {1785},
url = {http://hdl.handle.net/10481/57153},
publisher = {A Paris :  De l'imprimerie Royale},
keywords = {Lógica},
keywords = {Teoría del conocimiento},
keywords = {Teoría del conocimiento},
keywords = {Metodología},
title = {Essai sur l'application de l'analyse à la probabilité des décisions, rendues à la pluralité des voix / |c par M. le Marquis de Condorcet...},
author = {Condorcet, Jean Antoine Marie Nicolas Caritat},
}

@article{doi:10.1177/26339137231173407,
author = {Paul Resnick and Aljohara Alfayez and Jane Im and Eric Gilbert},
title ={Searching for or reviewing evidence improves crowdworkers’ misinformation judgments and reduces partisan bias},
journal = {Collective Intelligence},
volume = {2},
number = {2},
pages = {26339137231173407},
year = {2023},
doi = {10.1177/26339137231173407},

URL = { 
        https://doi.org/10.1177/26339137231173407
},
eprint = { 
        https://doi.org/10.1177/26339137231173407
},
    abstract = { Can crowd workers be trusted to judge whether news-like articles circulating on the Internet are misleading, or does partisanship and inexperience get in the way? And can the task be structured in a way that reduces partisanship? We assembled pools of both liberal and conservative crowd raters and tested three ways of asking them to make judgments about 374 articles. In a no research condition, they were just asked to view the article and then render a judgment. In an individual research condition, they were also asked to search for corroborating evidence and provide a link to the best evidence they found. In a collective research condition, they were not asked to search, but instead to review links collected from workers in the individual research condition. Both research conditions reduced partisan disagreement in judgments. The individual research condition was most effective at producing alignment with journalists’ assessments. In this condition, the judgments of a panel of sixteen or more crowd workers were better than that of a panel of three expert journalists, as measured by alignment with a held out journalist’s ratings. }
}

@inproceedings{gordon2021disagreement,
  title={The Disagreement Deconvolution: Bringing Machine Learning Performance Metrics In Line With Reality},
  author={Gordon, Mitchell L and Zhou, Kaitlyn and Patel, Kayur and Hashimoto, Tatsunori and Bernstein, Michael S},
  booktitle={Proceedings of the 2021 CHI Conference on Human Factors in Computing Systems},
  pages={1--14},
  year={2021}
}

@inproceedings{gordon2022jury,
  title={Jury Learning: Integrating Dissenting Voices into Machine Learning Models},
  author={Gordon, Mitchell L. and Lam, Michelle S. and Park, Joon Sung and Patel, Kayur and Hancock, Jeff and Hashimoto, Tatsunori and Bernstein, Michael S.},
  booktitle={Proceedings of the 2022 CHI Conference on Human Factors in Computing Systems},
  year={2022},
  doi={10.1145/3491102.3502004}
}

@article{levari2018prevalence,
  title={Prevalence-induced concept change in human judgment},
  author={Levari, David E. and Gilbert, Daniel T. and Wilson, Timothy D. and Sievers, Beau and Amodio, David M. and Wheatley, Thalia},
  journal={Science},
  volume={360},
  number={6396},
  pages={1465--1467},
  year={2018},
  doi={10.1126/science.aap8731}
}

@inproceedings{dumitrache2018capturing,
  title={Capturing Ambiguity in Crowdsourcing Frame Disambiguation},
  author={Dumitrache, A and Aroyo, LM and Welty, Chris},
  booktitle={the Sixth AAAI Conference on Human Computation and Crowdsourcing (HCOMP-18)},
  pages={12--20},
  year={2018},
  organization={Association for the Advancement of Artificial Intelligence}
}

@article{koren2009matrix,
  title={Matrix factorization techniques for recommender systems},
  author={Koren, Yehuda and Bell, Robert and Volinsky, Chris},
  journal={Computer},
  volume={42},
  number={8},
  pages={30--37},
  year={2009},
  publisher={IEEE}
}

@article{erevrothENO2007,
    title={Learning and Equilibrium as Useful Approximations: Accuracy of Prediction on Randomly Selected Constant Sum Games},
    author={Erev, Ido and Roth, Alvin E. and Slonim, Robert L.},
    journal={Economic Theory},
    volume={33},
    pages={29--51},
    year={2007},
    publisher={Springer}    
}

@inproceedings{mitra2015credbank,
  title={Credbank: A large-scale social media corpus with associated credibility annotations.},
  author={Mitra, Tanushree and Gilbert, Eric},
  booktitle={ICWSM},
  pages={258--267},
  year={2015}
}

@article{shrout1979intraclass,
  title={Intraclass correlations: uses in assessing rater reliability.},
  author={Shrout, Patrick E and Fleiss, Joseph L},
  journal={Psychological bulletin},
  volume={86},
  number={2},
  pages={420},
  year={1979},
  publisher={American Psychological Association}
}

@inproceedings{Wulczyn2017machina,
 author = {Wulczyn, Ellery and Thain, Nithum and Dixon, Lucas},
 title = {Ex Machina: Personal Attacks Seen at Scale},
 booktitle = {Proceedings of the 26th International Conference on World Wide Web},
 series = {WWW '17},
 year = {2017},
 isbn = {978-1-4503-4913-0},
 location = {Perth, Australia},
 pages = {1391--1399},
 numpages = {9},
 url = {https://doi.org/10.1145/3038912.3052591},
 doi = {10.1145/3038912.3052591},
 acmid = {3052591},
 publisher = {International World Wide Web Conferences Steering Committee},
 address = {Republic and Canton of Geneva, Switzerland},
 keywords = {online discussions, online harassment, wikipedia},
}

@inproceedings{mitra2017parsimonious,
  title={A parsimonious language model of social media credibility across disparate events},
  author={Mitra, Tanushree and Wright, Graham P and Gilbert, Eric},
  booktitle={Proceedings of the 2017 ACM Conference on Computer Supported Cooperative Work and Social Computing},
  pages={126--145},
  year={2017},
  organization={ACM}
}

@article{glenski2018guessthekarma,
  title={GuessTheKarma: A Game to Assess Social Rating Systems},
  author={Glenski, Maria and Stoddard, Greg and Resnick, Paul and Weninger, Tim},
  journal={Proceedings of the ACM on Human-Computer Interaction},
  volume={2},
  number={CSCW},
  pages={59},
  year={2018},
  publisher={ACM}
}

@article{madigan2021towards,
  title={Towards an understanding of teacher attrition: A meta-analysis of burnout, job satisfaction, and teachers’ intentions to quit},
  author={Madigan, Daniel J and Kim, Lisa E},
  journal={Teaching and teacher education},
  volume={105},
  pages={103425},
  year={2021},
  publisher={Elsevier}
}

@article{coudray2018classification,
  title={Classification and mutation prediction from non--small cell lung cancer histopathology images using deep learning},
  author={Coudray, Nicolas and Ocampo, Paolo Santiago and Sakellaropoulos, Theodore and Narula, Navneet and Snuderl, Matija and Feny{\"o}, David and Moreira, Andre L and Razavian, Narges and Tsirigos, Aristotelis},
  journal={Nature medicine},
  volume={24},
  number={10},
  pages={1559--1567},
  year={2018},
  publisher={Nature Publishing Group US New York}
}

@article{salas2022student,
  title={Student engagement in online learning in Latin American higher education during the COVID-19 pandemic: A systematic review},
  author={Salas-Pilco, Sdenka Zobeida and Yang, Yuqin and Zhang, Zhe},
  journal={British Journal of Educational Technology},
  volume={53},
  number={3},
  pages={593--619},
  year={2022},
  publisher={Wiley Online Library}
}

@article{van2021systematic,
  title={A systematic review and meta-analysis of psychological interventions to improve mental wellbeing},
  author={van Agteren, Joep and Iasiello, Matthew and Lo, Laura and Bartholomaeus, Jonathan and Kopsaftis, Zoe and Carey, Marissa and Kyrios, Michael},
  journal={Nature human behaviour},
  volume={5},
  number={5},
  pages={631--652},
  year={2021},
  publisher={Nature Publishing Group UK London}
}

@article{kiela2020hateful,
  title={The hateful memes challenge: Detecting hate speech in multimodal memes},
  author={Kiela, Douwe and Firooz, Hamed and Mohan, Aravind and Goswami, Vedanuj and Singh, Amanpreet and Ringshia, Pratik and Testuggine, Davide},
  journal={Advances in neural information processing systems},
  volume={33},
  pages={2611--2624},
  year={2020}
}

@article{salminen2022creating,
  title={Creating and detecting fake reviews of online products},
  author={Salminen, Joni and Kandpal, Chandrashekhar and Kamel, Ahmed Mohamed and Jung, Soon-gyo and Jansen, Bernard J},
  journal={Journal of Retailing and Consumer Services},
  volume={64},
  pages={102771},
  year={2022},
  publisher={Elsevier}
}

@inproceedings{cheng2015antisocial,
  title={Antisocial behavior in online discussion communities},
  author={Cheng, Justin and Danescu-Niculescu-Mizil, Cristian and Leskovec, Jure},
  booktitle={Proceedings of the international aaai conference on web and social media},
  volume={9},
  number={1},
  pages={61--70},
  year={2015}
}

@article{wang2023document,
  title={Document-level machine translation with large language models},
  author={Wang, Longyue and Lyu, Chenyang and Ji, Tianbo and Zhang, Zhirui and Yu, Dian and Shi, Shuming and Tu, Zhaopeng},
  journal={arXiv preprint arXiv:2304.02210},
  year={2023}
}

@article{cohen1960coefficient,
  title={A coefficient of agreement for nominal scales},
  author={Cohen, Jacob},
  journal={Educational and psychological measurement},
  volume={20},
  number={1},
  pages={37--46},
  year={1960},
  publisher={Sage Publications Sage CA: Thousand Oaks, CA}
}

@book{krippendorff2018content,
  title={Content analysis: An introduction to its methodology},
  author={Krippendorff, Klaus},
  year={2018},
  publisher={Sage publications}
}

\clearpage

% Appendix here
% Options are (1) APPENDIX (with or without general title) or
%             (2) APPENDICES (if it has more than one unrelated sections)
% Outcomment the appropriate case if necessary
%
% \begin{APPENDIX}{<Title of the Appendix>}
% \end{APPENDIX}
%
%   or
%

%\newcounter{maintextfigure}
%\setcounter{maintextfigure}{\value{figure}}

\begin{appendix}

\renewcommand{\thefigure}{A\arabic{figure}}
\setcounter{figure}{0}

\renewcommand{\thetable}{A\arabic{table}}
\setcounter{table}{0}

\let\oldsection\section
\renewcommand{\section}[1]{\clearpage\oldsection{#1}}

\section{Notation}
\label{app:notation}

\begin{table}[h]
\centering
\caption{Notation Table}
\small{
\begin{tabular}{l|l}
\toprule
\textbf{Symbol} & \textbf{Meaning} \\
\midrule
% $\mathcal{I}$ & Space of all possible items \\
% $\I \sim \mathcal{I}$ & A sequence of items $i$, each drawn from $\mathcal{I}$ \\
$\itemFeaturesRV$, $\itemFeatures$ & Random variable (RV) and realization for the features of $\ith$ item\\ 
$\itemFeaturesVecRV$ = \{$X_1, X_2, \cdots$\} & IID random variables for features of all items \\
$\itemFeaturesVec$ = $\{x_1, x_2, \cdots\}$ & Realized features for all items \\
$\La$ & Set of possible labels \\
$\itemGTRV$, $\itemGT \in \La$  & RV and realization for ground truth label for $\ith$ item \\
$\Delta_\La$ & Set of all distributions over $\La$\\
$\itemResponseStateRV$, $\itemResponseState\in \Delta_\La$ & RV and realization for rater response state of $\ith$ item\\
$D_{\X, \GT, \St}$ & Joint distribution from which $X_i$, $\itemResponseStateRV$ and $\itemGTRV$ are drawn \\

% $\textrm{mode}(s_i)= \argmax_{\ell\in\La}s_{i;\ell}$ & Mode (majority vote) of the distribution $s_i$  \\
% $\mu(s_i)$ & Mean of distribution $s_i$ when labels are numeric \\
% $c$ & Classifier, maps each item to a label in $\La$\\
% $c_h(i)$ & Hard classifier's output label for item $i$ \\
$c$ & Classifier that maps an item's features to a distribution in $\Delta_{\La}$ \\
$c(\itemFeatures)@\ell$ & Classifier's output for label $\ell$ on the $\ith$ item\\
% & $c(\itemFeatures)$ is shorthand for $c(\itemFeatures);+$ when $\La=\{+, -\}$\\
% $\mathbf{c}(I)=(\cdots,c(i),\cdots)_{i\in\I}$ & The classifier's outputs for the set of items $\I$\\
$\itemClassifierOutputVecRV$, $\itemClassifierOutputVec$  & RVs and realizations for the classifier's outputs for all $n$ items \\
% $\cI=\{\cdots,c(\itemFeatures),\cdots)\}$ & The classifier's outputs for all items\\
%\\
%$\mathbf{y}_{ij}$ & The evaluation labels used for scoring a classifier \\
% $\mathbf{y}_{i}$ & Individual label for item $i$: a random rating\\
% $e_{i}$ & Evaluation labels for item $i$\\
$\evaluationRatingRV$, $\evaluationRating$ & RV and realization for evaluation rater $j$'s label for $\ith$ item\\
$\evaluationRatingVecRV$, $\evaluationRatingVec$ & RV and realization for evaluation rater $j$'s label for all $n$ items\\
$\evaluationRatingPanelRV$, $\evaluationRatingPanel$ & RV and realization for $k_e$ random raters for $\ith$ item\\
$\evaluationPanelLabelRV$, $\evaluationPanelLabel$ & RV and realization for panel label for $\ith$ item: majority vote of $k$ raters\\
$\evaluationPanelLabelVecRV$, $\evaluationPanelLabelVec$ & RV and realization for panel labels for all $n$ items \\
% $\textrm{maj}(e_{i[k]})$ & Panel label for $\ith$ item: \\
%  & majority vote of $k$ individual labels\\

$\scoringLabelRV$, $\scoringLabel$ & RV and realization for evaluation label for $\ith$ item, used in scoring\\
% $\textbf{e}^{*}$ & Evaluation labels for all items\\
% $\mathbf{y}_{i}^{(k)}$ & Panel label for item $i$: \\
%  & majority vote of $2k-1$ random ratings\\
% $e_i^{(k)}$ & Panel label for item $i$: \\
%  & majority vote of $2k-1$ individual labels\\
% $\mathbf{y}_{\I}=(\cdots,\mathbf{y}_{i},\cdots)_{i\in\I},\mathbf{y}_{\I}^{(k)}=(\cdots,\mathbf{y}_{i}^{(k)},\cdots)_{i\in\I}$ & Individual/Panel labels for all items\\
% $\yI=\{\cdots,e_i,\cdots\},\eI^{(k)}=\{\cdots,e_i^{(k)},\cdots\}$ & Individual/Panel labels for all items\\
$\benchmarkRatingRV$, $\benchmarkRating$ & RV and realization for  benchmark rater $j$'s label for $\ith$ item\\

$\benchmarkRatingVecRV$, $\benchmarkRatingVec$ & RV and realization for benchmark rater $j$'s label for all $n$ items\\
$\benchmarkRatingPanelRV$, $\benchmarkRatingPanel$ & RV and realization for $k_b$ random benchmark raters for $\ith$ item\\

% $a(c(i))=a_i$ & The action induced by classifier output $c(i)$ for item $i$\\
% $\mathbf{a}_{I}=(\cdots,a_i,\cdots)_{i\in\I}$ & The actions induced by a classifier for the set of items $\I$\\
% $\mathbf{B}$ & $|\I|$ row matrix of benchmark raters' labels \\
% $\mathbf{W}_i$ & Labels for item $i$ \\ %removed because we took out the data matrix. The first place we use it is in section 6
% $\mathbf{B}_{i,j}$ & Benchmark rater $j$'s rating for item $i$ \\
% % $\mathbf{E}$ & $|\I|$ row matrix of evaluation raters' labels \\
% % $\mathbf{W}_i$ & Labels for item $i$ \\
% $\mathbf{E}_{i,j}$ & Evaluation rater $j$'s rating for item $i$ \\
\bottomrule
\end{tabular}
}
\end{table}

% \yk{I want to change notations as follows: $c(\I)=(c(1),c(2),\cdots,c(i),\cdots),\mathbf{y}=(y_1,y_2,\cdots)$, $\mathbf{y}^{(k)}$ denotes majority vote of $2k-1$ random raters, then we will have $\textsc{CE}(c(\I),\mathbf{y}),\textsc{CE}(c(\I),\mathbf{y}^{(k)})$, $\textsc{CE}(c(\I),\mathbf{g})$}

% \input{appdx_raterdisagreement}

\section{Proof of Claim~\ref{claim:subjective-failure}}
\label{app:evalbenchmarkclass}

\claimsubjectivefailure*
\begin{proof}
Consider a classifier $c_5$ which always outputs a positive label and a $c_6$ which always outputs a negative label. 35\% of items have a response state $0.9$, meaning that 90\% of raters give a positive label. The rest have a response state $0.3$.

Define utility based on agreement between the classifier's outputs and human labels. The expected utility for $c_5$ is $.35 * .9 + .65 *.3 = 0.51$. The expected utility for $c_6$ is $.35 * .1 + .65 * .7 = 0.49$. Thus, $c_5$ has higher expected utility. And, as we would expect from Claim~\ref{claim:subjectivejudgmenet}, if we score using the agreement scoring function against a single randomly selected rater for each item, the expected scores match the expected utilities. 

However, for $k_e>1$, the probability of a positive majority vote will be more extreme than the response state probability. For example, with $k_e=3$, when the response state is $0.9$ the probability of a majority of three labels being positive is $0.9^3 + 3*0.9^2*0.1 = .972$, and when the response state is $0.3$ the probability of a majority of three labels being positive is $.216$. Thus, the expected score for $c_5$ is $.35 * .972 + .65 *.216 = 0.481$, lower than $c_6$'s expected score of $.519$. Note that as $k_e$ increases, expected scores continue to go in the wrong direction, with $c_6$'s advantage over $c_5$ increasing, as illustrated in Figure~\ref{fig:more-raters-bad}.

Intuitively, the majority vote introduces bias: it over-represents the most probable label within each response state. This exaggerates confidence in each item’s label and distorts expected scores. As panel size increases, this distortion grows, so that it approaches 100\% confidence in the most probable label, even if it is only slightly more common than the others.

% \gs{Seems like we could rather easily get a stronger result that for any $k^*$, we can design a setting so that $c_1$ does better for $k < k^*$ and otherwise $c_2$ does better.}

\end{proof}

\begin{figure}[t]
\centering
\begin{tikzpicture}

\definecolor{darkgray176}{RGB}{176,176,176}
\definecolor{darkorange25512714}{RGB}{255,127,14}
\definecolor{lightgray204}{RGB}{204,204,204}
\definecolor{steelblue31119180}{RGB}{31,119,180}

\sffamily
\begin{axis}[
width = 3.0in, height = 2.0in,
legend cell align={left},
legend style={font=\small,
	nodes={scale=0.7, transform shape},
	at={(0.0,1)},
	anchor=north west,
	draw=none,
	fill=none},
every tick label/.append style={font=\scriptsize},
xlabel={\scriptsize \(\displaystyle k_e\) raters},
xmin=-0.4, xmax=30.4,
xtick style={color=black},
ylabel={\scriptsize Expected agreement score},
ylabel style={yshift=-10pt},
ymin=0.343332480132415, ymax=0.656667519867585,
ytick style={color=black}
]
\addplot [semithick, steelblue31119180, mark=*, mark size=1, mark options={solid}]
table {%
1 0.51
3 0.4806
5 0.453006
7 0.4309686
9 0.413913807
11 0.400742616996
13 0.39050913774924
15 0.382496382324166
17 0.376175995882666
19 0.371158306514031
21 0.367152987883094
23 0.363941037353033
25 0.361355274612742
27 0.359266724588956
29 0.357574981938559
};
\addlegendentry{$c_5$: predict pos}
\addplot [semithick, darkorange25512714, mark=square*, mark size=1, mark options={solid}]
table {%
1 0.49
3 0.5194
5 0.546994
7 0.5690314
9 0.586086193
11 0.599257383004
13 0.60949086225076
15 0.617503617675834
17 0.623824004117334
19 0.628841693485969
21 0.632847012116906
23 0.636058962646967
25 0.638644725387258
27 0.640733275411044
29 0.642425018061441
};
\addlegendentry{$c_6$: predict neg}
\end{axis}

\end{tikzpicture}
\caption{Expected agreement for two classifiers $c_5$ and $c_6$ as evaluation panel size $k_e$ increases from 1 to 30. The correct ordering, detected with single-rater scoring, reverses with $k_e > 1$.} \label{fig:more-raters-bad}
\end{figure}

\section{Proofs for Section~\ref{sec:calibration_minimizes_rater_equiv}}
\label{app:humanpanel}

\propmaxpower*

\begin{proof}
Cross-entropy corresponds to the log scoring rule, which is known to be a proper scoring rule. The defining property of a proper scoring rule is that the Bayesian posterior maximizes the expected score. More formally, for random variables $Y,Z$, the Bayesian posterior of $Y$ given a realization of Z is $\E_{Y,Z|Z=z}[Y]$. If \textsc{Score} is a proper scoring rule \citep{723c01ad-4d66-3cb4-987e-5f56192b514e,Gneiting01032007}, then we will have, for any alternative to that posterior:

\begin{align*}
&\E_{Y,Z|Z=z} \mleft[\textsc{Score}\mleft(\E_{Y,Z|Z=z}[Y],Y\mright)\mright] \\
\geq &\E_{Y,Z|Z=z} \mleft[\textsc{Score}\mleft(\cdot ,Y\mright)\mright]
\end{align*}

Substituting in $\benchmarkRatingPanelRV$ for $Z$, this property implies that the calibrated benchmark classifier yields the maximum possible power score. Given a fixed classifier score, that results in the minimal rater equivalence.
\end{proof}

\section{Proofs for Section~\ref{sec:empirical-approximation}}
\label{app:empirical}

\claimempiricalunbiased*

\begin{proof}

The theoretical power score is defined as the limit of the expected score, as the number of items grows, where the expectation is taken over a process that generates $k_b$ benchmark scores and one evaluation score per item as random draws from the item states:
$$
\powerscore_{\comb}(k_b) = \lim_{n\rightarrow \infty} \E_{\itemResponseStateVecRV} \left[\E_{\benchmarksVecRV, \evaluationsVecRV\mid \itemResponseStateVecRV} [\scoreFunction(\benchmarkPanelLabelVecRV, \evaluationRatingSingleVecRV)]\right],
$$

The empirical power score is defined as the average of scores for partitions in $\sampledpartitions$  of the rater matrix $\allVecRV$ into a benchmark panel and evaluation panel (see Definition~\ref{def:allpartitions}). %A single partition corresponds to a pair of a benchmark panel and an evaluation panel.  
$$
\empowerscore_{\comb}(k_b, k_e) = \frac{1}{|\sampledpartitions|} \sum_{(\benchmarksVec,\evaluationsVec) \in \sampledpartitions} \scoreFunction(\comb(\benchmarksVec), \textrm{maj}(\evaluationsVec)).
$$

Those partitions are generated through a slightly different random process, where an entire matrix $\allVecRV$ is drawn according to the item states, and then the benchmark and evaluation scores are randomly selected columns.
Any single partition, however, is probabilistically equivalent to a set of benchmark and evaluation ratings that are generated through the theoretical process, as random draws from the item states. Therefore, the expected score for any single partition matches the corresponding theoretical score.  Formally, for any function $F(\benchmarksVecRV, \evaluationsVecRV)$ of the benchmark and evaluation panels, we have:
$$\text{(empirical)  }
\E_{\itemResponseStateVecRV}\left[\E_{\allVecRV\mid \itemResponseStateVecRV}[\E_{\benchmarksVecRV, \evaluationsVecRV \mid \allVecRV} [F(\benchmarksVecRV, \evaluationsVecRV)]] \right] = \E_{\itemResponseStateVecRV} \left[\E_{\benchmarksVecRV, \evaluationsVecRV\mid \itemResponseStateVecRV}  [F(\benchmarksVecRV, \evaluationsVecRV)]\right] \text{  (theoretical)}
$$

By the linearity of expectations, the expectation of the empirical score $\E_{\allVecRV} [\empowerscore_{\comb}(k_b, k_e)]]$ is the same as the expected score for any one partition. And since, for each $n$, the expected empirical power score is $\E_{\itemResponseStateVecRV} \left[\E_{\benchmarksVecRV, \evaluationsVecRV\mid \itemResponseStateVecRV} [\scoreFunction(\benchmarkPanelLabelVecRV, \evaluationRatingSingleVecRV)]\right]$, with the assumption that the score function is well-behaved (Definition~\ref{def:well-behaved}) in the sense that it converges to its expectation in probability, as $n\rightarrow \infty$, the empirical power score converges to the theoretical power score in probability.

% For item-separable scoring functions, where scores decompose across items, by linearity of expectations the expected score for $n$ items is just the expected score for a single item. Thus, the limit is also the expected score for a single item. This implies that, for any $n$, the expected empirical power score is the theoretical power score.

\end{proof}

\claimempclassifierconvergence*

\begin{proof}
The argument is similar to the one for the power score. The theoretical classifier expected score is defined in the limit, as the number of items increases. 

$$
\begin{aligned}
\classifierescore &= \lim_{n\rightarrow \infty}\E_{\itemFeaturesVecRV, \itemResponseStateVecRV}[\scoreFunction(c(\itemFeaturesVecRV), \itemResponseStateVecRV)] 
\end{aligned}
$$

For an empirical matrix $\allVec$ with $n$ rows, for any partition in $\sampledpartitions$ the expected classifier score against the selected evaluation column is the theoretical quantity of interest. An average over all partitions has the same expected value, with lower variance.

% In the special case of an item-separable scoring function, the correspondence holds even for a single item $i$, and thus for the average over any number of items. 

\end{proof}

\thmereqconverge*
% \begin{restatable} [empirical rater equivalence approximates theoretical in subjective utility model]{theorem}{thmereqconverge}  \label{claim:ereqconverge}
% In the subjective utility model with $k_e=1$, if the theoretical power curve is increasing and the theoretical rater equivalence $\fratereqS(c)$ is in $(0,\numraters-1)$, as the number of items in $\allVec$ increases, the empirical rater equivalence converges to the theoretical rater equivalence in probability.\footnote{A sequence of random variables \( X_n \) converges in probability to a random variable \( X \), denoted $X_n \xrightarrow{P} X$, if for every \( \epsilon > 0 \), $
% \lim_{n \to \infty} P\left( |X_n - X| \geq \epsilon \right) = 0$.}
% \[
% \emfratereq(c, 1) \xrightarrow{P}  \fratereqS(c)
% \]
% \end{restatable}

\begin{proof}
The proof has three parts. First, we show that the theoretical rater equivalence can be expressed as the inverse of the theoretical power curve $\powercurve$ evaluated at the classifier's expected score:  
$$ \fratereqS(c) = {\powercurve_{\comb}}^{-1}(\classifierescore).$$

This follows directly from the definition
$$\fratereqS(c) = \inf \{ x : \powercurve_{\comb}(x) \geq \classifierescore \},$$ and the assumption that $\powercurve_{\comb}$ is strictly increasing and $\fratereqS(c) \in (0, \numraters-1)$. Thus, there is a unique intersection point of the power curve and the classifier score whose coordinates are $(\fratereqS(c), \classifierescore)$.

Second, for the empirical analogue, we seek to show that, for large enough n, the empirical rater equivalence can be expressed as the inverse of the empirical power curve evaluated at the empirical classifier's score: 
\newcommand{\pcn}{\empowercurve_n}
$$\emfratereq(c, 1) = \pcn^{-1}(\emscore).$$
where $\pcn(X)$ is a shorthand for the empirical power curve function $\empowercurve_{\comb}(X,1)$
with $n$ items (rows) in $\allVecRV$. The domain of $\pcn$ is $(0,\numraters-1)$, since we can only simulate benchmark panels using the ratings in the columns of $\allVecRV$.

We first establish the monotonicity of $\pcn(x)$ for sufficiently large $n$. Because the theoretical power curve is strictly increasing, there is a smallest jump size between integer benchmark panel sizes:
$$
\delta^* = \min_{k_b=0,\dots,\numraters-2} \left(\powercurve_{\comb}(k_b+1) - \powercurve_{\comb}(k_b)\right) > 0
$$

Choose $\delta_1 = \delta^*/3$. For each integer $k_b$, by the convergence in probability of empirical power scores (Claim~\ref{claim:empiricalunbiased}), for sufficiently large $n$ the empirical power will be close to the theoretical with high probability:
$$
\forall \epsilon > 0, \exists N_{k_b} \text{ such that } n > N_{k_b} \implies P\left(|\pcn(k_b) - \powercurve_{\comb}(k_b)| \geq \delta_1\right) < \epsilon^{\numraters}
$$

Let $N^* = \max_{k_b} N_{k_b}$. Then for $n > N^*$, with probability at least $1-\epsilon^{\numraters}$:
$$
\begin{aligned}
\pcn(k_b+1) - \pcn(k_b) 
&= [\powercurve_{\comb}(k_b+1) + (\pcn(k_b+1) - \powercurve_{\comb}(k_b+1))] \\
&\quad - [\powercurve_{\comb}(k_b) + (\pcn(k_b) - \powercurve_{\comb}(k_b))] \\
&> \delta^* - 2\delta_1 = \delta^*/3 > 0
\end{aligned}
$$

Since $\pcn$ is piecewise linear between integer points, this implies that with probability at least $1-\epsilon$, $\pcn$ is strictly increasing on $(0, \numraters-1)$ for all $n > N^*$. 

% Therefore, for any $\epsilon > 0$, there exists $N^*$ such that for $n > N^*$:

% \[
% P\left(\pcn \text{ is strictly increasing}\right) \geq 1 - \epsilon
% \]

This establishes that $\pcn$ becomes invertible with probability approaching 1 as $n \to \infty$, allowing us to define, with probability approaching 1:
$$
\emfratereq(c,1) = \pcn^{-1}(\emscore)
$$

%\pr{The empirical classifier score is magically appearing here, without any bounds on it being close to the theoretical. Seems like a sleight of hand.}

Third, we apply Lemma~\ref{lem:inverseconverge}, which states sufficient conditions for a sequence of inverse functions to converge in probability. Notice that the above proof also implies that the inverse of the sequence of functions $\pcn$ has bounded Lipschitz constant at most $\frac{3}{\delta^*}$, with probability approaching 1. Therefore, the sequence of functions $\pcn$ satisfies the criteria for the lemma. With the convergence results of Claim~\ref{claim:empiricalunbiased} and Claim~\ref{claim:empirical_classifier_score_limit}, we have $\emfratereq(c,1)= \pcn^{-1}(\emscore)\xrightarrow{P}  {\powercurve_{\comb}}^{-1}(\classifierescore)=\fratereqS(c)$.

% Finally, we show the convergence result $\emfratereq(c,1)= \pcn^{-1}(\emscore)\xrightarrow{P} \fratereqS(c)= {\powercurve_{\comb}}^{-1}(\classifierescore)$ based two proved convergence results:
% \begin{enumerate}
%     \item Convergence of power scores: For each integer $k_b \leq \numraters-1$,
%     \[
%     \pcn(k_b) \xrightarrow{P} \powercurve_{\comb}(k_b)
%     \]
%     \item Convergence of the classifier score:
%     \[
%     \emscore \xrightarrow{P} \classifierescore
%     \]
% \end{enumerate}

\end{proof}

\begin{lemma}[Convergence of Inverse Functions]\label{lem:inverseconverge}
Let $\{f_n\}$ be a sequence of random continuous bijections on $[0,K]$ with continuous inverses $f_n^{-1}$. Assume:
\begin{enumerate}
    \item $f_n(x) \xrightarrow{P} f(x)$ for a continuous bijection $f$ with continuous inverse
    \item $\exists L > 0$ such that $\lim\limits_{n\to\infty} P\left(
       |f_n^{-1}(y) - f_n^{-1}(y')| \leq L|y-y'|
        \right) = 1$
    
    % \item $\exists L_1,L_2 > 0$ such that $\lim\limits_{n\to\infty} P\left(\begin{aligned}
    %     &|f_n(x) - f_n(x')| \leq L_1|x-x'| \\
    %     \text{and } &|f_n^{-1}(y) - f_n^{-1}(y')| \leq L_2|y-y'|
    %     \end{aligned}\right) = 1$
\end{enumerate}
Then:
\begin{enumerate}
    \item For fixed $y \in (f(0),f(K))$: $f_n^{-1}(y) \xrightarrow{P} f^{-1}(y)$
    \item If $Y_n \xrightarrow{P} y^*$, then $f_n^{-1}(Y_n) \xrightarrow{P} f^{-1}(y^*)$
\end{enumerate}
\end{lemma}

\begin{proof}%[Proof of Lemma\ref{lem:inverseconverge}]
\textbf{Part 1: Fixed y convergence.} Fix $\epsilon > 0$ and $y \in (f(0),f(K))$, we seek to show $P\left(|f_n^{-1}(y)-f^{-1}(y)|\geq \epsilon\right)$ converges to zero. 

Since $f^{-1}$ is continuous, $\exists \delta > 0$ such that for all $x\in[0,K]$:
$$
|f(x) - y| < \delta \implies |x - f^{-1}(y)| < \epsilon 
$$

Thus, $|f_n^{-1}(y)-f^{-1}(y)|\geq \epsilon \implies |f(f_n^{-1}(y)) - y| \geq \delta$

$f_n^{-1}(y)$ is legal as we pick sufficiently large $n$ such that $y\in {f_n(0),f_n(K)}$.

Since $f$ is continuous, there exists $\epsilon_1>0$,
$$
 |f(f_n^{-1}(y)) - y| \geq \delta \implies |f_n^{-1}(y) - f^{-1}(y)| \geq \epsilon_1
$$

Thus, $P\left(|f_n^{-1}(y)-f^{-1}(y)|\geq \epsilon\right)\leq P\left(|f_n^{-1}(y) - f^{-1}(y)| \geq \epsilon_1\right)$ which converges to zero by $f_n(x) \xrightarrow{P} f(x)$. This proves $f_n^{-1}(y) \xrightarrow{P} f^{-1}(y)$.

\textbf{Part 2: Random input convergence.} Now let $Y_n \xrightarrow{P} y^*$. For any $\epsilon > 0$:
$$
P\left(|f_n^{-1}(Y_n) - f^{-1}(y^*)| \geq \epsilon\right) \leq 
\underbrace{P\left(|f_n^{-1}(Y_n) - f_n^{-1}(y^*)| \geq \epsilon/2\right)}_{\text{(A)}} + 
\underbrace{P\left(|f_n^{-1}(y^*) - f^{-1}(y^*)| \geq \epsilon/2\right)}_{\text{(B)}}
$$
Term (A) vanishes by $Y_n \xrightarrow{P} y^*$ and the Lipschitz condition. Term (B) vanishes by Part 1. 
\end{proof}

\section{Algorithms and Proofs for Section~\ref{sec:abc}, Anonymous Bayesian Combiner}
\label{app:abc}

We can think of the $\abc$ in two parts, a learner and an executor. The learner, described in Algorithm~\ref{alg:jointdist}, calculates the probability of a random draw of $b_k$ ratings for a randomly drawn item in the dataset producing any realized label sequence $\benchmarkRatingPanel$. The executor, described in Algorithm~\ref{alg:abc_predictor}, uses the learned frequencies to predict a next rater's label for an item, conditional on some observed labels. We show that this converges to the Bayesian posterior for the next label, given the observed labels for the item. Intuitively, if two positive labels and a negative have been followed by a positive label on $90\%$ of other items, and the current item has received two positive and one negative label, $\abc$ will predict $.9$.

A naive approach for the learner component would simply look at the empirical frequency of an observed label sequence in the dataset. Intuitively, the probability of a specific label sequence, say (D, C, D, C, C), is just the fraction of items for which those are the first five labels. In the limit as the number of items grows, that fraction will approach the probability of that label sequence for a randomly drawn rater response state $\itemResponseStateRV$.

However, we find a way to greatly reduce the variance of the estimate, by extracting a little more information from the observed labels for each item. Rather than observing whether the observed labels exactly match the sequence of interest, we assess the probability of getting the sequence of interest from a random set of draws from the observed labels. This works because our anonymous assumption implies that the labels are independent draws from the same distribution and the order of observed labels is arbitrary. This produces, for each observed row in $\allVec$, the probability of producing the label sequence of interest. The average of those probabilities, across all rows in $\allVec$ (Algorithm~\ref{alg:jointdist}, line 6), is a lower variance estimate of the probability of getting the specified label sequence for a randomly selected item.

To provide intuitions, let's follow the subroutine \textsc{ProbabilityOneItem} for one hypothetical item in $\allVec$ that has two $D$ labels and seven $C$ labels. What is the probability of getting the specific sequence $(D, C, D, C, C)$ from a randomly chosen sequence of five  of those nine labels?

The number of possible sequences of $k=5$ raters is \[{9\choose 5}*5!\] because we can first chose the set of five raters and then choose their order. Of these, how many are $(D, C, D, C, C)$? 
We have to include three of the seven C raters; there are ${7\choose 3}$ ways to do that. Each selection can be placed in any order in positions 2, 4, and 5 in the sequence ($3!$ options). These can be combined with any assignment of both D raters (${2\choose 2}$ options) to positions 1 and 3 of the label sequence ($2!$ options).

\let\oldnl\nl% Store \nl in \oldnl
\newcommand{\nonl}{\renewcommand{\nl}{\let\nl\oldnl}}
\makeatother
\begin{algorithm}[t] 
\SetAlgoLined
\KwResult{An unbiased estimator of $\Pr[\benchmarkRatingPanelRV=\benchmarkRatingPanel]$}
    $\textit{sumprob}, \textit{numitems}$ = 0, 0 \;
    $\mathbf{bcounts} \gets \textsc{counts}(\benchmarkRatingPanel)$ \; 
    \For{$\mathbf{wrow}$$ \in \allVec$}{
        $\textit{sumprob} \gets \textit{sumprob} + \textsc{ProbabilityOneItem}(\mathbf{bcounts}, \textsc{counts}(\mathbf{wrow})) $ \;
        $numitems += 1$ \;
    }
    \Return $\frac{1}{numitems}tot$
    
 \caption{(Learner) $\textsc{LabelSeqProb}(\benchmarkRatingPanel, \allVec)$}\label{alg:jointdist}

    \setcounter{AlgoLine}{0}
    \SetProcNameSty{textsc}
    \SetArgSty{textnormal}
    \SetKwProg{myproc}{Procedure}{}{}
    \SetKwFunction{proc}{counts}
    \nonl\myproc{\proc{\upshape labels}}{
    \KwResult{Vector of counts: frequency of each label in labels}
    }
    
    \setcounter{AlgoLine}{0}
    \SetKwProg{myproc}{Procedure}{}{}
    \SetKwFunction{proc}{ProbabilityOneItem}
    \nonl\myproc{\textsc{\proc}$(\mathbf{bcounts}, \mathbf{wcounts})$}{

    \KwResult{Probability of selecting any specific sequence with $\mathbf{bcounts}$ from $\mathbf{wcounts}$ labels}
    $k_b \gets \sum_\ell{\mathbf{bcounts}@\ell}$ \;
    $k_w \gets \sum_\ell{\mathbf{wcounts}@\ell}$ \;
     \If{$\forall \ell \in \La$, $\mathbf{wcounts}@\ell \geq \mathbf{bcounts}@\ell $ }
     {\Return $ \frac{\prod_{\ell}{\mathbf{wcounts}@\ell \choose \mathbf{bcounts}@\ell} * \mathbf{(bcounts}@\ell)!}
{{k_w\choose k_b } * k_b!}$}
     \Else{\Return 0}
     }

\end{algorithm}

\begin{algorithm}[t]
\SetAlgoLined
\KwResult{Prediction $\vec{p}$, Bayesian posterior conditional on $\benchmarkRatingPanel$}
    $\forall \ell\in \La$, $\vec{p}@{\ell} = \frac{\textsc{LabelSeqProb}([\benchmarkRatingPanel + \ell],\allVec_{-i})}{\textsc{LabelSeqProb}(\benchmarkRatingPanel,\allVec_{-i})}$ \tcc*{\textsc{LabelSeqProb} from Algorithm~\ref{alg:jointdist}}
    \Return $\vec{p}$
 \caption{(Executor) $\textsc{AnonymousBayesianCombiner} 
 \abc(\benchmarkRatingPanel,\allVec_{-i})$}\label{alg:abc_predictor}
\end{algorithm}

Thus, the fraction of sequences that match is:
\begin{equation*}
\frac{({7\choose 3} * 3!)({2\choose 2} * 2!)}{{9\choose 5}*5!}
\end{equation*}

This explains the key formula on line 4 in \textsc{ProbabilityOneItem}:
\begin{equation*}
\frac{\prod_{\ell}{\mathbf{wcounts}@\ell \choose \mathbf{bcounts}@\ell} * \mathbf{(bcounts}@\ell)!}
{{k_w\choose k_b } * k_b!} 
\end{equation*}

Note that when Algorithm~\ref{alg:abc_predictor} asks for the learned $\textsc{LabelSeqProb}$ it redacts the current item from the data, providing $\allVec_{-i}$. This prevents the observed data from the current item from affecting the predicted probability of the label sequence for that item.

In the special case where the $\abc$ is asked to predict the first rating for an item (i.e., $b_k=0$), the denominator in Algorithm~\ref{alg:abc_predictor} will be 1 and the numerator will be the fraction of labels for all items (excluding the current item) that are labeled $\ell$. Thus, it estimates the base probability of each label by that label's overall frequency on the other items in the dataset.

\subsection{Optimality of ABC}

\abciscali*

\begin{proof}
We first show that  $\textsc{LabelSeqProb}(\benchmarkRatingPanel;\allVec)$ from Algorithm~\ref{alg:jointdist} is an unbiased estimator of $\Pr[\benchmarkRatingPanelRV=\benchmarkRatingPanel]$ and therefore converges to it almost surely as the number of items increases, due to the strong law of large numbers. 

To get an unbiased estimator of $\Pr[\benchmarkRatingPanelRV=\benchmarkRatingPanel]$, we could select a random sequence of raters for a random item and see if they happen to produce the specified label sequence. To reduce variance, we could run this process many times and take the mean. However, we can efficiently calculate what those means converge to in closed form.

First, note that the process chooses items uniformly at random. Thus, the overall probability is just the mean of the probability of the label sequence for each item. 
For each item, we enumerate all the possible sequences of raters and compute the fraction that match the specified label sequence. That gives the probability conditional on choosing that item. The probability of the label sequence for item $i$ is:
$$\frac{\prod_{\ell}{\mathbf{wcounts}@\ell \choose \mathbf{bcounts}@\ell} * \mathbf{(bcounts}@\ell)!}
{{k_w\choose k_b } * k_b!} $$

\textsc{ProbabilityOneItem} outputs exactly this quantity for one item. Algorithm~\ref{alg:abc_predictor} takes the mean across items. 

With the above result, almost surely 
$$
\begin{aligned}
\frac{\textsc{LabelSeqProb}([\benchmarkRatingPanel+\ell],\allVec_{-i})}{\textsc{LabelSeqProb}(\benchmarkRatingPanel,\allVec_{-i})} & \rightarrow \frac{\Pr[\benchmarkRatingPanelRV=\benchmarkRatingPanel,B_{i,k_b+1}=\ell]}{\Pr[\benchmarkRatingPanelRV=\benchmarkRatingPanel]} \nonumber \\
&=\Pr[B_{i,k_b+1}=\ell|\benchmarkRatingPanelRV=\benchmarkRatingPanel]
\end{aligned}
$$

Step 1 follows from the fact that the numerator and denominator each converge almost surely and the denominator's limit is non-zero because we assume that the distribution has full support. Step 2 follows from the definition of conditional probability. 

\end{proof}

\thmabsminratereq*

\begin{proof}
Since, in the limit, $\abc$ converges to the Bayesian posterior, which is the calibrated human benchmark classifier with respect to item states, Proposition \ref{prop:maxpower} implies that $\abc$ achieves maximal power scores,  
and Proposition~\ref{prop:minre} implies that it minimizes the rater equivalence. The results for the convergence of empirical to theoretical rater equivalence hold when the theoretical rater equivalence is in the range $(0, \numraters-1)$. If the theoretical rater equivalence for $\abc$ exceeds $\numraters-1$, then the theoretical rater equivalence for other combiners also exceeds $\numraters-1$. Consequently, the empirical rater equivalence computed for ABC and other combiners will all be the same in the limit. If the theoretical rater equivalence is 0 (a completely uninformative classifier), the expected empirical rater equivalence for the calibrated human benchmark classifier will also be 0, for any number of items, since any human raters will yield an expected score higher than the completely uninformative classifier. 
Since by definition, zero is the smallest possible rater equivalence, no other combiner can yield a lower expected empirical rater equivalence.
\end{proof}

\section{Proof of Running Times for Empirical Power Curve Computation}
\label{app:running_time}

To compute an empirical power curve with natural scoring functions--- such as accuracy, F1, Pearson correlation, or cross-entropy--- and simple combiners such as majority vote and frequency, there exists an implementation such that the total running time is $O(n k_w^2)$, where $n$ is the number of items and $k_w$ is the number of labels per item. The Anonymous Bayesian Combiner's implementation requires more time but with appropriate optimizations is still computationally tractable. 

\thmrunningtime*

\begin{proof}

To compute an empirical power score for a benchmark panel size $k_b$, $
\scoreFunction(\comb(\benchmarksVec),\textrm{maj}(\evaluationsVec))$ needs to be computed once for each partition $(\benchmarksVec, \evaluationsVec)$. Recall that when there are enough raters that there are more than 200 partitions, we randomly sample 200 of them. Thus, for running time analysis, there is a constant number of partitions.

All the common scoring functions such as agreement, F1, Pearson correlation, and cross-entropy, require $O(n)$ running time. Computing the majority vote of the evaluation panel takes time $O(k_e)$. The running time for the combiner depends on the combiner. 

For example, the running time of the frequency combiner, which just averages $k_b$ values  on each of $n$ items is $O(n k_b)$. Since we always have $k_b<k_w$ and $k_e<k_w$, the total running time for computing a power score with the frequency combiner is $O(n) + O(k_e) + O(n k_b) = O(n k_w)$. The total running time for computing the full power curve for all $k_b$ values, with the frequency combiner, is $O(n k_w^2)$.

The analysis for the Anonymous Bayesian Combiner, $\abc$, is more complicated. A naive implementation would involve redundant computations, so let us take some opportunities for optimization. First, notice that $\textsc{LabelSeqProb}(\mathbf{bcounts}, \allVec)$ depends only on the counts of labels, not the actual sequence $\benchmarkRatingPanel$. With $k_b$ labels, how many distinct label count vectors $\mathbf{bcounts}$ are possible? The maximum count for any single label is $k_b$ and counts for all but one label determine the count for the last one. Thus, the number of distinct $\mathbf{bcounts}$ is bounded by $k_b^{|\La|-1}$--- in the case of binary labels (positive and negative), this is just $k_b$.

Next, consider what computation needs to be done for any particular label count vector $\mathbf{bcounts}$. For different items, the redacted row in $\allVec_{-i}$ is different. However, if we pass in to \textsc{LabelSeqProb} the full matrix $\allVec$ and an indication of which row $i$ to redact, the algorithm can compute $\textit{sumprob}$ for all rows and then subtract out \textsc{ProbabilityOneItem} for the redacted item. The full sum only needs to be calculated once for each $\mathbf{bcounts}$. Moreover, if we memoize the results from calls to \textsc{ProbabilityOneItem} during the one-time calculation of the sum, the value to subtract for a redacted item can be looked up at constant cost rather than recalculated.

What is the running time for one invocation of \textsc{ProbabilityOneItem}? It is dominated by the product term on line 4. Each term in the product can be expressed as $\mathbf{wcounts}@\ell! /  \mathbf{bcounts}@\ell!$, which takes time bounded by $O(k_w) + O(k_b) = O(k_w)$. Since there are $|\La|$ terms in the product, the total running time for one invocation is $O(k_w |\La|)$. Because $\La$ is a linear factor and we assumed it to be constant, it is taken care of by the big-O notation, and we can write the total running time as $O(k_w)$.

For each label count vector $\mathbf{bcounts}$, then, the running time to compute the full $\textit{sumprob}$ for all rows is $O(n k_w)$. Since the number of distinct $\mathbf{bcounts}$ is bounded by $k_b^{|\La|-1} \leq k_w^{|\La|-1}$, the total running time for this is $O(n  k_w^{|\La|})$.

The computation of $\textsc{LabelSeqProb}$ for each item then requires two memoized lookups, one for the full $\textit{sumprob}$ and one for the \textsc{ProbabilityOneItem} to subtract out. The total for all items is $O(n)$, which is subsumed in the larger quantity above.

For a complete power curve, with all $k_b$ values, the total running time of the Anonymous Bayesian Combiner is $O(n k_w^{|\La|+1})$. With binary labels, this would be $O(n  k_w^3)$. In practice, the number of raters per item, $k_w$, is likely to be relatively small, typically 10 or fewer, so the cubic term is still computationally tractable.

\end{proof}

\section{Simulating Panels by Generating Synthetic Data}
\label{app:simulatingpanels}

The empirical approach of Section~\ref{sec:human_benchmark_implementation} estimated the expected score for a benchmark panel as the average score of simulated benchmark panels, scored against simulated evaluation panels, where both benchmark and evaluation panels are samples drawn from the empirical data. A tempting alternative approach is to use the empirical data to estimate parameters of the distribution $D_{\St}$ of rater response states for items, and then generate synthetic data from that distribution. However, this requires unreasonable assumptions about the shapes of distributions and the ability to infer parameters from the empirical data.

How would the synthetic data generation approach work? 
Given $D_{\St}$, generate a synthetic $\allVecRV$: make an independent draw of each synthetic item's response state $\itemResponseStateRV$ from $D_{\St}$ and then make draws from $\itemResponseStateRV$ for rater labels.
The synthetic $\allVecRV$ can have as many columns (rater labels) as desired. In the subjective utility model, a single column can be used as the synthetic evaluation labels. In the objective utility model, we can make the further mild assumption that the ground truth label $\itemGTRV$ is just the mode of $\itemResponseState$, meaning that rater labels are always biased toward the ground truth. Thus, we can use the mode of $S_i$ as the evaluation label for item $i$. 

This approach appears very attractive, because it makes it possible to compute power scores for all possible $k_b$ values.\footnote{In the unlikely case that the mathematical description of $D_{\St}$ were sufficiently simple, we could even skip generating the synthetic datasets and analytically compute the expected score for each panel size.} By contrast, the empirical approach using partitions of $\allVec$ can only compute power scores for panel sizes up to $\numraters - k_e$.

Unfortunately, the synthetic generation approach works poorly in practice because it relies on estimating $D_{\St}$ from the empirical data $\allVec$. If we assume $D_{\St}$ belongs to a parametric family $D_{\St}(\theta)$, we could estimate the parameter $\theta$ using maximum likelihood estimation (MLE):
\begin{equation*}
\hat{\theta} = \argmax_{\theta} \Pr_{\allVecRV \sim \itemResponseStateVecRV, \itemResponseStateVecRV \sim D_{\St}(\theta)}[\allVecRV = \allVec]
\end{equation*}
For example, we could assume that $D_{\St}$ is a mixture of two Beta distributions and estimate the five parameters, two for each Beta distribution and one for the mixture between them. 

However, this approach requires both a correct assumption about the parametric family, and enough empirical data in $\allVec$ to correctly estimate the parameters. 
When we tried to apply this approach, even using synthetic data with a known parametric family, we were disappointed to find that the estimated parameters could be far from recovering the true parameters $D_{\St}(\theta)$, leading to derived power curves (and consequently, rater equivalences) that deviate substantially from their theoretical counterparts. 

There is also one further challenge. In order to compute rater equivalences we need to compare the power curve to the classifier score. The synthetic data generation described so far gives us benchmark and evaluation raters, which is enough to estimate the power curve. If we wanted to generate classifier scores for the synthetic items as well, we would need to have the joint distribution between item states and classifier scores. That would require even more heroic assumptions about the parametric form, and even more data to estimate the parameters.

In the subjective utility model, there is a workaround for this second challenge. Rather than trying to generate classifier scores for synthetic items, we can use the the empirical mean classifier score for the observed items, with a single evaluation rater for each item. This is an unbiased estimator of the expected classifier score (see Claim \ref{claim:empirical_classifier_score_limit}). The power curve based on the synthetic $\allVecRV$ can then be compared to the empirical mean classifier score from the empirical $\allVec$ to yield rater equivalences.

That workaround is more problematic in the objective utility model. The reason is, as we have seen in Claim~\ref{claim:objnonunbiased}, there is no finite evaluation panel size that is a perfect substitute for evaluating against the ground truth. Even the majority vote of all of the columns of $\allVec$ will be a noisy proxy for the ground truth. The expected empirical mean classifier score against noisy ground truth will typically be lower than the theoretical classifier score. Thus, comparing the power scores based on the synthetic $\allVecRV$ to the empirical classifier scores is likely to yield an inflated rater equivalence value.
% Appendix: Objective Utility Model
% Content moved from main text Sections 4.1, 5.1, and 7.2.1

\section{Objective Utility Model}
\label{appdx:objective-utility}

% ============================================================
% Moved from humanbenchmark.tex: objective variants of formal definitions
% ============================================================

The main text defined both the subjective and objective utility models but presented results only for the subjective utility model. Here, we provide results for the objective utility model. We begin with the corresponding formal definitions for the objective utility model. Next, we consider the effect of evaluation panel size on estimating absolute utility of one classifier and relative utility of two classifiers. In neither case is there a clean theoretical result in favor of either larger or smaller panels. Finally, we briefly consider the implications for empirical estimation of rater equivalence.

\subsection{Formal Definitions Under the Objective Utility Model}
\label{appdx:obj-formal-defs}

The main text defines power score, classifier expected score, and rater equivalence for the subjective utility model. Here we provide the corresponding definitions for the objective utility model.

\paragraph{Power Score.}
\begin{align*}
& \powerscore_{\comb}^{GT}(k_b) = \\
& \quad\lim_{n\rightarrow \infty}\E_{\itemResponseStateVecRV, \itemGTVecRV}\E_{\benchmarksVecRV \sim \itemResponseStateVecRV}\mleft[\scoreFunction\mleft(\benchmarkPanelLabelVecRV, \itemGTVecRV\mright)\mright]
\end{align*}

\paragraph{Classifier Expected Score.}
\begin{align*}
\classifierescore^{GT} &= \lim_{n\rightarrow \infty}\E_{\itemFeaturesVecRV, \itemGTVecRV}[\scoreFunction^{GT}(c(\itemFeaturesVecRV), \itemGTVecRV)]
\end{align*}

\paragraph{Rater Equivalence.}
\begin{equation*}
\fratereqG(c) = \min \Big\{x|\powercurve_{\comb}^{GT}(x) \geq \classifierescore^{GT}\Big\}.
\end{equation*}

% ============================================================
% Moved from Section 4: Interpreting Evaluation Panel Scores
% (was Section 4.1 in interpretingevalpanel.tex)
% ============================================================

\subsection{Evaluation Panel Scores as Estimates of Absolute Utility}
\label{appdx:obj-absolute}

We now examine how well evaluation panel scores approximate the true objective utility. In cases where there exists an objective ground truth, evaluation labels are a noisy approximation for the ground truth labels. If the noise is substantial, the computed classifier score may be substantially different from what it would be if ground truth labels were available to use for evaluation.

\setlength{\abovedisplayskip}{0pt}
\setlength{\belowdisplayskip}{0pt}
\begin{restatable} [Objective Utility Model: Individual \& Panel Labels Fail]{claim}{claimobjnonunbiased} \label{claim:objnonunbiased}

With objective ground truth, $\forall k_e\geq 1$, there exists a scenario such that:
\begin{align*}
\E_{\itemFeaturesVecRV,\itemGTVecRV}&[\utilityFunction^{GT}(\itemClassifierOutputVecRV,\itemGTVecRV)] \neq \\
&\E_{\itemFeaturesVecRV, \itemResponseStateVecRV}\E_{\evaluationsVecRV \sim \itemResponseStateVecRV} [\scoreFunction(\itemClassifierOutputVecRV, \evaluationPanelLabelVecRV)]
\end{align*}
% $$
% \E_{\itemvecRV}\textsc{Utility}(\cIRV,\gIRV) \neq \E_{\itemvecRV} \E_{\eIRV} \textsc{Utility}(\cIRV,\eIRV^{(k)}).$$
\end{restatable}

% Claim~\ref{claim:objnonunbiased} states that a single panel label for each item does not yield an unbiased estimator of the overall objective utility.

% 

\begin{proof}
% Intuitively, when there is noise in individual ratings, the majority vote for a panel, $\evaluationPanelLabelRV$ will have less noise than individual ratings, but will still not always match exactly the ground truth label for the item. Thus, the consistency between the classifier's outputs and panel labels will be different, typically less than that between the classifier's outputs and the ground truth. 
% Scoring a classifier against a single rater or a panel will typically underestimate absolute utility compared to scoring against the unobserved ground truth. 

In a scenario where the classifier’s outputs perfectly match the ground truth labels, its agreement score is 1. However, when considering a finite number of noisy raters, the agreement score of the classifier will always be strictly less than 1. This is because, with a nonzero probability, the majority vote of the raters may differ from the ground truth label, resulting in a mismatch with the classifier’s output.

While this is sufficient for the formal proof, we also include several more realistic scenarios with imperfect classifiers and show a mismatch in scores for small values of k. 
% Even at rater agreement levels that are commonly accepted by researchers, we show a substantial discrepancy between the expected score of panel labels and the score against ground truth labels.
In practice, objective ground truth labels are not directly observable, so it is not possible to determine how discrepancies between rater labels and ground truth impact computed classifier scores. In our synthetic scenarios, however, we set the ground truth, and thus we can calculate a classifier's scores against ground truth labels. Moreover, in synthetic scenarios, we can also calculate the expected score for a classifier's labels against randomly selected human labels for panels of any size.

For all our scenarios, half of all items are ground truth positive. We simulate a very accurate classifier, to more clearly illustrate the impact of rater noise on scoring the classifier. The hard classifier we simulate outputs the correct label on 99\% of both ground truth positive and negative items. The soft classifier we simulate is a transformation of the hard classifier. When the hard classifier outputs a positive label, the soft classifier outputs the calibrated probability that a human rater will output a positive label, and similarly for hard classifier outputs of a negative label. Here, the calibration is done with respect to human rater labels, so the soft classifier's calibrated outputs are different depending on the distribution of human labels.

We consider multiple scenarios, with different levels of noise for evaluation raters, leading to different levels of inter-rater agreement. In the first, there is 20\% noise on positive items and 30\% noise on negative items. By that, we mean that on positive items, 80\% of human raters give a positive label and on negative items 70\% of human raters give a negative label. In this scenario, the Cohen's kappa measure of inter-rater agreement would be 0.25. This scenario is shown in the first row of Table~\ref{tab:evaluation-panel-size}; later scenarios decrease the rater noise, leading to higher inter-rater agreement.

% search for tab:jd-scenarios in the notebook for what generated this; last row is generated by hand using data from the notebook.
\begin{table}[t]
\centering
\caption{Synthetic Scenarios with Expected Scores Against Evaluation Panels of Different Sizes}
\label{tab:evaluation-panel-size}
\small{
\begin{tabular}{cc|c|cccc|cccc}
\toprule
 \makecell{noise\\on pos} &  \makecell{noise\\on neg} &  \makecell{IRR\\kappa} &  \makecell{AGR\\k=1} &  \makecell{AGR\\k=3} &  \makecell{AGR\\k=5} & \makecell{AGR\\k=$\infty$} &  \makecell{CE\\k=1} &  \makecell{CE\\k=3} &  \makecell{CE\\k=5} & \makecell{CE\\k=$\infty$} \\
\midrule
                     0.20 &                      0.30 &                   0.25 &                 0.74 &                 0.83 &                 0.88 &              \textbf{0.99} &                0.81 &                0.63 &                0.50 &             \textbf{0.08} \\
                     0.13 &                      0.23 &                   0.41 &                 0.81 &                 0.90 &                 0.94 &              \textbf{0.99} &                0.68 &                0.45 &                0.31 &             \textbf{0.08} \\
                     0.09 &                      0.19 &                   0.52 &                 0.85 &                 0.93 &                 0.96 &              \textbf{0.99} &                0.59 &                0.34 &                0.22 &             \textbf{0.08} \\
                     0.06 &                      0.16 &                   0.61 &                 0.88 &                 0.95 &                 0.97 &              \textbf{0.99} &                0.51 &                0.27 &                0.17 &             \textbf{0.08} \\
                     0.01 &                      0.09 &                   0.82 &                 0.94 &                 0.98 &                 0.99 &              \textbf{0.99} &                0.30 &                0.14 &                0.10 &             \textbf{0.08} \\
\bottomrule
\end{tabular}
}
\end{table}

In each scenario, we can compute expected classifier scores against human labels, which are reported in columns on the right side of Table~\ref{tab:evaluation-panel-size}. There are closed-form algebraic solutions, so we do not need to generate simulated samples. With large samples of items generated according to the joint distribution parameters described, the scores would approach the reported expected values. 
For the hard classifier, we report overall agreement (AGR), where higher scores are better. For the soft classifier, we report cross entropy (CE), where lower scores are better. The $k=1$ columns provide expected scores against a single rater's label (a panel of size 1). 

%SMI, otherwise known as information gain, is the difference between the cross-entropy of the classifier outputs with the evaluation labels and the cross-entropy of an uninformed classifier that always outputs the same probability, the base frequency of positive labels among human raters. With binary labels, there is a maximum of one bit of information gain, so the SMI score will always lie in the range $[0,1]$. DMI is TKTK. 
%\yk{remove dmi, and use cross entropy instead of smi?}

With the majority vote of evaluation rater panels, we effectively reduce the noise; panel labels will more often agree with the ground truth. Indeed, we can think of an infinite sized panel as always producing the ground truth label (as long as the noise is less than 50\% for single rater labels). The bolded $k=\infty$ columns provided expected scores against ground truth.

For example, in the first scenario, with the most noise, the classifier's expected agreement with a single rater's labels is 74\%. Even in the final scenario, with least noise, the measured agreement would still be only 0.94. Recall that, by construction, the classifier's expected agreement level with ground truth labels is 0.99, which is substantially higher. When using cross-entropy as the scoring function, the discrepancies are even more stark because the logarithm in the scoring function amplifies differences for low probability events.

Using panels of 3 or 5 reduces noise and brings the measured scores closer to the ideal scores against ground truth labels. For example, in the first, noisiest scenario, with a panel of $k=5$ raters, the agreement score would reach 0.88, but is still far below the true agreement score of 0.99 measured against ground truth labels.

For any of the scenarios represented in the table, with a finite panel size $k$ there will still be some noise in the panel labels. As $k \rightarrow \infty$, the expected score converges to the score against ground truth. 

% For any finite panel size, however, the expected score against a finite panel is lower than the expected score against ground truth labels, thus proving Claim~\ref{claim:objnonunbiased}.

\end{proof}

We can build intuitions for practical settings by examining inter-rater agreement levels.
It is impossible to know the joint distribution of human
labels with the ground truth for real data generating processes. However, we can measure the inter-rater agreement level for an empirical dataset. Matching it to one of the rows in Table~\ref{tab:evaluation-panel-size}, where the third column reports the expected kappa score, may give a rough indication of how misleading it will be to treat the score against a single rater, or a panel of three or five, as if it were the score against ground truth labels.  The indication is only rough, however, since the actual impact will also depend on how accurate the classifier is.

Table~\ref{tab:typicalIRRs} shows the inter-rater agreement metrics for several public datasets where more than one rater label was collected for each item. Kappa scores range from as low as 0.09 to as high as .94.

\begin{table}[h]
\centering
\caption{Inter Rater Reliability (IRR) scores for selected public datasets, measured by Cohen's $\kappa$~\citeyearpar{cohen1960coefficient}, Krippendorff's $\alpha$~\citeyearpar{krippendorff2018content}, and  Intra-class correlation (ICC)~\citet{shrout1979intraclass}. The datasets illustrate the range of agreement levels encountered in practice.}
\label{tab:typicalIRRs}
\small{
\begin{tabular}{rrrrl}
\toprule
    & \makecell{\# Raters \\per item} &\makecell{\# Distinct \\ Labels $|\mathcal{L}|$} & \makecell{Measure} & \makecell{IRR\\}\\ \midrule
    \citet{madigan2021towards} & 2 & 2 & $\kappa$ & =0.94  \\
    **\citet{glenski2018guessthekarma} & $\sim$50 & 2 & $\kappa$ & $\sim$0.60-0.75\\
    \citet{salas2022student} & 2 & 3  & $\kappa$ & $\sim$0.78\\
    \citet{coudray2018classification} & 3 & 2  & $\kappa$ & $\sim$0.67-0.82\\
    \citet{van2021systematic} & 2 & 2 & $\kappa$ & =0.85\\
    \citet{kiela2020hateful} & 5 & 3 & $\kappa$ & =0.68\\
    \citet{salminen2022creating} & 3 & 2 & $\kappa$ & =0.09\\
    **\citet{Wulczyn2017machina} & 10-20 & 5 & $\alpha$ & =0.45\\
    \citet{cheng2015antisocial} & 5 & 5 & $\alpha$ & $\sim$0.35-0.39\\
    **\citet{mitra2015credbank} & 30 & 5 & ICC & =0.77 \\
    \citet{wang2023document} & 2 & 6 & $\kappa$ & =0.86 \\
\bottomrule
\multicolumn{4}{l}{**Used in Case Study}\\
\end{tabular}}
\vspace{-0.5cm}
\end{table}

\subsection{Relative Evaluation: Bigger Panels Not Always Better}
\label{appdx:obj-relative}

Having shown that panel scores are unreliable estimates of absolute utility, we now ask whether they at least preserve the relative ordering of classifiers.
Intuitively, in the objective utility model larger evaluation panel sizes should improve our ordering of classifiers. With more raters, the probability of random errors affecting the majority vote decreases, making the proxy approach the ground truth. However, we show that this intuition does not always hold.
For both the cross-entropy and agreement scoring functions we provide example scenarios where scoring against a single-rater evaluation panel yields a more reliable ordering than scoring against the majority votes of a three-rater evaluation panel.

\subsubsection{Cross-Entropy Utility}

Consider, first, a scenario where utility is determined by the cross-entropy score:
\begin{equation*}
\utilityFunction^{GT}\mleft(\itemClassifierOutputVec, \itemGTVec\mright) = \textsc{CE}\mleft(\itemClassifierOutputVec, \itemGTVec\mright)
\end{equation*}

% Thus, the expected utility is just the cross-entropy between the random variables for the classifier output and ground truth:
% $$
% \E_{\itemFeaturesVecRV,\itemGTVecRV}\utilityFunction^{GT}(\itemClassifierOutputVecRV, \itemGTVecRV) = \textsc{CE}(\itemClassifierOutputVecRV, \itemGTVecRV)
% $$

% See simulations_notebook.ipynb, search for #Generate Info for Primary Counter-Example
\begin{table}[t]
\centering
\caption{A scenario with two classifiers with different error profiles. Classifier $c_1$ has better accuracy on positive items (96\%) but worse on negative items (80\%), while $c_2$ matches human rater accuracy on both. Half of all items are ground truth positive.}
\label{tab:two_classifiers}
\small{
\begin{tabular}{rc|c|c}
\toprule
& \multirow{2}{*}{\textbf{Classifier}} & \textbf{State 1} & \textbf{State 2} \\
& & $s_i = 0.85,\ g_i=\text{pos}$ & $s_i = 0.07,\ g_i=\text{neg}$ \\
\midrule
\multirow{4}{*}{\rotatebox{90}{\parbox{2cm}{\centering \small Joint\\distribution}}}
& \multirow{2}{*}{$c_1$} & 0.48 (Pos) & 0.10 (Pos) \\
&                         & 0.02 (Neg) & 0.40 (Neg) \\
\cmidrule(lr){2-4}
& \multirow{2}{*}{$c_2$} & 0.425 (Pos) & 0.035 (Pos) \\
&                         & 0.075 (Neg) & 0.465 (Neg) \\
\bottomrule
\end{tabular}
}
\end{table}

\begin{table}[t]
\centering
\caption{Cross-entropy scores of $c_1$ and $c_2$ against different evaluation panels, using calibrated probability outputs. Scoring against ground truth or a single rater ranks $c_2$ above $c_1$ (lower cross-entropy), but scoring against a three-rater panel reverses the ordering.}
\label{tab:calibrated_predictions}
\small{

\begin{tabular}{c|l|c|c|c}
\toprule
\textbf{Classifier} & \textbf{Evaluation} & \textbf{Calib. Pred.} & \textbf{Calib. Pred.} & \textbf{Cross-}\\
 & \textbf{Labels} &  \textbf{given c pos} & \textbf{given c neg} & \textbf{entropy} \\
\midrule
     $c_1$ &  Ground Truth &                0.8276 &                0.0476 &         0.5007 \\
     $c_2$ &  Ground Truth &                0.9239 &                0.1389 &         \textbf{0.4925} \\
\midrule
     $c_1$ & 3-Rater Panel &                0.7797 &                0.0581 &         \textbf{0.5755} \\
     $c_2$ & 3-Rater Panel &                0.8689 &                0.1425 &         0.5769 \\
\midrule
     $c_1$ &  Single Rater &                0.7155 &                0.1071 &         0.7060 \\
     $c_2$ &  Single Rater &                0.7907 &                0.1783 &         \textbf{0.7058} \\
\bottomrule
\end{tabular}

}
\end{table}

\begin{restatable}[Bigger is not Always Better with Cross-Entropy]{claim}{propacc}\label{prop:acc}

Let classifiers $c_1$ and $c_2$ be evaluated in terms of cross-entropy-based utility. There exists a scenario where  the expected score against individual labels $\evaluationRatingSingleRV$ yields the same ordering as the expected score against ground truth labels, but the expected score against panel labels $\evaluationPanelLabelRV$ with panel size three yields the opposite ordering of the two classifiers.
\end{restatable}

\begin{proof}

As a counter-example, consider the following scenario. 
Half the items have a ground truth positive label.
Human raters provide correct labels on 85\% of positive items and 93\% of negative items.  
Benchmark classifier $c_2$ makes errors with the same probabilities as human raters.
Classifier $c_1$ has better accuracy, 96\%, on positive items but worse, 80\%, on negative items. Table~\ref{tab:two_classifiers} summarizes the joint distribution of the classifiers' outputs and the rater response states more precisely.

To use the cross-entropy scoring function with these binary classifiers, we have them output decimal values that correspond to calibrated probability estimates for the labels they will be scored against. For example, when $c_1$ is scored against ground truth labels, it outputs 0.83 when it detects a positive and 0.05 when it detects a negative. To compute those, note that when $c_1$ is scored against ground truth, $\Pr[g_i = pos | c_1(x_i)=pos] = .48/(.48+.1) = .83$ and $\Pr[g_i = pos | c_1(x_i)=neg] = .02/(.4+.02) = .05$. 
Since the probability of a classifier positive for $c_1$ is $0.48+0.1=0.58$, its expected cross-entropy score, for a large dataset, is $-(0.58 * [.83 * \log .83 + .17*\log .17]  + 0.42 * [.05 * \log .05 + .95 * \log .95]) \approx .501$. These are reflected in the first row of Table~\ref{tab:calibrated_predictions}. 

Similar calculations for $c_2$, shown in the second row, yield an output of 0.924 when it detects a positive and 0.139 when it detects a negative, with an expected cross-entropy score of 0.492. Thus, if the two classifiers were both scored against the ground truth, the expected score is higher for $c_1$, meaning that the first classifier is slightly less informative about ground truth.  
% The final value in that row 
% Table~\ref{tab:calibrated_predictions} gives the calibrated predictions for each of the classifiers for ground truth labels, a single human rater's label, and the panel label for a three-person panel.

The same is true if both classifiers are scored against a single rater, as shown in the last two rows of the table: the first classifier is also slightly less informative about a single rater. However, as seen in the middle two rows of the table, $c_1$ is slightly more informative about the majority of a panel of three raters. 

\end{proof}

\begin{figure}[t]
\vspace{0.5em}  % adjust this value to your needs
\centering
\begin{tikzpicture}

\definecolor{darkgray176}{RGB}{176,176,176}
\definecolor{green}{RGB}{0,128,0}
\definecolor{lightgray204}{RGB}{204,204,204}
\definecolor{orange}{RGB}{255,165,0}

\sffamily

\begin{axis}[
legend style={font=\small,
	nodes={scale=0.7, transform shape},
	at={(0.0,1)},
	anchor=north west,
	draw=none,
	fill=none},
legend cell align={left},
every tick label/.append style={font=\scriptsize},
width = 3.0in, height = 2.0in,
xlabel={\scriptsize $\Pr[c_1(x_i) = \text{pos} | g_i = \text{pos})]$},
xmin=0.957005, xmax=0.967895,
xtick = {0.9592, 0.9603, 0.9639},
xticklabels = {.96, .961, .964},
xtick style={color=black},
ylabel style={yshift=-5pt},
xticklabel style={
    /pgf/number format/fixed,
    /pgf/number format/precision=3
},
ylabel={\scriptsize $\textrm{CE}(c_2) - \textrm{CE}(c_1)$},
ymin=-0.0146274446026598, ymax=0.0157007461000535,
ytick style={color=black},
yticklabel style={
            /pgf/number format/fixed,
            /pgf/number format/precision=3
        },
scaled y ticks=false
]
\addplot [semithick, orange]
table {%
0.9575 -0.00279486493386988
0.9576 -0.0026920253057468
0.9577 -0.00258913684400663
0.9578 -0.00248619949744988
0.9579 -0.00238321321478074
0.958 -0.00228017794460605
0.9581 -0.00217709363543628
0.9582 -0.00207396023568318
0.9583 -0.001970777693662
0.9584 -0.00186754595758926
0.9585 -0.00176426497558341
0.9586 -0.00166093469566403
0.9587 -0.00155755506575228
0.9588 -0.0014541260336704
0.9589 -0.0013506475471407
0.959 -0.00124711955378615
0.9591 -0.00114354200112932
0.9592 -0.00103991483659305
0.9593 -0.000936238007499479
0.9594 -0.000832511461070051
0.9595 -0.000728735144424675
0.9596 -0.000624909004582619
0.9597 -0.000521032988461345
0.9598 -0.00041710704287562
0.9599 -0.000313131114538512
0.96 -0.000209105150061228
0.9601 -0.000105029095950726
0.9602 -9.02898612098468e-07
0.9603 0.000103273495653422
0.9604 0.000207500140648409
0.9605 0.000311777090279353
0.9606 0.000416104398556716
0.9607 0.000520482119596211
0.9608 0.000624910307617632
0.9609 0.00072938901694658
0.961 0.00083391830201357
0.9611 0.000938498217355532
0.9612 0.00104312881761515
0.9613 0.00114781015754151
0.9614 0.00125254229199034
0.9615 0.00135732527592436
0.9616 0.00146215916441378
0.9617 0.00156704401263635
0.9618 0.00167197987587747
0.9619 0.00177696680953165
0.962 0.00188200486910067
0.9621 0.0019870941101961
0.9622 0.00209223458853902
0.9623 0.00219742635995829
0.9624 0.00230266948039465
0.9625 0.00240796400589743
0.9626 0.0025133099926275
0.9627 0.00261870749685622
0.9628 0.00272415657496505
0.9629 0.00282965728344869
0.963 0.00293520967891187
0.9631 0.00304081381807259
0.9632 0.00314646975776062
0.9633 0.00325217755491852
0.9634 0.00335793726660227
0.9635 0.00346374894998097
0.9636 0.0035696126623373
0.9637 0.00367552846106844
0.9638 0.00378149640368558
0.9639 0.00388751654781472
0.964 0.00399358895119711
0.9641 0.00409971367168915
0.9642 0.00420589076726308
0.9643 0.0043121202960078
0.9644 0.00441840231612739
0.9645 0.00452473688594374
0.9646 0.00463112406389582
0.9647 0.00473756390853997
0.9648 0.00484405647854941
0.9649 0.00495060183271706
0.965 0.00505720002995319
0.9651 0.00516385112928747
0.9652 0.00527055518986902
0.9653 0.00537731227096627
0.9654 0.00548412243196733
0.9655 0.00559098573238065
0.9656 0.00569790223183636
0.9657 0.005804871990084
0.9658 0.0059118950669958
0.9659 0.00601897152256531
0.966 0.00612610141690756
0.9661 0.00623328481026131
0.9662 0.00634052176298722
0.9663 0.00644781233556962
0.9664 0.00655515658861616
0.9665 0.00666255458285842
0.9666 0.00677000637915309
0.9667 0.0068775120384803
0.9668 0.00698507162194656
0.9669 0.00709268519078288
0.967 0.0072003528063469
0.9671 0.00730807453012144
0.9672 0.00741585042371712
0.9673 0.0075236805488712
0.9674 0.00763156496744799
};
\addlegendentry{Single rater}
\addplot [semithick, green]
table {%
0.9575 -0.00278519038371283
0.9576 -0.00261852431589937
0.9577 -0.00245174116017888
0.9578 -0.0022848406878625
0.9579 -0.00211782266941246
0.958 -0.001950686874436
0.9581 -0.00178343307168366
0.9582 -0.00161606102904066
0.9583 -0.00144857051352498
0.9584 -0.00128096129128136
0.9585 -0.00111323312757672
0.9586 -0.000945385786794317
0.9587 -0.000777419032430282
0.9588 -0.000609332627087844
0.9589 -0.000441126332471509
0.959 -0.000272799909383947
0.9591 -0.000104353117718503
0.9592 6.42142835441395e-05
0.9593 0.000232902536341784
0.9594 0.000401711883536215
0.9595 0.000570642568917701
0.9596 0.000739694837211702
0.9597 0.00090886893408243
0.9598 0.00107816510613978
0.9599 0.00124758360094368
0.96 0.00141712466701072
0.9601 0.00158678855381855
0.9602 0.00175657551181135
0.9603 0.00192648579240773
0.9604 0.00209651964800228
0.9605 0.002266677331976
0.9606 0.00243695909869734
0.9607 0.00260736520353289
0.9608 0.00277789590284794
0.9609 0.00294855145401679
0.961 0.00311933211542686
0.9611 0.00329023814648388
0.9612 0.00346126980761974
0.9613 0.00363242736029634
0.9614 0.00380371106701421
0.9615 0.00397512119131677
0.9616 0.00414665799779707
0.9617 0.00431832175210384
0.9618 0.00449011272094957
0.9619 0.0046620311721135
0.962 0.00483407737445074
0.9621 0.00500625159789864
0.9622 0.00517855411348145
0.9623 0.00535098519331811
0.9624 0.00552354511062914
0.9625 0.0056962341397428
0.9626 0.00586905255610204
0.9627 0.00604200063627164
0.9628 0.00621507865794324
0.9629 0.00638828689994431
0.963 0.00656162564224516
0.9631 0.00673509516596377
0.9632 0.00690869575337522
0.9633 0.00708242768791717
0.9634 0.00725629125419791
0.9635 0.00743028673800361
0.9636 0.00760441442630516
0.9637 0.00777867460726539
0.9638 0.00795306757024727
0.9639 0.00812759360582099
0.964 0.00830225300577081
0.9641 0.00847704606310451
0.9642 0.00865197307205856
0.9643 0.00882703432810772
0.9644 0.00900223012797191
0.9645 0.00917756076962556
0.9646 0.0093530265523028
0.9647 0.00952862777650831
0.9648 0.00970436474402347
0.9649 0.00988023775791513
0.965 0.0100562471225445
0.9651 0.0102323931435735
0.9652 0.0104086761279757
0.9653 0.0105850963840425
0.9654 0.0107616542213931
0.9655 0.0109383499509815
0.9656 0.0111151838851073
0.9657 0.0112921563374221
0.9658 0.0114692676229405
0.9659 0.0116465180580467
0.966 0.0118239079605049
0.9661 0.0120014376494685
0.9662 0.0121791074454878
0.9663 0.0123569176705212
0.9664 0.0125348686479418
0.9665 0.012712960702549
0.9666 0.0128911941605777
0.9667 0.0130695693497059
0.9668 0.0132480865990671
0.9669 0.0134267462392565
0.967 0.013605548602345
0.9671 0.0137844940218849
0.9672 0.0139635828329228
0.9673 0.0141428153720076
0.9674 0.0143221919772029
};
\addlegendentry{3-rater panel}
\addplot [semithick, blue]
table {%
0.9575 -0.0132488904798092
0.9576 -0.0130470354721757
0.9577 -0.0128450175967938
0.9578 -0.012642836452731
0.9579 -0.0124404916371484
0.958 -0.0122379827452878
0.9581 -0.0120353093704575
0.9582 -0.0118324711040189
0.9583 -0.0116294675353723
0.9584 -0.0114262982519426
0.9585 -0.0112229628391655
0.9586 -0.0110194608804726
0.9587 -0.0108157919572769
0.9588 -0.0106119556489594
0.9589 -0.0104079515328523
0.959 -0.0102037791842252
0.9591 -0.00999943817627019
0.9592 -0.00979492808008575
0.9593 -0.00959024846466261
0.9594 -0.00938539889686618
0.9595 -0.00918037894142376
0.9596 -0.00897518816090587
0.9597 -0.00876982611571142
0.9598 -0.00856429236405232
0.9599 -0.00835858646193582
0.96 -0.00815270796314871
0.9601 -0.00794665641924053
0.9602 -0.00774043137950692
0.9603 -0.00753403239097206
0.9604 -0.00732745899837228
0.9605 -0.00712071074413845
0.9606 -0.00691378716837787
0.9607 -0.00670668780885719
0.9608 -0.00649941220098493
0.9609 -0.00629195987779296
0.961 -0.0060843303699174
0.9611 -0.00587652320558152
0.9612 -0.00566853791057664
0.9613 -0.00546037400824351
0.9614 -0.00525203101945249
0.9615 -0.00504350846258594
0.9616 -0.00483480585351703
0.9617 -0.00462592270559137
0.9618 -0.00441685852960672
0.9619 -0.00420761283379323
0.962 -0.00399818512379269
0.9621 -0.00378857490263895
0.9622 -0.003578781670736
0.9623 -0.00336880492583946
0.9624 -0.00315864416303224
0.9625 -0.00294829887470627
0.9626 -0.00273776855053809
0.9627 -0.00252705267746967
0.9628 -0.00231615073968594
0.9629 -0.00210506221859053
0.963 -0.00189378659278527
0.9631 -0.00168232333804719
0.9632 -0.00147067192730477
0.9633 -0.0012588318306157
0.9634 -0.00104680251514266
0.9635 -0.000834583445129766
0.9636 -0.000622174081878812
0.9637 -0.000409573883725312
0.9638 -0.000196782306012167
0.9639 1.62011989325306e-05
0.964 0.000229377181821588
0.9641 0.000442746196435251
0.9642 0.0006563087996454
0.9643 0.000870065551441535
0.9644 0.00108401701495797
0.9645 0.00129816375649883
0.9646 0.0015125063455671
0.9647 0.00172704535488888
0.9648 0.00194178136044298
0.9649 0.00215671494148784
0.965 0.00237184668058954
0.9651 0.00258717716365053
0.9652 0.00280270697993745
0.9653 0.00301843672211111
0.9654 0.00323436698625557
0.9655 0.00345049837190747
0.9656 0.00366683148208635
0.9657 0.00388336692332458
0.9658 0.00410010530569882
0.9659 0.00431704724286086
0.966 0.00453419335206873
0.9661 0.00475154425421898
0.9662 0.004969100573878
0.9663 0.00518686293931614
0.9664 0.00540483198253927
0.9665 0.00562300833932261
0.9666 0.00584139264924488
0.9667 0.00605998555572151
0.9668 0.00627878770604129
0.9669 0.00649779975139775
0.967 0.00671702234692784
0.9671 0.00693645615174698
0.9672 0.00715610182898441
0.9673 0.00737596004582053
0.9674 0.00759603147352461
};
\addlegendentry{Ground Truth}
\addplot [semithick,  black]
table {%
0.957005 0
0.967895 0
};
% \addlegendentry{$\textmf{score}(c_1) = \textmf{score}(c_2)$}
\addplot [semithick, orange, dashed, forget plot]
table {%
0.9603 -0.0146274446026598
0.9603 0
};
\addplot [semithick, green, dashed, forget plot]
table {%
0.9592 -0.0146274446026598
0.9592 0
};
\addplot [semithick, blue, dashed, forget plot]
table {%
0.9639 -0.0146274446026598
0.9639 0
};
\end{axis}

\end{tikzpicture}
\caption{Cross-entropy ordering reversal between single-rater and three-rater evaluation as $\Pr[c_1(x_i)=\text{pos} \mid g_i=\text{pos}]$ varies. The reversal occurs only in the narrow range 0.96–0.961.}
\label{fig:reversal-fig}
\end{figure}

Holding the other parameters fixed, Figure~\ref{fig:reversal-fig} shows the difference in scores for two classifiers as $\Pr[(c_1(x_i)=\text{pos} | g_i=\text{pos}]$ increases. We can see that this reversal happens only for a relatively small range of parameter values: between about .959 and .961. For values below .959, scoring against either one rater or a panel of three yields the correct ordering that $c_2$ is better than $c_1$. For values between about 0.961 and .964, scoring against either one rater or a panel of three yields an incorrect ordering that $c_1$ is better. And above about .964, that becomes the correct ordering that $c_1$ really is better.
While the range of parameter values where the reversal occurs is small, there is no value of $\Pr[(c_1(x_i)=\text{pos} | g_i=\text{pos}]$ for which scoring against a panel of three gives the correct ordering while scoring against a single rater gives an incorrect ordering.

We acknowledge that in the counter-examples presented here, the score differences constituting the reversal are small in magnitude. However, the practical significance of any reversal depends on the stakes of the decision: even a small score difference can matter if it determines which classifier is deployed. Moreover, the analyst will not know the true parameters of the scenario and thus cannot determine whether they are in a region where reversals occur. A theoretical characterization of the conditions under which reversals arise---and their likely magnitude--- remains an open question.

\subsubsection{Agreement}

Suppose, instead, that utility is determined by raw agreement. Here, if there are only two item states, larger evaluation panels are better.\footnote{The proof is long and not very relevant, so we have omitted it.} However, if we consider slightly more complicated information structures, we again find a scenario where single rater evaluation panels are more reliable than three-person evaluation panels.

\begin{restatable}[Bigger is not Always Better with Agreement]{claim}{propagg}\label{prop:acc2}

Let two classifiers be evaluated in terms of agreement-based utility. With three item states there exists a scenario where the expected score against individual labels $\evaluationRatingSingleRV$ yields the same ordering as the expected score against ground truth labels, but the expected score against panel labels $\evaluationPanelLabelRV$ with panel size three yields the opposite ordering of the two classifiers.
\end{restatable}

\begin{proof}%[Proof of Proposition~\ref{prop:acc2}]
Consider a scenario with three rater response states, $0.96,0.1,0.54$. The ground truth is defined as the state's mode. Thus, these three states have ground truth pos, neg, and pos correspondingly. Classifier $c_3$ has better accuracy on states 1 and 3, while $c_4$ is better on state 2. Table~\ref{tab:agreementreversal_JD} describes the joint distributions between states and classifier outputs.

To compute $c_3$'s expected agreement against ground truth, we compute $0.18+0.24+0.32=0.74$. To compute $c_3$'s expected agreement against a single rater, we first compute the joint distribution matrix between $c_3$'s output and a single rater's label:
\begin{align*}
&\left(\begin{bmatrix}
0.18 & 0.16 & 0.32 \\
0.02 & 0.24 & 0.08 \\
\end{bmatrix}\right)
\left(\begin{bmatrix}
0.96 & 0.1 & 0.54\\
0.04 & 0.9 & 0.46\\
\end{bmatrix}\right)^{\top}\\
=& \begin{bmatrix}
0.3616 & 0.2984\\
0.0864 & 0.2536
\end{bmatrix}
\end{align*}

Thus, $c_3$'s expected agreement against a single rater is the trace of the above matrix, which is $0.6152$.

To compute $c_3$'s expected agreement against three raters' majority, we first compute the probability that three raters' majority is pos for these three states correspondingly and obtain $0.9953$, $0.028$, and $0.5599$. We then follow similar steps and obtain that $c_3$'s expected agreement against a three raters' majority $\approx 0.631$.

By analogous steps, we repeat the computation process for $c_4$. Results for both classifiers are summarized in Table~\ref{tab:agreementreversal}. Scoring against a panel of three raters gives expected scores that are closer to the scores against ground truth. However, scoring against a panel of three raters gives the wrong ordering of the two classifiers, while scoring against a single rater gives the correct ordering.

\end{proof}

\begin{table}[t]
\centering
\caption{A scenario with three rater response states and two classifiers. Classifier $c_3$ is more accurate overall but $c_4$ is better on state 2 (the negative items).}
\label{tab:agreementreversal_JD}
\small
\begin{tabular}{rc|c|c|c|c}
\toprule
& \multirow{2}{*}{\textbf{Classifier}} & \multirow{2}{*}{\textbf{Output}} & \textbf{State 1 } & \textbf{State 2} & \textbf{State 3 } \\
 & & & $s_i=0.96,\ g_i=\text{pos}$ & $s_i=0.1,\ g_i=\text{neg}$ & $s_i=0.54,\ g_i=\text{pos}$ \\
\midrule
\multirow{4}{*}{\rotatebox{90}{\parbox{2cm}{\centering \small Joint\\distribution}}} & \multirow{2}{*}{$c_3$}   & Pos & 0.18 & 0.16 & 0.32 \\
&   & Neg & 0.02 & 0.24 & 0.08 \\
\cmidrule(lr){2-6}
& \multirow{2}{*}{$c_4$} & Pos & 0.12 & 0.08 & 0.20 \\
&   & Neg & 0.08 & 0.32 & 0.20 \\
\bottomrule
\end{tabular}
\end{table}

\begin{table}[t]
\centering
\caption{Agreement against a single rater gives the same ordering as scoring against ground truth, but scoring against a panel of three raters yields a reversal.}
\label{tab:agreementreversal}
\small
\begin{tabular}{c|c|c|c|c}
\toprule
\textbf{Classifier} & \textbf{Evaluation Labels} & \multicolumn{3}{c}{\textbf{Expected Agreement}} \\
\midrule
\textbf{$c_3$} & \textbf{Ground Truth} & \multicolumn{3}{c}{\textbf{0.74}} \\
$c_4$ & Ground Truth & \multicolumn{3}{c}{0.64} \\
\midrule
$c_3$ & 3-Rater Panel & \multicolumn{3}{c}{0.6314} \\
\textbf{$c_4$} & \textbf{3-Rater Panel} & \multicolumn{3}{c}{\textbf{0.6331}} \\
\midrule
\textbf{$c_3$} & \textbf{Single Rater} & \multicolumn{3}{c}{\textbf{0.6152}} \\
$c_4$ & Single Rater & \multicolumn{3}{c}{0.6144} \\
\bottomrule
\end{tabular}
\end{table}

% ============================================================
% Moved from Section 7.2: Human Panels as Benchmarks: Practice
% (was Section 7.2.1 in humanbenchmark.tex)
% ============================================================

\subsection{Interpreting Empirical Rater Equivalence}
\label{appdx:obj-empirical-req}

The previous results concern theoretical expected scores. We now consider what happens in practice when rater equivalence is estimated from empirical data.
In the objective utility model, the empirical power scores and classifier scores are not reliable approximations for the corresponding theoretical values, for the reason explored in Section~\ref{appdx:obj-absolute}: the majority vote of any finite evaluation panel is not a reliable proxy for the ground truth. Moreover, for the reasons explored in Section~\ref{appdx:obj-relative}, a finite evaluation panel cannot provide a reliable rank ordering of how two classifiers would score against the ground truth. That includes comparisons between the score of a classifier and a benchmark panel. Thus, the empirical power score may not reveal what the power score would be if it were possible to evaluate both the classifier outputs and benchmark panel outputs against the unknown ground truth. That said, it still provides some intuition about the quality of a classifier to report the empirical power score even in settings that we think of as having objective utility.

% For the objective utility model we found in section~\ref{sec:relative-ranking-with-noisy-signals} that either smaller or larger values for $k_e$ may yield better estimates of expected utility. Thus,
Since no panel size provides a guarantee about how the empirical rater equivalence matches the theoretical rater equivalence, it may be reasonable to set $k_e=1$. That allows for computing the power scores for larger $k_b$ values than would be possible if more of the available rating labels were allocated to the evaluation matrix $\evaluationsVec$.

\end{appendix}

%%%%%%%%%%%%%%%%%
\end{document}